\documentclass[11pt,a4paper,oneside]{article}


\usepackage{geometry}
\geometry{top=2cm,bottom=2cm,left=3cm,right=3cm,marginparwidth=1.75cm}
 
\usepackage[OT1]{fontenc}

\usepackage{titlesec}

\setcounter{secnumdepth}{4}

\titleformat{\subsubsubsection}
{\normalfont\normalsize\bfseries}{\theparagraph}{1em}{}
\titlespacing*{\subsubsubsection}
{0pt}{3.25ex plus 1ex minus .2ex}{1.5ex plus .2ex}

\usepackage[english]{babel}

\usepackage[utf8]{inputenc}


\usepackage{amsmath,amssymb,amsfonts,mathrsfs}
\allowdisplaybreaks

\usepackage[amsmath,thmmarks]{ntheorem}

\usepackage{graphicx}

\usepackage{soul}

\usepackage{pdfpages}

\usepackage[preprint]{jmlr2e}



\usepackage{varioref}


\usepackage{datetime}


\usepackage{mathtools}
\mathtoolsset{showonlyrefs}  

\usepackage{url}



\usepackage[h]{esvect}

\usepackage{array}





\usepackage{listings}
\lstset{language=TeX,basicstyle={\normalfont\ttfamily}}

\usepackage{microtype}

\usepackage{booktabs}

\usepackage{algorithm, algorithmic}

\usepackage{natbib}

\usepackage{mdframed}

\usepackage[font=small, labelfont=bf]{caption}
\usepackage[labelfont=rm]{subcaption}

\usepackage{tikz}
\usetikzlibrary{bayesnet}
\usepackage{subcaption}
\usepackage{footnote}
\usepackage{bbm}
\usepackage{accents}
\usepackage{ulem}






\newtheorem{condition}{Condition}
\renewtheorem{remark}{Remark}

\theoremstyle{nonumberplain}
\theorembodyfont{\normalfont}
\theoremsymbol{\ensuremath{\square}}


\newcommand{\bszero}{\boldsymbol{0}}
\newcommand{\bsone}{\boldsymbol{1}}

\newcommand{\bse}{\boldsymbol{e}}

\newcommand{\bsg}{\boldsymbol{g}}
\newcommand{\bsp}{\boldsymbol{p}}
\newcommand{\bsw}{\boldsymbol{w}}

\newcommand{\bsbeta}{\boldsymbol{\beta}}

\newcommand{\bsphi}{\boldsymbol{\phi}}
\newcommand{\bspsi}{\boldsymbol{\psi}}


\newcommand{\EE}{\mathbb{E}}

\newcommand{\PP}{\mathbb{P}}

\newcommand{\RR}{\mathbb{R}}
\newcommand{\VV}{\mathbb{V}}


\newcommand{\cB}{\mathcal{B}}

\newcommand{\cE}{\mathcal{E}}
\newcommand{\cF}{\mathcal{F}}
\newcommand{\cG}{\mathcal{G}}
\newcommand{\cH}{\mathcal{H}}

\newcommand{\cN}{\mathcal{N}}

\newcommand{\cT}{\mathcal{T}}
\newcommand{\cU}{\mathcal{U}}
\newcommand{\cW}{\mathcal{W}}
\newcommand{\cX}{\mathcal{X}}
\newcommand{\cY}{\mathcal{Y}}
\newcommand{\cZ}{\mathcal{Z}}

\newcommand{\obs}{\mathrm{obs}}
\newcommand{\base}{\mathrm{base}}
\newcommand{\rbox}{\mathrm{box}}

\newcommand{\textinf}{\mathrm{inf}}
\newcommand{\textsup}{\mathrm{sup}}
\newcommand{\ZSB}{\mathrm{ZSB}}
\newcommand{\CMC}{\mathrm{CMC}}
\newcommand{\KCMC}{\mathrm{KCMC}}
\newcommand{\KPCA}{\mathrm{KPCA}}
\newcommand{\IPW}{\mathrm{IPW}}

\newcommand{\rd}{\, \mathrm{d}}
\newcommand{\pto}{{\overset{p.}{\to}}}
\newcommand{\dto}{{\overset{d.}{\to}}}
\DeclareMathOperator*{\plim}{plim}

\newcommand{\ubar}[1]{\underaccent{\bar}{#1}}
\newcommand{\bbmone}{\mathbbm{1}}
\newcommand{\indep}{\perp \!\!\! \perp}



\renewcommand{\epsilon}{\ensuremath\varepsilon}


\renewcommand{\phi}{\ensuremath{\varphi}}

\DeclareMathOperator*{\sumint}{
\mathchoice
  {\ooalign{$\displaystyle\sum$\cr\hidewidth$\displaystyle\int$\hidewidth\cr}}
  {\ooalign{\raisebox{.14\height}{\scalebox{.7}{$\textstyle\sum$}}\cr\hidewidth$\textstyle\int$\hidewidth\cr}}
  {\ooalign{\raisebox{.2\height}{\scalebox{.6}{$\scriptstyle\sum$}}\cr$\scriptstyle\int$\cr}}
  {\ooalign{\raisebox{.2\height}{\scalebox{.6}{$\scriptstyle\sum$}}\cr$\scriptstyle\int$\cr}}
}

\usepackage[colorinlistoftodos,bordercolor=orange,backgroundcolor=orange!20,linecolor=orange,textsize=scriptsize]{todonotes}

\newcommand{\ki}[1]{\todo[inline]{\textbf{KI: }#1}}
\setlength {\marginparwidth }{2cm}


\usepackage{lastpage}
\jmlrheading{24}{2024}{1-\pageref{LastPage}}{11/23; Revised 1/26}{0/0}{00-0000}{Kei Ishikawa, Niao He and Takafumi Kanamori}


\ShortHeadings{Convex Framework for Confounding Robust Inference}{Ishikawa, He and Kanamori}
\firstpageno{1}



\title{A Convex Framework for Confounding Robust Inference}
\author{
    \name \hspace{-0.65em} Kei Ishikawa \email k.stoneriv@gmail.com\\
    \addr Department of Mathematics \\
    ETH Zurich
    \thanks{Part of this work was done while the author was affiliated with Tokyo Institute of Technology (currently Science Tokyo).}
    \\
    Rämistrasse 101, 8092 Zürich, Switzerland
    \AND
    \name Niao He \email niao.he@inf.ethz.ch\\
    \addr Department of Computer Science\\
    ETH Zurich\\
    Universitätstrasse 6, 8092 Zürich, Switzerland
    \AND
    \name Takafumi Kanamori \email kanamori@comp.isct.ac.jp\\
    \addr School of Computing\\
    Institute of Science Tokyo\\
    2-12-1 Okayama, Meguro-ku, Tokyo, Japan\\
    RIKEN AIP\\
    1-4-1 Nihonbashi, Chuo-ku, Tokyo, Japan
}

\editor{TBD}

\begin{document}

\maketitle

\begin{abstract}
We study policy evaluation of offline contextual bandits subject to unobserved confounders. 
Sensitivity analysis methods are commonly used to estimate the policy value under the worst-case confounding scenario within a given uncertainty set. 
However, existing work often resorts to some coarse relaxation of the uncertainty set for the sake of tractability, leading to overly conservative estimation of the policy value. 
In this paper, we propose a general estimator that provides a sharp lower bound of the policy value using convex programming.
The generality of our estimator enables various extensions such as sensitivity analysis using f-divergence, model selection with cross validation and information criterion, and robust policy learning with the sharp lower bound.
Furthermore, our estimation method can be reformulated as an empirical risk minimization problem thanks to the strong duality, which enables us to provide strong theoretical guarantees of the proposed estimator using M-estimation techniques.
\end{abstract}

\begin{keywords}
  convex optimization, causality, M-estimation, robust optimization, kernel methods.
\end{keywords}

\section{Introduction}\label{chap:intro}


The offline contextual bandit, a simple but powerful model for decision-making, finds applications across a spectrum of domains from personalized medical treatment to recommendations and advertisements on online platforms. 
For the policy evaluation in this setting, the inverse probability weighting (IPW) method \citep{hirano2001estimation, hirano2003efficient} and its variants are commonly used.
These methods rely on the so-called unconfoundedness assumption, presupposing full observability of pertinent variables to exclude any influence from unobserved factors on the action selection and the resulting reward \citep{rubin1974estimating}. 
However, in practice, this unconfoundedness assumption can easily be violated, since often times unobserved confounders are not recorded in the logged data. 
For instance, consider personalized medicine, where private information such as patients' financial status is usually not recorded in medical records yet may impact both the choice of treatments and their efficacy. 
An expensive treatment option may be only available for wealthy patients, and higher financial status may also have a correlation with their health status.
In such situations, if the treatment effect of the expensive option depends on the health status of the patients, we can potentially overestimate (or underestimate) the true treatment effect due to this hidden underlying correlation.

To address this, sensitivity analysis offers a solution by establishing a worst-case lower bound on the policy value.  This involves defining an uncertainty set encompassing potential confounding situations and evaluating the policy value under the most adverse conditions within this set. Such a worst-case lower bound allows for robust decision-making in the presence of confounding.  A wide range of sensitivity models has been  studied over the years  \citep{rosenbaum2002overt, tan2006distributional, rosenbaum2010design, liu2013introduction}.

The main focus of our paper is the marginal sensitivity model by \citet{tan2006distributional} and its extensions, which assumes similarity between the observational probability and the true probability of the action given the context.
Recently, \citet{zhao2019sensitivity} introduced an elegant algorithm for Tan's marginal sensitivity analysis using the linear fractional programming, which revitalized the study of this model. They also used the bootstrap method to provide an interpretable estimate of the lower bound.
This approach was extended to policy learning in \citet{kallus2018confounding, kallus2021minimax}, where they established a statistical guarantee for confounding robust policy learning through lower bound maximization. 
Notably, their theoretical analysis included a uniform bound for the estimation error of a class of policies, unlike previous works that only focused on the evaluation of a single policy.

\subsection{Motivation}

These sensitivity analysis methods rely on algorithms using linear programming with relaxed constraints for their tractability, leading to overly conservative lower bound estimates of the policy value.  Such estimates are not necessarily guaranteed to be a consistent estimator of the true lower bound.
To obtain a sharp lower bound,  conditional moment constraints must be leveraged, which are infinite dimensional linear constraints. Recently, \citet{dorn2022sharp} analyzed these constraints and characterized the sharp lower bound of Tan's marginal sensitivity model with a conditional quantile function of the reward distribution. 
Using this characterization, they proposed the first tractable algorithm to estimate the sharp lower bound that converges to the true lower bound of the policy value. 

In this paper, we revisit the sharp estimation problem from a novel perspective. Instead of relying on conditional quantile functions, 
we propose a tractable approximation to the infinite dimensional conditional moment constraints, enabling the sharp lower bound estimation to be expressed directly as a convex optimization problem. Unlike the previous two-stage optimization method introduced by \citet{dorn2022sharp}, our formulation is more general and naturally includes their sharp estimator as a special case.

\subsection{Contributions}

The generality of our estimator expands the horizon of the sensitivity analysis, both in terms of theory and applications.

Our major theoretical contributions stem from the reformulation of the lower bound estimation as an empirical risk minimization using the strong duality.
This enables us to leverage the standard proof techniques for M-estimation to derive consistency and asymptotics of our estimator, simplifying our proofs compared to the bespoke proofs in previous works \citep{kallus2018confounding, kallus2021minimax}. In particular, we are the first to prove a consistent policy learning guarantee for the sharp lower bound, which parallels the seminal result in \citet{kallus2018confounding, kallus2021minimax} for the classic unsharp bound.

In terms of applications, our convex formulation is less restrictive compared to the two-stage approach of the previous sharp estimators \citep{dorn2022sharp, dorn2021doubly} and offers possibilities for various extensions, such as generalization of the classic marginal sensitivity model \citep{tan2006distributional} using f-divergence, policy learning with the sharp lower bound, and model selection using the cross validation or the generalized information criterion \citep{konishi2008information}.
Furthermore, our estimator can naturally handle both discrete and continuous treatment, setting it apart from conventional unsharp estimators \citep{tan2006distributional, zhao2019sensitivity, kallus2018confounding}, which are unsuitable for cases involving continuous treatment.

Lastly, we highlight the difference between this paper from an earlier version by the authors \citep{ishikawa2023kernel}.
All the analyses on asymptotic normality and its applications including hypothesis testing and model selection are the novel results of this paper. This paper also provides a more comprehensive study of the consistency and specification bias, by providing the consistent policy learning guarantee for a general Vapnik-Chervonenkis policy class with the sharp lower bound for the first time, and by characterizing the magnitude of the specification bias when the kernel cannot be chosen optimally to achieve zero bias.

\subsection{Related works}

The sensitivity analysis of average treatment effect using quantile regression by \citet{dorn2022sharp,dorn2021doubly} was the first approach that consistently recover the true lower bound under Tan’s marginal sensitivity model, significantly sharpening bounds compared to traditional methods.
Subsequently, this approach was also applied to the conditional average treatment effects \citep{oprescu2023b} and individual treatment effect \citet{jin2022sensitivity}.
In parallel, confounding-robust policy evaluation techniques have evolved from evaluation to robust policy learning \citep{kallus2018confounding,kallus2021minimax} and to sequential decision-making settings \citep{kallus2020confounding}.
Recently, the sharp bound introduced by \citet{dorn2022sharp} has also been effectively adapted for sequential decision-making by \citet{bruns2023robust,bennett2024efficient}. However, extending this sharp bound approach specifically to policy learning has remained unexplored, and our paper is the first to address this gap.

In parallel, a growing literature has studied causal inference using modern machine learning methods, including instrumental variable estimation \citep{hartford2017deep}, heterogeneous and individual treatment effect estimation \citep{wager2018estimation,shalit2017estimating,louizos2017causal}, and debiased learning frameworks that combine machine learning with classical causal inference \citep{chernozhukov2018double}. Kernel-based methods have also been widely used in causal inference, particularly for testing conditional independence and representing nonparametric conditional relationships \citep{gretton2005measuring,zhang2012kernel,peters2017elements}.

Our convex reformulation of the infinite dimensional constraints is inspired by kernel methods \citep{scholkopf1997kernel}.
The most relevant to our work is by \citet{kremer2022functional}, who used kernel methods to represent conditional moment restrictions and estimate model parameters.
While their work focused on the dual formulation, we solve the primal but still exploit the dual in our theoretical analysis.
\citet{muandet2020kernel} also studied conditional moment restrictions, introducing a kernel-based test statistic structurally similar to ours but for hypothesis testing rather than uncertainty quantification.
More broadly, the idea of using the kernel methods for constraints has been explored in areas such as fairness \citep{perez2017fair}, distributional robustness \citep{staib2019distributionally, zhu2020worst}, and shape-constrained regression \citep{aubin2020hard}.
The kernel methods have also seen growing use in causal inference, including treatment effect estimation \citep{singh2024kernel}, instrumental variables \citep{singh2019kernel, muandet2020dual, zhang2023instrumental}, negative controls \citep{singh2020kernel, kallus2021causal, mastouri2021proximal}, and policy evaluation \citep{kallus2018balanced}.

\subsection{Organization of the paper}
The remainder of the paper is organized as follows:
In Section \ref{chap:background}, we formally introduce the problem we study and propose a new sharp lower bound estimator of policy value.
Section \ref{chap:theoretical_analysis} studies the theoretical properties of the proposed estimator through reformulation of our estimation method via strong duality.
In Section \ref{chap:experiments}, we present our numerical experiments, and Section \ref{chap:conclusion} concludes the paper.

\section{Problem Settings and Proposed Method}\label{chap:background}


In this section, we start by introducing the formal definitions for confounded offline contextual bandits. Then, we discuss a sensitivity analysis model for its policy evaluation. Since the original formulation of this model requires an intractable optimization, we make a series of relaxations to motivate our proposed estimator.

Given the abundance of notations introduced herein, we offer a list of common notations used throughout the paper in Appendix \ref{app:notations}.

\subsection{Confounded offline contextual bandits}

Here, we formally define the confounded offline contextual bandits  
using the graphical models \citep{koller2009probabilistic, pearl2009causality}.
Though a potential outcome framework \citep{rubin2005causal} is a popular alternative, we employed the graphical models as we find it easier to formulate our general modeling assumptions that accommodate both discrete or continuous action space.

\begin{figure}[t]
     \centering
     \begin{subfigure}[b]{0.45\textwidth}
     \centering
         \begin{tikzpicture}
\node[obs]  (x) {$X$};
\node[obs, below=0.3 of x, xshift=1.3cm]  (t) {$T$};
\node[obs, above=0.3 of t, xshift=1.3cm]  (y) {$Y$};
\edge {x} {t, y} ; %
\edge {t} {y} ; %
\end{tikzpicture}
         \caption{unconfounded case}
         \label{fig:unconfounded_bayes_net}
     \end{subfigure}
     \begin{subfigure}[b]{0.45\textwidth}
         \centering
         \begin{tikzpicture}
\node[obs]  (x) {$X$};
\node[latent, above=0.3 of x, xshift=1.3cm]  (u) {$U$};
\node[obs, below=0.3 of x, xshift=1.3cm]  (t) {$T$};
\node[obs, below=0.3 of u, xshift=1.3cm]  (y) {$Y$};
\edge[-] {x} {u}
\edge {x} {t, y} ; %
\edge {u} {t, y} ; %
\edge {t} {y} ; %
\end{tikzpicture}
         \caption{confounded case}
         \label{fig:confounded_bayes_net}
     \end{subfigure}
     \caption{Graphical model of unconfounded and confounded contextual bandit}
\end{figure}
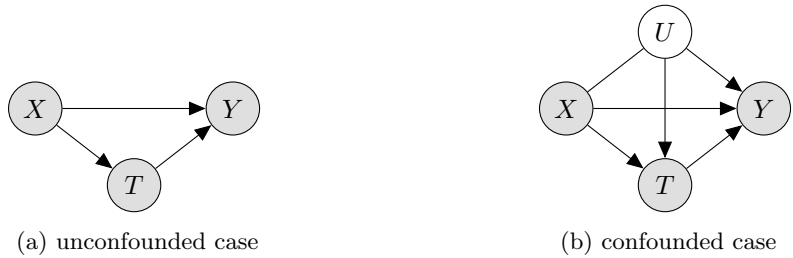

The confounded offline contextual bandits are an extension of the standard offline contextual bandits that have an additional unobserved confounding variable.
Let $\cX$ and $\cT$ be the context space and the action space.
The model can be described as the probabilistic model of the four variables: context $X\in\cX$, confounder $U\in\cU$, action (or treatment) $T\in\cT$, and reward (or outcome) $Y\in\RR$.
\footnote{Though the outcome and the reward are not always the same in general, we assume them to be the same for simplicity.}
For some unknown base policy $\pi_\base(t|x, u)$, we obtain an offline dataset of variables $Y$, $T$, and $X$ from the following data generating process: 
Unlike the other variables confounder $U$ is unobservable, and that introduces a confounding effect in the observable data.
In confounded offline contextual bandits, the data are generated according to the following model:
\begin{align}
\begin{array}{ll}
   X, U &\sim p(x, u), \\
   T|X, U &\sim \pi_\base(t|X, U),\\
   Y|T, X, U &\sim p(y|T, X, U),
\end{array}\label{eq:data_gen_base_confounded}
\end{align}
but only $Y$, $T$, and $X$ are observable and policy $\pi_\base(t|x, u)$ is unknown.
This corresponds to the graphical model in Figure \ref{fig:confounded_bayes_net}, where the existence of $U$ is the clear difference from the unconfounded graphical model in Figure \ref{fig:unconfounded_bayes_net}.
\footnote{Alternatively, we could consider the unconfounded case as the situation where confounder $U$ is constant.}

To better understand variables, let us remember the earlier example of personalized medical treatment.
The observational data $\{(Y_i, T_i, X_i)\}_{i=1}^n$ corresponds to medical records. They include basic information about patients such as age, sex, and weight, which corresponds to context $X$.
Additionally, $T$ and $Y$ are the medical treatment taken by the patients and its outcome.
However, some information such as the financial status of the patients is not included in them.
Yet, such information can influence the choice of treatment and the effectiveness of the treatment in either direct or indirect manners. Such a variable corresponds to confounder $U$ in our model.

Now, we consider the offline evaluation of the policy from the observational data.
Given new policy $\pi(t|x)$, we are interested in estimating the expectation of reward $Y$ under the following probabilistic model:
\begin{align}
\begin{array}{ll}
   X, U &\sim p(x, u), \\
   T|X, U &\sim \pi(t|X),\\
   Y|T, X, U &\sim p(y|T, X, U).
\end{array}\label{eq:data_gen_eval_confounded}
\end{align}
Here, we only consider an observable policy $\pi(t|x)$, because it is trivially impossible to evaluate a policy that depends on unobserved variable $U$ only using the offline data.

In confounded contextual bandits, offline evaluation of the above observable policy is still impossible without any further assumptions.
To explain the reason, let's consider change of expectations
\begin{align}
V(\pi)
= \EE_{T\sim\pi(\cdot|X)}\left[ Y \right]
= \EE_{T\sim \pi_\base(\cdot|X, U)}\left[ \left(\frac{\pi(T|X)}{\pi_\base(T|X, U)}\right) Y \right]
\label{eq:ipw_confounded},
\end{align}
where $\EE_{T\sim \pi_\base(\cdot|X, U)}[f(Y, T, X, U)]$ and $\EE_{T\sim\pi(\cdot|X)}[f(Y, T, X, U)]$ correspond to the expectations of $f(Y, T, X, U)$ under the generative processes \eqref{eq:data_gen_base_confounded} and \eqref{eq:data_gen_eval_confounded}.
Compared to the unconfounded case, the inverse probability in \eqref{eq:ipw_confounded} has a dependence on unobserved variable $U$.
This makes it impossible to construct a consistent estimator unlike the unconfounded case, where the inverse probability weighting estimator is consistent.
As the observable variables are only $Y$, $T$, and $X$, any valid estimator must depend only on them.
Ideally, we would like to know conditional expectation of the inverse probability $w(y, t, x):= \EE_{T\sim\pi_\base(\cdot|X, U)}\left[\frac{1}{\pi_\base(T|X, U)}|Y=y, T=t, X=x\right]$.
Then, we can construct a valid unbiased and consistent estimator $\hat V_w := \frac{1}{n}\sum_i w(Y_i, T_i, X_i) \pi(T_i|X_i) Y_i$, because 
\begin{equation}
    V(\pi) = 
    \EE_{T\sim\pi_\base(\cdot|X, U)}\left[
        \EE_{T\sim\pi_\base(\cdot|X, U)}\left[
            \frac{1}{\pi_\base(T|X, U)}| Y, T, X
        \right] \pi(T|X) Y
    \right]\label{eq:conditional_ipw}.
\end{equation}
However, in practice, it is impossible to know $w(y, t, x)$ in the confounded offline contextual bandits, and it motivates the use of lower bound of the policy value.

Hereafter, for simplicity of notation and analysis, we assume that the conditional distribution of reward $Y$ given $T$, $X$, and $U$ is continuous and that it yields a density function $p(y|t, x, u)$ when such assumptions are appropriate.
We will also assume that $\pi(T|X, U) > 0$ almost surely under data generating process \eqref{eq:data_gen_base_confounded} so that the inverse probability weights are always well-defined.
Additionally, we will use the following simplified notations. 
To indicate a part of model \eqref{eq:data_gen_base_confounded} that can be identified by the offline data, we use $p_\obs$ to indicate the observable distribution of \eqref{eq:data_gen_base_confounded} so that
\[
p_\obs(y, t, x) = \sumint p(y|t, x, u) \pi_\base(t|x, u) p(x, u)\rd u.
\footnote{We use notation $\sumint$ to emphasize that unobserved confounder $U$ could be both a summation over a discrete variable or an integral over a continuous variable. The same notation is used later for treatment $T$.}
\]
Similarly, $p_\obs(t|x)$ and $p_\obs(x)$ will denote the corresponding conditional and marginal distributions, and $\EE_\obs[f(Y, T, X)] := \EE_{T\sim\pi_\base(\cdot|X, U)}[f(Y, T, X)]$ represents the expectation of $f(Y, T, X)$ with respect to $p_\obs(y, t, x)$.
To represent the empirical average that approximates $\EE_\obs$, we use $\hat\EE_n$ so that $\hat\EE_n[f(Y, T, X)] := \frac{1}{n}\sum_{i=1}^n f(Y_i, T_i, X_i)$ for any $f(y, t, x)$.
Finally, throughout the paper, we assume that we know the true value of $p_\obs(t|x)$ in the estimation of the policy value and its lower (or upper) bounds for the sake of simplicity. 
\footnote{
In the literature such as  \cite{kallus2018confounding, dorn2022sharp}, it is commonly assumed that the probability of action given a context in observational data can be consistently estimated from observable data. This probability is often referred to as the "nominal propensity score." However, to maintain consistent notation, we will refer to it as the observational probability and denote it as $p_\obs(t|x)$ to highlight the tractable components of our formulas.
}

\subsection{Uncertainty sets of base policies}\label{sec:policy_uncertainty_set}
A practical workaround to the above-mentioned issue is partial identification of the policy value under some reasonable assumption about confounding.
More specifically, we first define uncertainty set $\cE$ of $\pi_\base(t|x, u)$ that we believe to contain all possible base policies underlying the observational data. 
Then we find infimum  policy value $V_\textinf$ (or supremum $V_\textsup$) within the uncertainty set as
\begin{equation}
V_\textinf(\pi) := \inf_{\pi_\base \in \cE} \EE_{T\sim\pi_\base(\cdot|X, U)}\left[\left(\frac{\pi(T|X)}{\pi_\base(T|X, U)} \right)Y\right].
\label{eq:v_inf_exact_policy}
\footnote{
Alternatively, one may take the infimum not only over $\pi_\base(t|x, u)$ but also over the joint distribution $p(x, u)$ and the conditional outcome model $p(y|t, x, u)$. 
In contrast, our formulation assumes oracle access to the remaining components of the data-generating process. 
As we will show later, this assumption enables a sequence of relaxations that lead to a tractable lower-bound estimator.
}
\end{equation}
In the following, we introduce the conditional f-constraint, a class of constraints that can express a few reasonable classes of uncertainty sets.

Let $f:\RR\times\cT\times\cX\to\RR$ be a function satisfying that $f_{t, x}(\cdot):=f(\cdot, t, x)$ is convex and $f_{t, x}(1)=0$ for any fixed $t\in\cT$ and $x\in\cX$.
Then, we define the uncertainty set of the conditional f-constraint \footnote{Though this class of constraint contains the f-divergence constraint discussed later in Example \ref{ex:f_divergence_const}, we do not call them "conditional f-divergence constraint". This is because it does not become f-divergence in general, for instance, in the case of the box constraints.}
as
\begin{equation}
    \cE_{f_{t, x}} := \left\{
    \pi_\base(t|x, u):
    \begin{array}{c}
    \EE_{T\sim\pi_\base(\cdot|X, U)} \left[f_{T, X}\left( \frac{p_\obs(T|X)}{\pi_\base(T|X, U)} \right)\right] \leq \gamma \\
    \text{and} \\
    \pi_\base(t|x, u) \text{ is proper distribution}
    \end{array}
    \right\}.
    \label{eq:conditional_f_policy_uncertainty_set}
\end{equation}

Note that the lower bound for this uncertainty set is still intractable, as it involves the expectation with respect to unobserved variable $U$.
This issue is addressed in the next subsection, where we introduce relaxation to these uncertainty sets to obtain a tractable formulation of the lower bound.

Though this class of uncertainty sets has not been studied in prior work, we focus on this type of uncertainty set because it is designed to include several practically relevant examples, such as box constraints and f-divergence constraints.


\begin{example}[Box constraints]\label{ex:box_const}

The box constraints are the type of uncertainty set that has traditionally been adopted in the sensitivity analysis.
They can be expressed as uncertainty set
\begin{equation}
    \cE_\rbox := \left\{
    \pi_\base(t|x, u):
    \begin{array}{c}
    a_\pi(t, x) \leq \pi_\base(t|x, u) \leq b_\pi(t, x) \\
    \text{and} \\
    \pi_\base(\cdot|x, u) \text{ is proper distribution}
    \end{array}
    \right\}
    \label{eq:box_policy_uncertainty_set}
\end{equation}
for some $a_\pi(t, x)$ and $b_\pi(t, x)$.
This model includes the well-known marginal sensitivity model by \citet{tan2006distributional} and most of its extensions \citep{zhao2019sensitivity, kallus2018confounding, dorn2022sharp}. Tan considered a binary action space and assumed that the odds ratio of observational probability of action $p_\obs(t|x)$ and true base policy $\pi_\base(t|x, u)$ are not too far from $1$ so that
\begin{equation*}
    \Gamma^{-1} \leq \frac{p_\obs(t|x) (1 - \pi_\base(t|x, u)}{(1 - p_\obs(t|x)) \pi_\base(t|x, u)} \leq \Gamma.
\end{equation*}
As $p_\obs(t|x)$ can be identified from the observational data, we can enforce such constraints by choosing $a_\pi$ and $b_\pi$ in \eqref{eq:box_policy_uncertainty_set} as
\begin{equation}
a_\pi(t, x) = \frac{1}{1 + \Gamma (1/p_\obs(t|x) - 1)} 
\text{ and }
b_\pi(t, x) = \frac{1}{1 + \Gamma^{-1} (1/p_\obs(t|x) - 1)}.
\label{eq:tan_box_constraints}
\end{equation}

This model can be expressed in terms of the conditional f-constraint \eqref{eq:conditional_f_policy_uncertainty_set} by choosing 
\begin{equation}
    f_{t, x}(\tilde w) = I_{[a_{\tilde w}(t, x), b_{\tilde w}(t, x)]}(\tilde w) 
    \label{eq:f_box_constraints}
\end{equation}
for $a_{\tilde w}(t, x) = p_\obs(t|x) / b_\pi(t, x)$
and $b_{\tilde w}(t, x) = p_\obs(t|x) / a_\pi(t, x)$, 
$\cE_{f_{t, x}}$,
where $I_{[a, b]}(\tilde w)$ is a characteristic function 
\footnote{Please be aware that we use $\bbmone_{[a, b]}$ to represent indicator function that takes value in $\{0, 1\}$.}
of interval $[a, b]$ defined as
\[
I_{[a, b]}(\tilde w) :=
\begin{cases}
    0 &\text{ if } \tilde w \in [a, b], \\
    \infty & \text{otherwise.}
\end{cases}
\]
    
\end{example}

\begin{example}[f-divergence constraint]\label{ex:f_divergence_const}
Using the f-divergence, we can also define another class of uncertainty sets that belong to the family of conditional f-constraints.
We call such a constraint and its corresponding sensitivity model the f-divergence constraint and the f-sensitivity model.
To our knowledge, the majority of the existing literature only studies the box constraints, and we are the first to discuss the use of f-divergence constraint. 
\footnote{
Indeed, \citet{jin2022sensitivity} considered a similar use of f-divergence for sensitivity analysis. However, their model makes a different type of confounding assumption, as discussed in Appendix \ref{app:comp_jin_f_sensitivity}.
}

Before introducing the new uncertainty set, we first recall the formal definition of f-divergence.
Let $p(t)$ and $q(t)$ be the probability mass function over countable set $\cT$ (or the probability density function over $\cT\subseteq\RR^d$ for some $d$ with respect to the Lebesgue measure), and let  $f:\RR\to\RR$ be a convex function satisfying $f(1) = 0$.
Then, the f-divergence between the distribution of $p(t)$ and $q(t)$ is defined as
    $D_f[p||q] := \sumint_\cT f \left( \frac{p(t)}{q(t)} \right)q(t)\rd t.$
The f-divergence is a rich class of divergence between probability distributions that includes many divergences such as the Kullback-Leibler (KL), squared Hellinger, and Pearson $\chi^2$ divergences. 


Using the f-divergence, we define a new class of uncertainty set as
\begin{equation}
    \cE_f := \left\{
    \pi_\base(t|x, u):
    \begin{array}{c}
        \EE_{X, U} \left[D_f[p_\obs(t|X) || \pi_\base(t|X, U)]\right] \leq \gamma \\
        \text{and} \\
        \pi_\base(t|x, u) \text{ is proper distribution}
    \end{array}
    \right\},
    \label{eq:f_policy_uncertainty_set}
\end{equation}
where the expectation $\EE_{X, U}$ can be taken with respect to $p(x, u)$ in both \eqref{eq:data_gen_base_confounded} and \eqref{eq:data_gen_eval_confounded}, regardless of the policy.
This formulation naturally fits into the conditional f-constraints \eqref{eq:conditional_f_policy_uncertainty_set} as the divergence term becomes a simple expectation as
    $\EE_{X, U} \left[D_f[p_\obs(t|X) || \pi_\base(t|X, U)]\right] = 
    \EE_{T\sim\pi_\base(\cdot|X, U)}\left[ f\left(\frac{p_\obs(T|X)}{\pi_\base(T|X, U)}\right) \right]$.

The key distinction between f-divergence constraints and box constraints lies in their non-uniform nature. While box constraints impose uniform bounds on the odds ratio  $\frac{p_\obs(t|x)}{\pi_\base(t|x,u)}$ with respect to context $X$, f-divergence constraints only limit the overall deviation of the policy in aggregate. As a result, they allow for base policies that can vary significantly for a limited range of context values $X$.
Practically, box constraints are appropriate when one wants a worst-case, uniform robustness guarantee across all contexts. In contrast, $f$-divergence constraints are useful when one expects confounding to be concentrated on a subset of the population and prefers to control the average deviation globally.


\end{example}

\subsection{Relaxation of the uncertainty sets by reparametrization}\label{sec:weight_undertainty_set}

Having defined the uncertainty set of our interest, we now consider the computation method of lower bound $V_\textinf$.
Unfortunately, directly solving \eqref{eq:v_inf_exact_policy} is often too difficult in practice, so we instead solve a relaxed problem that is more tractable. 
First, we introduce reparametrization $w(y, t, x)= \EE_{T\sim\pi_\base(\cdot|X, U)}\left[\frac{1}{\pi_\base(T|X, U)}|Y=y, T=t, X=x\right]$.
By relaxing the uncertainty set of unknown base policy $\pi_\base(t|x, u)$, we can construct an uncertainty set of $w(y, t, x)$ and obtain a convex relaxation of original problem \eqref{eq:v_inf_exact_policy}.

Without the relaxation, the uncertainty set of base policy $\cE_{f_{t, x}}$ can be translated to the uncertainty set of the reparametrized weights as
\begin{equation}
    \cW_{f_{t, x}} := \left\{
    w(y, t, x):
    \begin{array}{c}
        w(y, t, x) = \EE_{T\sim\pi_\base(\cdot|X, U)}\left[\frac{1}{\pi_\base(T|X, U)}|Y=y, T=t, X=x\right] \\
        \text{and} \\
        \EE_{T\sim\pi_\base(\cdot|X, U)} \left[f_{T, X}\left( \frac{p_\obs(T|X)}{\pi_\base(T|X, U)} \right)\right] \leq \gamma \\
        \text{and} \\
        \pi_\base(t|x, u) \text{ is proper distribution}
    \end{array}
    \right\}.
    \label{eq:sharp_uncertainty_set}
\end{equation}
Using \eqref{eq:conditional_ipw}, we can obtain the lower bound as
\begin{equation}
    V_\textinf(\pi) = \inf_{w\in\cW_{f_{t, x}}}\EE_\obs\left[ w(Y, T, X) \pi(T|X) Y \right]\label{eq:v_inf_exact}.
\end{equation}
This uncertainty set requires that the conditional f-constraint is satisfied and that the base policy $\pi_\base(t|x, u)$ is proper distribution.
However, in practice, both conditions are intractable. 
Therefore, we will consider the relaxation of these conditions.

\subsubsection*{Relaxation of the conditional f-constraint}
Let us consider the first condition of \eqref{eq:sharp_uncertainty_set} on conditional f-divergence.
With Jensen's inequality, we have 
\begin{align}
    \EE_{T\sim\pi_\base(\cdot|X, U)}\left[ f_{T, X}\left(\frac{p_\obs(T|X)}{\pi_\base(T|X, U)}\right) \right]
    \geq \EE_\obs\left[ f_{T, X}\left(
        \EE_{T\sim\pi_\base(\cdot|X, U)} \left[\frac{p_\obs(T|X)}{\pi_\base(T|X, U)}|Y, T, X\right]
    \right) \right],
    \label{eq:f_divergence_lower_bound}
\end{align}
where the equality holds when there's $\pi_\base(t|x, u) = p_\obs(t| x)$ (i.e., no confounding) or when $f_{T, X}(\cdot)$ is linear almost surely.\footnote{Indeed, the box constraints satisfy this linear condition and no relaxation is made. On the other hand, the f-divergence constraint relies on strictly convex functions, and thus the equality of Jensen's inequality does not hold.}
Thus, we relax condition 
$\EE_{T\sim\pi_\base(\cdot|X, U)}\left[ f_{T, X}\left(\frac{p_\obs(T|X)}{\pi_\base(T|X, U)}\right) \right] \leq \gamma$
in \eqref{eq:sharp_uncertainty_set}
to
\begin{equation}
    \EE_\obs\left[ f_{T, X}\left( p_\obs(T|X)w(Y, T, X) \right) \right] \leq \gamma.
    \label{eq:relaxed_f_constraints}
\end{equation}

Here, we must emphasize that the above constraint is convex with respect to $w(y, t, x)$.
This convexity translates to the empirical version of the above constraint, and enables us to estimate the lower bound by solving convex programming.

Additionally, when $f_{t, x}$ does not depend on $t$ and $x$, there is a simple trick to ensure that the left hand side of \eqref{eq:relaxed_f_constraints} is non-negative. 
By imposing $\EE_\obs\left[ p_\obs(T|X)w(Y, T, X) \right] = 1$,
we can guarantee
$0 = f(1) = f\left( \EE_\obs\left[ p_\obs(T|X)w(Y, T, X) \right] \right) \leq \EE_\obs\left[ f\left( p_\obs(T|X)w(Y, T, X) \right) \right]$
due to Jensen's inequality.
In the numerical experiments, we observed that this helps to obtain a more stable implementation of the f-sensitivity model.

\subsubsection*{Relaxation of the distributional constraints}
Now we consider the relaxation of the second condition in \eqref{eq:sharp_uncertainty_set} that $\pi_\base(t|x, u)$ is proper distribution, which is equivalent to 
\begin{equation}
   \sumint_\cT \pi_\base(t'|x, u) \rd t' = 1 \text{ and } \pi_\base(t|x, u) \geq 0
   \text{ for any }
   t\in\cT, x\in\cX, \text{ and }u\in\cU.
   \label{eq:proper_base_policy}
\end{equation}

We relax constraint \eqref{eq:proper_base_policy} as conditional moment constraints
\begin{equation}
    \EE_\obs[w(Y, T, X)|T=t, X=x] \cdot p_\obs(t|x) = 1 \text{ for any } t\in\cT \text{ and }x\in\cX
    \label{eq:conditional_moment_constraints}
\end{equation}
and non-negativity condition
\begin{equation}
    w(y, t, x) \geq 0 \text{ for any } y\in\cY, t\in\cT, \text{ and }x\in\cX.
    \footnote{When $\cT$ is discrete, it is possible to further tighten this condition as $w\geq 1$ by using the fact that $\pi_\base(t|x, u)\leq 1$. However, we leave this condition as it is so that it can handle cases where $\cT$ is not discrete.}
    \label{eq:nonnegativity_constraint}
\end{equation}

To simplify the treatment of the non-negativity condition, we hereafter assume that the non-negativity condition is already imposed by conditional f-constraint \eqref{eq:relaxed_f_constraints} so that $f_{t,x}(u) = \infty$ for any $u < 0$.
This can be assumed without loss of generality because for any conditional-f constraint $\EE_\obs[f_{t, x}(w(Y, T, X)p_\obs(T|X))] \leq \gamma$, we can consider a new f-constraint $\EE_\obs[\tilde f_{t, x}(w(Y, T, X)p_\obs(T|X))] \leq \gamma$ for $\tilde f_{t, x}(v) = f_{t, x}(v) + I_{[0, \infty)}(v)$.

The validity of this relaxation can be checked as
\begin{align*}
\EE_\obs &[w(Y, T, X)|T=t, X=x]\cdot p_\obs(t, x) \\
&= \EE_\obs \left[
    \EE_{T\sim\pi_\base(\cdot|X, U)}\left[\frac{1}{\pi_\base(T|X, U)}|Y, T, X\right]
| T=t, X=x \right] \cdot p_\obs(t, x) \\
&= \EE_{T\sim\pi_\base(\cdot|X, U)}\left[ \frac{1}{\pi_\base(T|X, U)} | T=t, X=x \right] \cdot p_\obs(t, x) \\
&= \sumint_\cU \frac{1}{\pi_\base(t|x, u)} p_{\pi_\base}(u| t, x) \rd u \cdot p_\obs(t, x) \\
&= \sumint_\cU \frac{1}{\pi_\base(t|x, u)} p_{\pi_\base}(t, x, u) \rd u \\
&= \sumint_\cU p(x, u)\rd u 
= p_\obs(x)
\end{align*}
and the fact that $p_\obs(t|x) = p_\obs(t, x) / p_\obs(x)$.
Here, $p_{\pi_\base}(t, x, u)$ and $p_{\pi_\base}(u|t, x)$ denote the joint and conditional distribution of $T$, $X$, and $U$ under confounded contextual bandits model \eqref{eq:data_gen_base_confounded}.
To summarize, we combine these conditional moment constraints (CMC) with relaxed conditional f-constraint \eqref{eq:relaxed_f_constraints} to define relaxed uncertainty set
\begin{equation}
    \cW_{f_{t, x}}^\CMC:= \left\{
    w(y, t, x):
    \begin{array}{c}
        \EE_\obs \left[f_{T, X}\left(p_\obs(T|X)w(Y, T, X)\right)\right] \leq \gamma \\
        \text{and} \\
        \EE_\obs[w(Y, T, X)|T=t, X=x] \cdot p_\obs(t|x) = 1 \text{ for any } t\in\cT \text{ and } x\in\cX
    \end{array}
    \right\}
    \label{eq:conditional_moment_uncertainty_set}
\end{equation}
and its corresponding lower bound
\begin{equation}
V_\textinf^\CMC(\pi):= \inf_{w\in\cW^\CMC_{f_{t, x}}}\EE_\obs\left[w(Y, T, X)\pi(T|X)Y \right]. \label{eq:v_inf_conditional_moment_constraints}
\end{equation}

\begin{remark}[A comparison to ZSB sensitivity model]
Until the recent work by \citet{dorn2022sharp}, unsharp estimators of the lower bound had been the de facto standard in sensitivity analysis. Rather than adopting the conditional moment constraint \eqref{eq:conditional_moment_constraints} as relaxation, earlier works --- including \citet{tan2006distributional, zhao2019sensitivity, kallus2018confounding, kallus2021minimax} --- relied on the following looser relaxations, leading to unsharp estimates.
\begin{equation}
    \EE_\obs[\bbmone_{T=t}w(Y, T, X)] = 1 \text{ for any } t\in\cT,
    \label{eq:zsb_constraint}
\end{equation}
where $\bbmone$ denotes the indicator function for event $A$.
This condition can be obtained from conditional moment constraints \eqref{eq:conditional_moment_constraints} as $
\EE_\obs[\bbmone_{T=t}w(Y, T, X)] =
\EE_\obs[\bbmone_{T=t}\EE_\obs[w(Y, T, X)|T, X]] =
\EE_\obs[\bbmone_{T=t} / p_\obs(T | X)] = 1
\text{ for any } t\in\cT.$
Following the naming convention in \citet{dorn2022sharp}, we call these constraints the ZSB constraints after the authors of \citet{zhao2019sensitivity}.

Combining the above with the relaxation of the conditional f-constraint as in \eqref{eq:relaxed_f_constraints}, the following uncertainty set with the ZSB constraints becomes 
\begin{equation}
    \cW_{f_{t, x}}^\ZSB := \left\{
    w(y, t, x):
    \begin{array}{c}
        \EE_\obs \left[f_{T, X}\left(p_\obs(T|X)w(Y, T, X)\right)\right] \leq \gamma \\
        \text{and} \\
        \EE_\obs[\bbmone_{T=t}w(Y, T, X)] = 1 \text{ for any } t\in\cT
    \end{array}
    \right\},
    \label{eq:zsb_uncertainty_set}
\end{equation}
with associated lower bound
\begin{equation}
V_\textinf^\ZSB(\pi) := \inf_{w\in\cW^\ZSB_{f_{t, x}}}\EE_\obs\left[w(Y, T, X)\pi(T|X)Y \right] \label{eq:v_inf_zsb}.
\end{equation}

In practice, for the ZSB constraint to be tractable, we need to assume that action space $\cT$ is discrete and finite. This is in contrast to our approach, which directly approximates the conditional moment constraints and can handle both discrete and continuous action spaces.

Naturally, for the uncertainty sets discussed, one can show inclusion relations $\cW_{f_{t, x}} \subseteq \cW^\CMC_{f_{t, x}}\subseteq \cW^\ZSB_{f_{t, x}}$.
The former inclusion follows from the definition of $\cW^\CMC_{f_{t, x}}$, and the latter follows from the fact that the ZSB constraint follows from the conditional moment constraints as discussed above.
Under the assumption that the original sensitivity model is correctly specified --- meaning the true base policy $\pi_\base(t|x, u)$ lies within the original uncertainty set $\cE_{f_{t, x}}$ --- we can ensure that the tightness of the resulting lower bounds aligns with the order of relaxation, i.e., $V \geq V_\textinf \geq V_\textinf^\CMC \geq V_\textinf^\ZSB$.

\footnote{
Here, it is worth considering under what kind of setting the second equality $V_\textinf = V_\textinf^\CMC$ holds so that the lower bound of the conditional moment constraints is tight.
Surprisingly, a recent work by \citet[Section 3.2]{dorn2022sharp} showed that this equality holds for average treatment effect estimation with Tan's marginal sensitivity model \citep{tan2006distributional}.
They showed that there exists minimizer $w^*(y, t, x)$ of $\mathrm{ATE}(w):=\EE_\obs[w(Y, T, X)\bbmone_{T=1}Y] - \EE_\obs[w(Y, T, X)\bbmone_{T=0}Y]$, which is realizable so that there exists $p(y|t, x, u)$, $\pi_\base(t|x, u)$, and $p(x, u)$ that is compatible with any $p_\obs(y, t, x)$ and minimizer $w^*(y, t, x)$.
However, they did not provide any realizability results for more general settings discussed in this paper, such as the evaluation of general policy with non-binary action space and conditional f-constraint. This is an open question we could not address in our paper.
}

\end{remark}

\subsection{Low-rank approximation of the conditional moment constraints}


Now, we introduce an empirical approximation of conditional moment constraints \eqref{eq:conditional_moment_constraints} using low-rank approximation.

Let us introduce another re-parametrization $e(y, t, x) := p_\obs(t|x)w(y, t, x) - 1$ for further tractability so that conditional moment constraints \eqref{eq:conditional_moment_constraints} can be written as 
\begin{equation}
    \EE_\obs[e(Y, T, X)|T=t, X=x] = 0 \text{ \ \ for any \ \ }t\in\cT \text{ and }x\in\cX.
    \label{eq:conditional_moment_constraints_error_version}
\end{equation}
This is equivalent to orthogonality condition $\EE_\obs[e(Y, T, X) \psi(T, X)]=0$ for any function $\psi(t, x)$. 
However, imposing this orthogonality condition for all possible $\psi$ is infeasible if we impose the empirical version of the orthogonality constraints, as there are infinite choices of $\psi$. To cope with this issue, we apply low-rank approximation to the orthogonality conditions above. With a collection of basis functions $\{\psi_d(t, x)\}_{d=1}^D$, we define the kernel conditional moment constraints (KCMC) for the population as
\begin{equation}
    \KCMC
    \stackrel{\text{def}}{\Leftrightarrow} \EE_\obs[e(Y, T, X)\psi_d(T, X)] = 0  \text{ \ \ for any \ \ }d = 1, \ldots, D.
    \label{eq:kernel_conditional_moment_constraints_low_rank_orthogonality}
\end{equation}

We call this approximation the kernel conditional moment constraints because it can be motivated from the existing kernel methods.
In the preliminary version of our work \citep{ishikawa2023kernel}, we show that the low-rank kernel ridge regression can also be used to motivate the KCMC.
For example, it cound be considered a version of the kernel formulation of conditional moment constraints by \citet{kremer2022functional} and \citet{muandet2020kernel} using low-rank kernel $k\left((t, x), (t', x')\right) := \sum_{d=1, ..., D}\psi_d(t_1, x_1) \psi_d(t_2, x_2)$. \footnote{
Instead of the low-rank approximation, we could use the maximum moment restriction (MMR) as in \citet{kremer2022functional} and \citet{muandet2020kernel}. The MMR is a quadratic form that quantifies the violation of the conditional moment and upper bounding it leads to an alternative convex uncertainty set. 
However, in terms of the theoretical analysis, their approach is more technically involved, and deriving asymptotic results comparable to ours becomes substantially more complicated than in our finite-dimensional, low-rank setting, where our theoretical results can be derived as special cases of classic asymptotics in a finite-dimensional parameter space.
}
Additionally, as we discuss later in Example \ref{lemma:kpca_bias}, the principal components of the kernel PCA \citep{scholkopf1997kernel} can be used as a reasonable collection of basis functions. When we take first $D$ principal components found by kernel PCA as the basis functions for KCMC, we can prove that KCMC can approximate the original conditional moment constraints at an arbitrary precision as $D$ and $n$ go to infinity.

Analogously, the empirical version of the KCMC is also defined by replacing $\EE_\obs$ with $\hat\EE_n$.
Finally, combining the KCMC approximation \eqref{eq:kernel_conditional_moment_constraints_low_rank_orthogonality} and relaxed conditional f-constraint \eqref{eq:relaxed_f_constraints}, we can define the uncertainty set 
\begin{equation}
    \cW^\KCMC_{f_{t, x}} = \left\{
    w(y, t, x):
    \begin{array}{c}
         \EE_\obs[f_{T, X}(w(Y, T, X)p_\obs(T|X))] \leq \gamma \\
         \text{and} \\
         \EE_\obs[\left(w(Y, T, X)p_\obs(T|X) - 1\right)\psi_d] = 0 \text{ \ for any }d = 1, \ldots, D.
    \end{array}
    \right\}
\end{equation}
and the associated lower bound with the KCMC as 
\begin{equation}\label{eq:v_inf_kernel_conditional_moment_constraints}
    V_\textinf^\KCMC(\pi) = \inf_{w\in\cW^\KCMC_{f_{t, x}}}\EE_\obs[w(Y, T, X)\pi(T|X)Y].
\end{equation}
Likewise, we use its empirical version and define the KCMC estimator as
\begin{equation}
    \hat V_\textinf^\KCMC(\pi) = \inf_{\bsw\in\hat\cW^\KCMC_{f_{t, x}}}\hat\EE_n[w(Y, T, X)\pi(T|X)Y],
    \label{eq:empirical_v_inf_kernel_conditional_moment_constraints}
\end{equation}
where
\begin{equation}
    \hat\cW^\KCMC_{f_{t, x}} = \left\{
    w(y, t, x):
    \begin{array}{c}
         \hat\EE_n[f_{T, X}(w(Y, T, X)p_\obs(T|X))] \leq \gamma \\
         \text{and} \\
         \hat\EE_n[\left(w(Y, T, X)p_\obs(T|X) - 1\right)\psi_d] = 0 \text{ \ for any }d = 1, \ldots, D
    \end{array}
    \right\}.
\end{equation}
As can be seen, the KCMC estimator is the solution of a convex programming with a tractable feasibility set. Therefore, our estimator can easily implemented using any convex optimization solvers such as CVXPY \citep{diamond2016cvxpy}.

\section{Theoretical Analysis}\label{chap:theoretical_analysis}

In this section, we study the property of lower bound estimate of the policy value with the KCMC estimator.
We begin the analysis of the KCMC estimator by deriving the convex dual of the above problems and characterizing their minimizers.
Then, we study the specification error of the KCMC lower bound $\left|V^\CMC_\textinf - V^\KCMC_\textinf\right|$. 
We provide a condition on the orthogonal function set $\{\psi_d(t, x)\}_{d=1}^D$ under which the specification error can be bounded.
Next, we show the consistency of the KCMC estimator in policy evaluation by showing the equivalence of the KCMC estimator with a standard M-estimator. 
We further prove consistency guarantees for policy learning with the KCMC estimator, in the cases of a concave policy class and the Vapnik-Chevonenkis policy class.
Lastly, we derive the asymptotic normality of the KCMC estimator and discuss its application to the construction of a confidence interval and model selection.
For the sake of conciseness, we defer some proofs and detailed regularity conditions to Appendix \ref{app:omitted_proofs}.

Before further discussion, we introduce several simplifications of the notations.
We omit subscripts of $\cW_{f_{t, x}}^\CMC$, $\cW_{f_{t, x}}^\KCMC$, $\hat\cW_{f_{t, x}}^\KCMC$ and $\EE_\obs$, unless they are unclear from the context.
We also introduce $r(y, t, x) = \left(\frac{\pi(t|x)}{p_\obs(t|x)}\right)\cdot y$ and reparametrization $\tilde w(y, t, x) = p_\obs(t|x)w(y, t, x)$.

Furthermore, we briefly review the subgradient and the Fenchel conjugate here, as we will make heavy use of them in this section.
The subgradient of convex function $f$ is represented by $\partial f$.
When we apply the addition operator to the subgradient, it represents the Minkowski sum.
Other operations to the subgradient such as multiplication are similarly defined.
The Fenchel conjugate of $f:\RR\to\RR$ is defined as $f^*(v):=\sup_{u}\{uv - f(u)\}$, and it has a few important properties.
The Fenchel conjugate is always convex because the supremum of a family of convex functions is convex.
\footnote{In this case, $v\mapsto vu - f(u)$ is linear, and therefore, is convex.}
Additionally, there exists maximizer $u^*$ that solves $f^*(v) = \max_{u}\left\{vu - f(u)\right\}$ and it satisfies 
\begin{equation}
    u^* \in \partial f^*(v)
    \label{eq:fenchel_conjugate_solution}
\end{equation}
if $f$ is closed and convex.
Function $f$ is closed and convex if its epigraph $\mathrm{epi}(f):=\{(u, t): u\in\mathrm{dom}(f), \ f(u)\leq t\}$ is closed and convex, and these conditions are satisfied in our problem.
A more thorough treatment of the subgradient and the Fenchel conjugate can be found in  \citep{boyd2004convex}.

\subsection{Characterization of the solution}
In this subsection, we derive explicit formulae for the minimizers that give three lower bounds $V_\textinf^\CMC$, $V_\textinf^\KCMC$, and $\hat V_\textinf^\KCMC$, which are
\begin{align}
\begin{split}
    w^*_\CMC &= \arg\min_{w\in\cW^\CMC}\EE[w(Y, T, X)\pi(T|X)Y], \\
    w^*_\KCMC &= \arg\min_{w\in\cW^\KCMC}\EE[w(Y, T, X)\pi(T|X)Y], \\
    \hat w_\KCMC &= \arg\min_{w\in\hat\cW^\KCMC}\hat\EE_n[w(Y, T, X)\pi(T|X)Y].
\end{split}
\label{eq:w_solutions}
\end{align}
Here, we know that these problems have minimizers because the above problems are minimizations of linear objectives under convex constraints.
Furthermore, we know that the strong duality holds for the above convex optimizations, as their feasibility sets have a non-empty relative interior, satisfying Slater's constraint qualification.
Using these properties, we obtain the following lemma:

\begin{lemma}[Characterization of solutions]\label{lemma:w_characterization}
Let $w^*_\CMC$, $w^*_\KCMC$, and $\hat w_\KCMC$ be defined as in \eqref{eq:w_solutions}. 
Then, there exist function $\eta_\CMC:\cT\times\cX\to\RR$, vectors $\eta_\KCMC, \eta_\KCMC'\in\RR^D$, and constants $\eta_f, \eta_f', \eta_f'' > 0$ such that 
\begin{align}
w^*_\CMC(y, t, x) &\in \left( \frac{1}{p_\obs(t|x)} \right) \partial f_{t, x}^* \left( \frac{\eta_\CMC(t, x) - r(y, t, x)}{\eta_f} \right),
\label{eq:cmc_solution_characterization} \\
w^*_\KCMC(y, t, x) &\in \left( \frac{1}{p_\obs(t|x)} \right) \partial f_{t, x}^* \left( \frac{{\eta_\KCMC}^T\bspsi(t, x) - r(y, t, x)}{\eta_f'} \right),
\label{eq:kcmc_solution_characterization} \\
\hat w_\KCMC(y, t, x) &\in \left( \frac{1}{p_\obs(t|x)} \right) \partial f_{t, x}^* \left( \frac{{\eta_\KCMC'}^T\bspsi(t, x) - r(y, t, x)}{\eta_f''} \right).
\label{eq:empirical_kcmc_solution_characterization}
\end{align}  
\end{lemma}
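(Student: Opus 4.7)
The plan is to write down the Lagrangian of each minimization in \eqref{eq:w_solutions}, invoke strong duality (Slater's condition holds, as just noted), and read off the primal minimizer from the inner minimization of the Lagrangian via the Fenchel-conjugate identity \eqref{eq:fenchel_conjugate_solution}. It is convenient throughout to use the reparametrization $\tilde w(y,t,x) = p_\obs(t|x)\,w(y,t,x)$ introduced in the excerpt: the objective becomes $\EE[\tilde w\,r]$ with $r(y,t,x)=(\pi(t|x)/p_\obs(t|x))\,y$, the f-constraint becomes $\EE[f_{T,X}(\tilde w)] \leq \gamma$, and the three moment constraints become $\EE[\tilde w\,|\,T,X]=1$ a.s., $\EE[(\tilde w-1)\psi_d]=0$ for $d=1,\ldots,D$, and $\hat\EE_n[(\tilde w-1)\psi_d]=0$, respectively.

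For the CMC problem I would attach a scalar dual variable $\eta_f \geq 0$ to the f-constraint and a real-valued multiplier function $\eta_\CMC(t,x)$ to the family of conditional-moment equalities (one per $(t,x)$). Integrating the latter against $p_\obs(t,x)$ gives
\[
L(\tilde w) = \EE\!\left[\tilde w\,(r - \eta_\CMC(T,X)) + \eta_f f_{T,X}(\tilde w)\right] + \EE[\eta_\CMC(T,X)] - \eta_f \gamma,
\]
which is pointwise separable in $\tilde w$. The inner minimization therefore decouples into the per-$(y,t,x)$ scalar problem $\inf_{u\in\RR}\{u(r - \eta_\CMC) + \eta_f f_{t,x}(u)\}$, which for $\eta_f > 0$ equals $-\eta_f\,f_{t,x}^*\bigl((\eta_\CMC - r)/\eta_f\bigr)$. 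Invoking \eqref{eq:fenchel_conjugate_solution}, any minimizer $u^*$ lies in $\partial f_{t,x}^*\bigl((\eta_\CMC(t,x)-r(y,t,x))/\eta_f\bigr)$; dividing by $p_\obs(t|x)$ yields \eqref{eq:cmc_solution_characterization}.

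The KCMC and empirical-KCMC cases are structurally identical. For \eqref{eq:kcmc_solution_characterization}, the function multiplier $\eta_\CMC(t,x)$ is replaced by a finite vector $\eta_\KCMC \in \RR^D$ paired with $\bspsi$, so the dual penalty $\eta_\CMC(T,X)$ becomes $\eta_\KCMC^T\bspsi(T,X)$ throughout and the same Fenchel step goes through with a new positive scalar $\eta_f'$. For \eqref{eq:empirical_kcmc_solution_characterization}, the population expectation $\EE$ is replaced by $\hat\EE_n$, so the Lagrangian is a finite sum over $i=1,\ldots,n$; each summand is minimized by the same pointwise Fenchel identity, now evaluated at $(Y_i,T_i,X_i)$, producing $\eta_\KCMC'$ and $\eta_f''$. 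Strict positivity of $\eta_f,\eta_f',\eta_f''$ follows from complementary slackness together with the observation that the per-$(y,t,x)$ infimum collapses to $-\infty$ whenever $\eta_f=0$ and $r \ne \eta_\CMC$ (or $r \ne \eta_\KCMC^T\bspsi$) with positive probability, so finiteness of the dual forces the f-constraint to be active.

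The only real obstacle is the CMC case, whose dual has a function-valued multiplier indexed by a continuum of $(t,x)$. Strong duality there is not immediate from the finite-dimensional textbook statement; I would either work in a Banach-space setting, placing the conditional-moment family as a single linear operator equation in the dual of $L^\infty(\cT\times\cX)$ and invoking a Slater-type qualification in that setting, or approximate the family of conditional-moment equalities by an increasing sequence of finite-dimensional orthogonality systems and pass to the limit using tightness of the feasible set $\cW^\CMC$. The secondary technical step of exchanging infimum and expectation in the pointwise decoupling is classical for separable convex integrands and is justified by a measurable-selection argument, since the integrand is jointly measurable in $(y,t,x)$ and convex in $\tilde w$.
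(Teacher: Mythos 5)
Your overall route --- Lagrangian, strong duality via Slater's condition, pointwise decoupling of the inner minimization over $\tilde w$, and the Fenchel-conjugate identity \eqref{eq:fenchel_conjugate_solution} to read off the primal minimizer --- is exactly the argument the paper gives, and your attention to the infinite-dimensional multiplier in the CMC case is more careful than the paper, which simply asserts strong duality for all three problems. The genuine gap is in your justification that $\eta_f, \eta_f', \eta_f''$ are strictly positive. You claim that at $\eta_f = 0$ the pointwise infimum $\inf_u\{u(r-\eta_\CMC) + \eta_f f_{t,x}(u)\}$ collapses to $-\infty$ whenever $r \neq \eta_\CMC$, so that finiteness of the dual forces the f-constraint to be active. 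That is false whenever $f_{t,x}$ has bounded effective domain: for the box constraints $f_{t,x} = I_{[a(t,x),\,b(t,x)]}$ --- the paper's central example --- the infimum at $\eta_f = 0$ is taken only over $u \in [a,b]$ and equals $\min\{a(r-\eta_\CMC),\, b(r-\eta_\CMC)\}$, which is finite. In that setting the f-constraint is never active ($\EE[f_{T,X}(\tilde w)] = 0 < \gamma$ for every feasible $\tilde w$), complementary slackness yields nothing, and the optimal multiplier genuinely is $\eta_f^* = 0$. So your proof as written establishes nothing for the box constraints, which is precisely where the lemma is deployed in Examples \ref{ex:box_constraint_analytical_solution} and \ref{ex:quantile_balancing_as_a_special_case}.

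The paper handles this by splitting cases: when the f-constraint is slack it writes the separate dual \eqref{eq:v_inf_conditional_moment_constraints_dual_zero_eta_f} with $\eta_f = 0$ and observes (in a footnote) that the stated characterization survives with $\eta_f^*$ replaced by $1$ inside $\partial f_{t,x}^*$ --- harmless there because $f_{t,x}^*$ is then a support function and hence positively homogeneous, so the subgradient is unchanged under positive rescaling of its argument. You would need to add this case split (or restrict the positivity claim to $f_{t,x}$ with full domain) for the lemma to hold as stated. The remaining steps --- the interchange of infimum and expectation via measurable selection, and the finite-dimensional KCMC and empirical-KCMC duals --- are sound and coincide with the paper's treatment.
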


\begin{proof}
    See below.
\end{proof}

\subsubsection*{Characterization of $w^*_\CMC$}

By using the strong duality, we can transform the original problem for $V_\textinf^\CMC$ as
\begin{align}
V_\textinf^\CMC
&= \inf_{w\in\cW^\CMC}\EE[\tilde w(Y, T, X)r(Y, T, X)] \\
&= \inf_{\tilde w}\sup_{\substack{\eta_\CMC:\cT\times\cX\to\RR,\\ \eta_f\geq 0}}
    \EE[\tilde w r] 
    - \EE\left[\eta_\CMC(T, X)(\tilde w - 1)\right] 
    + \eta_f\left(\EE[f_{T, X}(\tilde w)] - \gamma\right) \\
&= \sup_{\substack{\eta_\CMC:\cT\times\cX\to\RR,\\ \eta_f\geq 0}}\inf_{\tilde w}
    \EE[\tilde w r] 
    - \EE\left[\eta_\CMC(T, X)(\tilde w - 1)\right] 
    + \eta_f\left(\EE[f_{T, X}(\tilde w)] - \gamma\right) \\
&= \sup_{\substack{\eta_\CMC:\cT\times\cX\to\RR,\\ \eta_f > 0}}
    - \eta_f \gamma 
    + \EE[\eta_\CMC]
    - \eta_f \EE\left[\sup_{\tilde w}\left\{\left(\frac{\eta_\CMC - r}{\eta_f}\right)\tilde w - f_{T, X}(\tilde w)\right\}\right]  \\
&= \sup_{\substack{\eta_\CMC:\cT\times\cX\to\RR,\\ \eta_f > 0}}
    - \eta_f \gamma 
    + \EE[\eta_\CMC]
    - \eta_f \EE\left[f_{T, X}^*\left(\frac{\eta_\CMC - r}{\eta_f}\right)\right],
    \label{eq:v_inf_conditional_moment_constraints_dual}
\end{align}
assuming that optimal $\eta_f$ is positive. 
This assumption does not hold only when conditional f-constraint \eqref{eq:relaxed_f_constraints} is not tight.
However, in such cases,
\begin{align}
V_\textinf^\CMC
&= \sup_{\eta_\CMC:\cT\times\cX\to\RR} \inf_{\tilde w}
    \EE[\tilde w r] 
    - \EE\left[\eta_\CMC(T, X)(\tilde w - 1)\right] \\
&= \sup_{\eta_\CMC:\cT\times\cX\to\RR} \inf_{\tilde w}
    \EE\left[\tilde w(Y, T, X) (r(Y, T, X) - \eta_\CMC(T, X))\right]
    + \EE\left[\eta_\CMC(T, X)\right]
\label{eq:v_inf_conditional_moment_constraints_dual_zero_eta_f}
\end{align}
and in order for the supremum to be reached for $\eta_f=0$, we need
$r(Y, T, X) - \eta_\CMC(T, X) = 0$ almost surely, which is not satisfied in practice.
Therefore, for simplicity in our proofs, we will focus on cases where $\eta_f > 0$.

Now, as primal solution $w^*_\CMC$ must satisfy the Karush-Kuhn-Tucker (KKT) conditions, we can take the maximizers of \eqref{eq:v_inf_conditional_moment_constraints_dual} as $\eta_f^*$ and $\eta_\CMC^*(t, x)$.
Using property \eqref{eq:fenchel_conjugate_solution} of the Fenchel conjugate, we can obtain solution form of $w^*_\CMC$
\begin{equation}
    w^*_\CMC(y, t, x) \in  \left( \frac{1}{p_\obs(t|x)} \right) \partial f_{t, x}^* \left( \frac{\eta_\CMC^*(t, x) - r(y, t, x)}{\eta_f^*} \right),
\end{equation}
which proves \eqref{eq:cmc_solution_characterization}. Here, factor $1/p_\obs(t|x)$ in front of the subgradient appears because of reparametrization $\tilde w(y, t, x) = p_\obs(t|x)w(y, t, x)$.

\subsubsection*{Characterization of $w^*_\KCMC$}
Now, we derive the characterization of $w^*_\KCMC$.
Let $\bspsi(t, x) = \left(\psi_1(T, X), \ldots, \psi_D(T, X)\right)^T$.
We can use exactly the same technique as the above and reach a similar characterization of the solution as
\begin{align}
V_\textinf^\KCMC
&= \inf_{w\in\cW^\KCMC}\EE[\tilde w(Y, T, X)r(Y, T, X)] \\
&= \inf_{\tilde w}\sup_{\substack{\eta_\KCMC\in\RR^D,\\ \eta_f\geq 0}}
    \EE[\tilde w r] 
    - \EE\left[(\tilde w - 1) {\eta_\KCMC}^T\bspsi\right] 
    + \eta_f\left(\EE[f_{T, X}(\tilde w)] - \gamma\right) \\
&= \sup_{\substack{\eta_\KCMC\in\RR^D,\\ \eta_f\geq 0}} \inf_{\tilde w}
    \EE[\tilde w r] 
    - \EE\left[(\tilde w - 1) {\eta_\KCMC}^T\bspsi\right] 
    + \eta_f\left(\EE[f_{T, X}(\tilde w)] - \gamma\right) \\
&= \sup_{\substack{\eta_\KCMC\in\RR^D,\\ \eta_f > 0}}
    - \eta_f \gamma 
    + {\eta_\KCMC}^T\EE[\bspsi]
    - \eta_f \EE\left[\sup_{\tilde w}\left\{\left(\frac{{\eta_\KCMC}^T\bspsi - r}{\eta_f}\right)\tilde w - f_{T, X}(\tilde w)\right\}\right]  \\
&= \sup_{\substack{\eta_\KCMC\in\RR^D,\\ \eta_f > 0}}
    - \eta_f \gamma 
    + {\eta_\KCMC}^T\EE[\bspsi]
    - \eta_f \EE\left[f_{T, X}^*\left( \frac{{\eta_\KCMC}^T\bspsi - r}{\eta_f}\right)\right].
    \label{eq:v_inf_kernel_conditional_moment_constraints_dual}
\end{align}
Now, using the maximizers of dual problem \eqref{eq:v_inf_kernel_conditional_moment_constraints_dual} $\eta_f^*$ and $\eta_\KCMC^*$, we can obtain a characterization of $w^*_\KCMC$ as
\begin{equation}
    w^*_\KCMC(y, t, x) \in  \left( \frac{1}{p_\obs(t|x)} \right) \partial f^* \left( \frac{{\eta_\KCMC^*}^T\bspsi(t, x) - r(y, t, x)}{\eta_f^*} \right),
    \label{eq:w_solution_kernel_conditional_moment_constraints}
\end{equation}
which proves \eqref{eq:kcmc_solution_characterization}.


\subsubsection*{Characterization of $\hat w_\KCMC$}
Again, using the same techniques, we can derive the characterization of $\hat w_\KCMC$.
By exchanging $\EE$ with $\hat \EE_n$ in the proof for $w^*_\KCMC$ above and writing the maximizers of dual problem
\begin{equation}
    \sup_{\substack{\eta_\KCMC\in\RR^D,\\ \eta_f > 0}}
    - \eta_f \gamma 
    + {\eta_\KCMC}^T\hat\EE_n[\bspsi]
    - \eta_f \hat\EE_n\left[f_{T, X}^*\left( \frac{{\eta_\KCMC}^T\bspsi - r}{\eta_f}\right)\right].
    \label{eq:empirical_v_inf_kernel_conditional_moment_constraints_dual}
\end{equation}
as $\hat\eta_f$ and $\hat\eta_\KCMC$, we get
\begin{equation}
    \hat w_\KCMC(y, t, x) \in  \left( \frac{1}{p_\obs(t|x)} \right)
    \partial f_{t, x}^* \left(
        \frac{{{}\hat\eta_\KCMC}^T\bspsi(t, x) - r(y, t, x)}{{\hat\eta}_f}
    \right),
    \label{eq:empirical_w_solution_kernel_conditional_moment_constraints}
\end{equation}
which proves \eqref{eq:empirical_kcmc_solution_characterization}.

For general choice of $f_{t, x}$, it is difficult to derive analytical expressions for solutions $\eta_f^*$ and $\eta^*_\CMC$, as well as $w^*_\CMC$.
However, we can actually obtain their explicit expressions in the case of the box constraints.

\begin{example}[Solutions for box constraints]\label{ex:box_constraint_analytical_solution}
Let us consider the box constraints corresponding to \eqref{eq:f_box_constraints} of the conditional f-constraint.
Here, for notational simplicity, we omit the subscript of $a_{\tilde w}$ and $b_{\tilde w}$.
For example, we will simply write $f_{t, x}(\tilde w) = I_{[a(t, x), b(t, x)]}(\tilde w)$.
Then, we can show that \begin{equation}
    \eta^*_\CMC(t, x) = \left( \frac{\pi(t|x)}{p_\obs(t|x)} \right)Q(t, x),
\end{equation}
where $Q(t, x)$ is defined as the $\tau(t, x)$-quantile of the conditional distribution of $Y$ given $T=t$ and $X=x$ 
for $ \tau(t, x) := \frac{1 - a(t, x)}{b(t, x) - a(t, x)}$.
From this dual solution, the primal solution can also be recovered using \eqref{eq:cmc_solution_characterization} as
\begin{equation}
    w^*_\CMC(y, t, x) =
    \begin{cases}
        b(t, x) & \text{ if \ \ }y \leq Q(t, x), \\
        a(t, x) & \text{ otherwise}.
    \end{cases}
\end{equation}
    See Appendix \ref{app:proof_box_constraint_analytical_solution} for details.
\end{example}

\subsection{Specification error}

Now we study the specification error of estimator $\hat V_\textinf^\KCMC(\pi)$, defined as $\left| V_\textinf^\KCMC(\pi) - V_\textinf^\CMC(\pi) \right|$.
It turns out that it is possible to upper bound the specification error if the dual objective satisfies a Lipschitz condition.

\begin{theorem}\label{th:specification_error_liptchitz_bound}
Let $\|f\|$ denote the $L^2(\cT\times\cX, p_\obs)$ norm for any $f\in L^2(\cT\times\cX, p_\obs)$.
Let $\Pi_{\bspsi}$  
be the projection operator onto the subspace spanned by $\{\psi_1, \ldots, \psi_D\}$ in $L^2(\cT\times\cX, p_\obs)$,
and let $\left(\eta^*_f, \eta^*_\CMC\right)$ be the solution of dual problem \eqref{eq:v_inf_conditional_moment_constraints_dual} for $V_\textinf^\CMC$.
Additionally, define convex functional $J:L^2(\cT\times\cX, p_\obs)\to\RR$ corresponding to the negative of dual objective \eqref{eq:v_inf_conditional_moment_constraints_dual}
for  $V_\textinf^\CMC$ with fixed $\eta^*_f$ as
\begin{equation}
J[h] := \eta^*_f \gamma - \EE[h] 
    + \eta^*_f \EE\left[
        f_{t, x}^*  \left( \frac{h(T, X) - r(Y, T, X)}{\eta^*_f} \right)
    \right].
\end{equation}
Let us choose $\{\psi_1, \ldots, \psi_D\}$ such that $\|\Pi_{\bspsi} \eta_\CMC^* - \eta_\CMC^*\| \leq \varepsilon_\text{spec}$ for some $\varepsilon_\text{spec}>0$.
Then, if $J[h]$ is $L$-Lipschitz in neighborhood $\{h:\ \| h - \eta^*_\CMC \| \leq \varepsilon_\text{spec}\}$, or equivalently, if functional subgradient $\partial J$ satisfies $\| \delta J \| \leq L$ for any $\delta J\in \bigcup_{h:\ \| h - \eta^*_\CMC \| \leq \varepsilon_\text{spec}}\partial J[h]$, we have 
\begin{equation}
    \left|V_\textinf^\CMC - V_\textinf^\KCMC \right| \leq L \|\Pi_{\bspsi} \eta_\CMC^* - \eta_\CMC^*\|.
\end{equation}
Moreover, even if $J[h]$ does not satisfy the $L$-Lipschitz condition, if 
\begin{equation}
    \eta_\CMC^* \in \mathrm{span}\left(\{\psi_1, \ldots, \psi_D\}\right),
    \label{eq:eta_cmc_in_kernel_subspace}
\end{equation}
we have no specification error so that
\begin{equation}
V_\textinf^\CMC(\pi) = V_\textinf^\KCMC(\pi).
\end{equation}
\end{theorem}

\begin{proof}
See Appendix \ref{app:proof_specification_error_lipschitz_bound}. 
\end{proof}

In Theorem \ref{th:specification_error_liptchitz_bound}, we provided the upper bound of the specification error in terms of the Lipschitz constant. 
A natural follow-up question would be whether it is possible to know such a constant.
Indeed, it is possible to calculate the Lipschitz constant in some cases.

\begin{example}[Lipschitz constant for box constraints]\label{ex:lipschitz_box}
Consider the same settings as Example \ref{ex:box_constraint_analytical_solution}.
Then, if the upper bound of the box constraint satisfies $b \in L^2(\cT\times\cX, p_\obs)$, the Lipschitz condition for Theorem \ref{th:specification_error_liptchitz_bound} is satisfied.
See Appendix \ref{app:proof_lipschitz_box} for details.
\end{example}

\begin{example}[Lipschitz constant for bounded conditional f-constraint]\label{ex:lipschitz_bounded_f}
Consider an extension of the previous example where we additionally impose a conditional f-constraint with $f^0_{t, x}$ so that $f_{t, x}(\tilde w):= I_{[a(t, x), b(t, x)]}(\tilde w) + f^0_{t, x}(\tilde w)$.
The uncertainty set under this constraint can be considered as the intersection of the uncertainty sets of the box constraints and the conditional f-constraint.
Interestingly, even for this constraint, we obtain the same Lipschitz constant as the previous example.
See Appendix \ref{app:proof_lipschitz_bounded_f} for details.
\end{example}

Note that the above examples provide a uniform Lipschitz constant for any policy $\pi$.
This is in contrast to Theorem \ref{th:specification_error_liptchitz_bound}, which only provide pointwise error bounds for fixed policy $\pi$ as it assumes that $\eta^*_\CMC$ is fixed.
Unfortunately, the uniform Lipschitz constant alone cannot provide a uniform bound on the specification error.
Bounding the specification error calls for a uniform bound $\|\Pi_{\bspsi} \eta^*_\CMC - \eta^*_\CMC\|$, but the derivation of such a bound is difficult.
Thus, in the policy learning, we have no choice but to optimize $V_\textinf^\KCMC$, which is only guaranteed to be lower than $V_\textinf^\CMC$.

However, in the policy evaluation, it is possible to provably reduce $\|\Pi_{\bspsi} \eta^*_\CMC - \eta^*_\CMC\|$  when we pick the orthogonal functions for kernel conditional moment constraints using the kernel principal component analysis (PCA) \citep{scholkopf1997kernel}.

\begin{lemma}[Convergence of $\|\Pi_{\bspsi} \eta^*_\CMC - \eta^*_\CMC\|$ with kernel PCA]\label{lemma:kpca_bias}
Let $k\left((t, x), (t', x')\right)$ be a kernel of feature $(T, X)$. 
Let $\bsphi^\KPCA(t, x):=\left(\phi^\KPCA_1(t, x), \ldots, \phi^\KPCA_D(t, x)\right)^T$ be the kernel principal components that approximates the original kernel as $k\left((t, x), (t', x')\right) \approx {\bsphi^\KPCA(t, x)}^T \bsphi^\KPCA(t', x')$ obtained by applying the kernel PCA to the observations of $\left\{(T_i, X_i)\right\}_{i=1}^n$.
Now, suppose the orthogonal functions are chosen as $\bspsi := \bsphi^\KPCA(t, x)$.
Then, if kernel $k\left((t, x), (t', x')\right)$ satisfies a set of regularity conditions \footnote{The formal version of this statement can be found in Appendix \ref{app:proof_kpca_bias}.}, we have 
\[
    \lim_{D\to\infty}\plim_{n\to\infty} \|\Pi_{\bspsi} \eta^*_\CMC - \eta^*_\CMC\| = 0,
\]
where $\plim$ denotes convergence in probability.
\end{lemma}
\begin{proof}
   See Appendix \ref{app:proof_kpca_bias}.
\end{proof}

Therefore, the kernel PCA can be a useful tool for obtaining an orthogonal basis function set with (close to) zero specification error. When using finite basis functions, manually crafting the basis functions satisfying such a condition is practically difficult. However, the use of kernel PCA allows us to gradually reduce the specification error as we increase the number of basis functions and the sample size, which enables us to achieve consistency under an appropriate choice of kernel thanks to Theorem \ref{th:specification_error_liptchitz_bound} and \ref{th:policy_evaluation_consistency}. In practice, the GIC from Example \ref{ex:gic} (or cross-validation) can be used to optimally balance the number of basis functions(i.e., kernel principal components) and the sample size.

Additionally, with the latter statement in Theorem \ref{th:specification_error_liptchitz_bound} on the no specification bias, 
we can derive the previously proposed sharp estimator by \citet{dorn2022sharp} as a special case.

\begin{example}[Derivation of quantile balancing estimator \citet{dorn2022sharp}]\label{ex:quantile_balancing_as_a_special_case}
Let us consider the box constraints as in Example \ref{ex:box_constraint_analytical_solution} and choose $a(t, x)$ and $b(t, x)$ to be the marginal sensitivity model \eqref{eq:tan_box_constraints} so that $\tau(t, x) = \tau = \frac{1}{1 + \Gamma}$.
Let $\hat Q(t, x)$ be a solution of $\tau$-quantile regression of $Y$ on $(T, X)$.
Then, by using a single basis function in the KCMC so that $D=1$ and $\psi_1(t, y)= \left(\frac{\pi(t|x)}{p_\obs(t| x)}\right)\hat Q(t, x)$, we recover the original quantile balancing (QB) estimator by \citep{dorn2022sharp}.
\footnote{
Technically, their estimator employs the linear fractional programming technique from \citet{zhao2019sensitivity}, but we consider the QB estimator without such a technique here.
}
Especially, if the estimated quantile equals the population quantile so that $\hat Q(t, x) = Q(t, x)$, both KCMC and QB estimators have no specification bias.

Furthermore, suppose quantile estimate $\hat Q(t, x)$ is estimated by a linear quantile regression with $D$-dimensional features.
Then, we can construct $D$-dimensional basis functions for the KCMC using the same features and guarantee that the KCMC estimate of the lower bound is tighter than or equal to that of the QB estimator.
\end{example}
\begin{proof}
    See Appendix \ref{app:proof_quantile_balancing_as_a_special_case}.
\end{proof}

\begin{remark}[Comparison of the KCMC estimator and the QB estimator]
As our estimator generalizes the previous work, our estimator overcomes some drawbacks of the quantile balancing estimators.
As discussed in Section \ref{chap:intro}, the quantile balancing estimator cannot handle policy learning and the f-constraint.
Policy learning is also difficult with the quantile balancing estimator because taking the derivative with respect to policy requires differentiability of the solution of the above linear programming with respect to parameter $\hat Q$.  
Moreover, the quantile balancing method is designed only for the box constraints and does not have a proper extension to the f-sensitivity model \eqref{eq:f_policy_uncertainty_set}.
In contrast, our estimator of sharper bound $V_\textinf^\CMC$ based on the kernel method can naturally handle the above-mentioned generalized cases of sensitivity analysis.
\end{remark}

\subsection{Consistency of policy evaluation}

Now, we study empirical estimator $\hat V_\textinf^\KCMC$ and provide convergence guarantees for policy evaluation and learning.
First, we prove the consistency of our estimator for fixed policy $\pi$ by reducing our problem to the M-estimation \citep{van2000empirical} using the dual formulation.

\subsubsection*{Consistency of policy evaluation}


To align our analysis with the classic framework of M-estimation, let us introduce loss function $\ell:\Theta\times\cZ\to\RR$, where $\Theta\subseteq\RR^K$ for some $K$ and $\cZ:=\cY\times\cT\times\cX$.
Then, the following consistency result holds:

\begin{theorem}[Consistency of policy evaluation (informal version)]\label{th:policy_evaluation_consistency}
Define the parameter space of $\left(\eta_f, \eta_\KCMC\right)$ as $\Theta \subseteq \RR_+\times\RR^D$.
Further, define 
$\theta^* :=(\eta^*_f, \eta^*_\KCMC)$ as the solution of dual problem \eqref{eq:v_inf_kernel_conditional_moment_constraints_dual} for $V_\textinf^\KCMC(\pi)$ and 
$\hat\theta_n :=(\hat\eta_f, \hat\eta_\KCMC)$ as the solution to dual problem \eqref{eq:empirical_v_inf_kernel_conditional_moment_constraints_dual} for $\hat V_\textinf^\KCMC(\pi)$.
Define $\ell:\Theta\times\cZ\to\RR$ as
\begin{equation}
    \ell_\theta (t, y, x) := \eta_f \gamma - {\eta_\KCMC}^T \bspsi(t, x)
    + \eta_f
        f_{t, x}^*  \left( \frac{{\eta_\KCMC}^T \bspsi(t, x) - r(y, t, x)}{\eta_f} \right)
    \label{eq:dual_loss_policy_evaluation}
\end{equation}
so that it is the negative version of dual objectives \eqref{eq:v_inf_kernel_conditional_moment_constraints_dual} and  \eqref{eq:empirical_v_inf_kernel_conditional_moment_constraints_dual} before taking the expectations.
Then, if $\ell_\theta (t, y, x)$ satisfies a set of regularity conditions\footnote{The formal version of this statement can be found in Appendix \ref{app:proof_policy_evaluation_consistency}.}, we have 
$\hat\theta_n\pto\theta^*$
and $\hat V_\textinf^\KCMC(\pi)  \pto V_\textinf^\KCMC(\pi)$.
\end{theorem}
\begin{proof}
    See Appendix \ref{app:proof_policy_evaluation_consistency}
\end{proof}


\subsection{Consistency of policy learning}

Here, we provide consistency guarantees for policy learning with the KCMC estimator for a finite dimensional concave policy class and a Vapnik-Chevonenkis (VC) policy class. 
Again, we take advantage of the reduction to the M-estimation.
This simplifies the proof compared to the one in \citet{kallus2021minimax} using the original nested max-min formulation because the max-max formulation we use is simple and unnested maximization, for which the well-studied theory of M-estimation can be immediately applied.

For both proofs, we define a new parameter space and a loss function.
Let us define parameter $\theta=\left(\beta, \eta_f, \eta_\KCMC\right)$, $\eta = \left(\eta_f, \eta_\KCMC\right)$ and its space $\Theta = \cB\times H$ and $H = \RR_+\times\RR^D$.
Define loss function $\ell:\Theta\times\cZ\to\RR$ as
\begin{equation}
    \ell_\theta (t, y, x) := \eta_f \gamma - {\eta_\KCMC}^T \bspsi(t, x)
    + \eta_f
        f_{t, x}^*  \left( \frac{{\eta_\KCMC}^T \bspsi(t, x) - \left(\frac{\pi_\beta(t|x)}{p_\obs(t|x)}\right)y}{\eta_f} \right)
    \label{eq:dual_loss_policy_learning}
\end{equation}
so that it is the negative version of dual objectives \eqref{eq:v_inf_kernel_conditional_moment_constraints_dual} and  \eqref{eq:empirical_v_inf_kernel_conditional_moment_constraints_dual} before taking the expectations.
Define also
$\theta^* :=(\eta^*_f, \eta^*_\KCMC, \beta^*)$ 
so that $\beta^* \in \arg\max_{\beta\in\cB} V_\textinf^\KCMC(\pi_\beta)$ 
and $\eta^* := \left(\eta^*_f, \eta^*_\KCMC\right)$ is the solution of dual problem \eqref{eq:v_inf_kernel_conditional_moment_constraints_dual}
for $V_\textinf^\KCMC$ at policy $\pi_{\beta^*}$. 
Similarly, define
$\hat\theta_n :=(\hat\eta_f, \hat\eta_\KCMC, \hat\beta)$ 
so that $\hat\beta \in \arg\max_{\beta\in\cB} \hat V_\textinf^\KCMC(\pi_\beta) $ 
and $\hat\eta := \left(\hat\eta_f, \hat\eta_\KCMC\right)$ is the solution of dual problem 
\eqref{eq:empirical_v_inf_kernel_conditional_moment_constraints_dual}
for $\hat V_\textinf^\KCMC$ at policy $\pi_{\hat\beta}$. 

\subsubsection*{Concave Policy Learning}

With the above definitions, we can now show the consistency of policy learning with a concave policy class.

\begin{theorem}[Consistency of concave policy learning (informal version)]\label{th:concave_policy_learning}
Assume concave policy class $\{\pi_\beta(t|x):\ \beta\in\cB\}$ with convex parameter space $\cB$ satisfying that $\beta\mapsto\pi_\beta(t|x)y$ is concave for any $y\in\cY$, $t\in\cT$ and $x\in\cX$.
Then, if loss function \eqref{eq:dual_loss_policy_learning} satisfies a set of regularity conditions\footnote{The formal version of this statement can be found in Appendix \ref{app:proof_concave_policy_learning}.},  we have
$\hat\theta_n\pto\theta^*$
and 
$\hat V_\textinf^\KCMC(\pi_{\hat\beta}) \pto V_\textinf^\KCMC(\pi_{\beta^*})$.
\end{theorem}
\begin{proof}
    See Appendix \ref{app:proof_concave_policy_learning}.
\end{proof}
An example of concave policy is mixed policy $\pi_\beta(t|x):=\sum_k\beta_k\pi_k(t|x)$ for $\sum_k\beta_k=1$, $\beta_k\geq0$. Indeed, policy learning with such a concave policy class is concave; therefore, the globally optimal policy can be found by convex optimization algorithms.

\subsubsection*{Vapnik–Chervonenkis Policy Learning}
Now, let us discuss the consistency of policy learning with a VC policy class.
While the proof for VC policy class is more involved than the concave policy class, we can prove a similar consistency guarantee for policy learning with a VC policy class:

\begin{theorem}[Consistency of VC policy learning (informal version)]\label{th:vc_policy_learning}
Let $\{\pi_\beta(t|x):\ \beta\in\cB\}$ be a class of policies that is VC.
Under a set of regularity conditions\footnote{The formal version of this statement can be found in Appendix \ref{app:proof_vc_policy_learning}.}, we have
$\hat V_\textinf^\KCMC(\pi_{\hat\beta}) \pto V_\textinf^\KCMC(\pi_{\beta^*})$.
\end{theorem}
\begin{proof}
    See Appendix \ref{app:proof_vc_policy_learning}.
\end{proof}
Here, we adopt the same definition of the VC function class as in \citet[Definition 6.4.1]{van2020empirical}, where they define it as the class of functions whose subgraphs collectively form a VC set.

\subsection{Asymptotic normality of policy evaluation}

Now, we consider the asymptotic normality in the policy evaluation settings. 

\begin{theorem}[Asymptotic normality (informal version)]\label{thm:asymptotic_normality}
Under a set of regularity conditions\footnote{The formal version of this statement can be found in Appendix \ref{app:proof_asymptotic_normality}.}, we have
\begin{equation}
    \sqrt{n}\left[\hat\EE_n \ell_{\hat\theta}(Z) - \hat\EE_n \ell_{\theta^*}(Z) \right] \pto 0,
\end{equation}
and
\begin{equation}
    \sqrt{n}\left[
        V (\hat\theta  - \theta^*) + \hat\EE_n \delta \ell_{\theta^*}(Z)
    \right]
    \pto 0.
\end{equation}
Therefore,
\begin{equation}
    \sqrt{n}[ \hat\theta  - \theta^*]
    \pto  V^{-1} \sqrt{n} \hat\EE_n [\delta\ell_{\theta^*}(Z)]
    \dto N(0, V^{-1}J V^{-1}),
\end{equation}
where $V$, $J$ are defined as $V:=\nabla_\theta \nabla_\theta^T \EE\ell_{\theta^*}(Z)$ and $J:=\EE\left[ \delta \ell_{\theta^*}(Z) \delta \ell_{\theta^*}(Z)^T \right]$. Convergence symbol $\dto$ implies the convergence in distribution.
\end{theorem}
\begin{proof}
    See Appendix \ref{app:proof_asymptotic_normality}.
\end{proof}

Now, let us consider a few applications of the above asymptotic result.

\begin{example}[Confidence interval]\label{ex:confidence_interval}
Under Condition \ref{cond:regularity_1} and Condition \ref{cond:regularity_3}, we know that empirical objective
$\sqrt{n}\hat\EE_n\ell_{\hat\theta}$ has the same asymptotic distribution as $\sqrt{n}\EE_n\ell_{\theta^*}$,
which is $N\left(\EE[\ell_{\theta^*}(Z)], \VV[\ell_{\theta^*}(Z)]\right)$.
Therefore, the confidence interval of the lower bound $V_\textinf^\KCMC(\pi) = -\EE[\ell_{\theta^*}(Z)]$ with significance level $\alpha$ is
\begin{equation}
    C_n^{V_\textinf^\KCMC} = \left[C_n^-, C_n^+ \right]
\end{equation}
for
\begin{align}
    C_n^\pm
    &= \EE[\ell_{\theta^*}(Z)] \pm \sqrt{\VV[\ell_{\theta^*}(Z)] / n} \cdot \Phi^{-1}(1 - \alpha / 2),
\end{align}
where
$\Phi(s)$ defined as the inverse of the cumulative density function of standard normal distribution $\Phi(s):= \int_{-\infty}^s \frac{1}{\sqrt{2\pi}} \exp\left(\frac{1}{2}t^2\right)dt$.
In practice, we can estimate $\EE[\ell_{\theta^*}(Z)]$ and $\VV[\ell_{\theta^*}(Z)]$
with the sample averages using the M-estimator in place of the true parameter as $\hat\EE_n[\ell_{\hat\theta}(Z)]$ and $\hat\VV_n[\ell_{\theta}(Z)]$.
\end{example}

\begin{example}[Generalized information criterion]\label{ex:gic}
In the previous example, we approximated true lower bound $\EE[\ell_{\theta^*}(Z)]$ with $\hat\EE[\ell_{\hat\theta}(Z)]$.
Though this approximation is a correct first order approximation, we can consider its second order correction as follows, as discussed in Appendix \ref{app:proof_gic}:
\begin{align}
    n\hat\EE_n[\ell_{\hat\theta}(Z) - \ell_{\theta^*}(Z)] 
    \pto - \frac{1}{2}\sqrt{n}\left(\hat \theta - \theta^*\right)^T V \sqrt{n} (\hat\theta - \theta^*).
\end{align}
Thus, the bias of $\hat\EE[\ell_{\hat\theta}(Z)]$ can be written as 
$-\EE\left[\frac{1}{2}\left(\hat \theta - \theta^*\right)^T V (\hat\theta - \theta^*)\right] = - \frac{1}{2n}\mathrm{tr}\left[V^{-1}J\right]$
since $n \cdot \EE\left[\left(\hat \theta - \theta^*\right)(\hat\theta - \theta^*)^T\right] \pto V^{-1}JV^{-1}$.
Using this second order bias correction, we can obtain the generalized information criterion (GIC) \citep{konishi2008information} 
\footnote{
Note that the original GIC corrects bias term 
$\hat\EE_n[\ell_{\hat\theta}(Z)] - \EE[\ell_{\hat\theta}(Z)] 
    \pto - \frac{1}{n} \mathrm{tr}\left[V^{-1}J\right]$,
which has multiplier $\frac{1}{n}$ instead of $\frac{1}{2n}$ of our bias adjustment term of the lower bound.
This is because the GIC corrects both underestimation bias of training risk
$\EE\left(\hat\EE_n[\ell_{\hat\theta}(Z)]\right) - \EE[\ell_{\theta^*}(Z)]$
and generalization error
$\EE[\ell_{\theta^*}(Z)] - \EE[\ell_{\hat\theta}(Z)]$
while our estimator only corrects the former.
}
of the lower bound as
\begin{equation}
    \hat V_{\textinf, \mathrm{GIC}}^\KCMC(\pi) = \hat V_\textinf^\KCMC(\pi) - \frac{1}{2n}\mathrm{tr}\left[\hat V^{-1}\hat J\right],
\end{equation}
where $\hat J:=\hat\EE_n \left[ \delta \ell_{\hat\theta}(Z) \delta \ell_{\hat\theta}(Z)^T \right]$
and $\hat V:= \hat \EE_n \left[ \nabla_\theta \nabla_\theta^T \ell_{\hat\theta}(Z)\right]$ when $\theta\mapsto\ell_\theta(z)$ is twice differentiable at $\theta=\theta^*$.
When the twice differentiability does not hold for the loss function, we need to derive the analytical form of Hessian $V$ and construct its estimator.
Indeed, the loss function is not pointwise differentiable in the case of box constraints, and the analytical form of its Hessian needs to be derived as in Appendix \ref{app:hessian_expected_loss}.
\end{example}

\begin{example}[Confidence interval with second order bias correction]\label{ex:confidence_interval_second_order_correction}
Using the second order bias correction above, we can obtain a new confidence interval of $V_\textinf^\KCMC(\pi) = -\EE[\ell_{\theta^*}(Z)]$ as
\begin{equation}
    \tilde C_n^{V_\textinf^\KCMC} = \left[\tilde C_n^-, \tilde C_n^+ \right]
\end{equation}
for
\begin{align}
    \tilde C_n^\pm
    &= \EE[\ell_{\theta^*}(Z)]
    - \frac{1}{2n}\mathrm{tr}\left[V^{-1}J\right]
    \pm \sqrt{\VV[\ell_{\theta^*}(Z)] / n} \cdot \Phi^{-1}(1 - \alpha / 2).
\end{align}
We examine the benefit of this second order bias correction in our numerical experiments in Section \ref{subsec:confidence_interval}.
\end{example}

\section{Numerical Experiments}\label{chap:experiments}

In this section, we present the results of our numerical experiments.
Our experiments demonstrate the applicability of the KCMC estimator in various settings such as sensitivity analysis with a continuous (treatment) action space, the new f-sensitivity model, policy learning, construction of confidence interval, and model selection. 

\subsection{Experimental settings}
In all the experiments except Section \ref{sec:lower_upper_bounds}, we use synthetic data with a binary action space sampled from the following data generating process:
\begin{align}
    X &\sim \mathcal{N}(\mu_x, \mathrm{I}_5), \\
    Y(0)|X &\sim \mathcal{N}(\beta_{x, 0}^T X + \beta_{\text{const}, 0}, 1), \\
    Y(1)|X &\sim \mathcal{N}(\beta_{x, 1}^T X + \beta_{\text{const}, 1}, 1), \\
    T|X &\sim \mathrm{Bern}\left(\frac{1}{1 + \exp(- \beta_t^T X)}\right), \\
    Y &= Y(T),
    \label{eq:experimental_data_binary}
\end{align}
where
\begin{align*}
    \mu_x &= (-1, 0.5, -1, 0, -1)^T, \\
    \beta_{x, 0} &= (0, 0.5, -0.5, 0, 0)^T, \\
    \beta_{x, 1} &= (-1.5, 1.5, -2, 1, 0.5)^T, \\
    \beta_{\text{const}, 0} &= 2.5,\\
    \beta_{\text{const}, 1} &= 0.5,\\
    \beta_t &= (0, 0.75, -0.5, 0, -1)^T.
\end{align*}
Though this model is not confounded at all, it has a partially analytical solution of the policy value lower bound for box constraints.
This semi-analytical lower bound can be computed using the Monte Carlo method, in a similar manner to the synthetic data in \citet[Corollary 3.]{dorn2022sharp}.
In the policy evaluation, we use logistic policy $\pi(T=1|X) = 1 / \left(1 + \exp(- \beta_\pi^T X)\right)$, where $\beta_\pi = (1, 0.5, -0.5, 0, 0)^T$, and we consider box constraints $\Gamma^{-1} \leq \frac{p_\obs(t|x) (1 - \pi_\base(t|x, u)}{(1 - p_\obs(t|x)) \pi_\base(t|x, u)} \leq \Gamma$ unless otherwise specified.

Similarly to this, we created a variation of the above synthetic data with continuous action space. In the continuous version, we replaced the fourth line and the fifth line of data generating process for the binary data \eqref{eq:experimental_data_binary} with 
\begin{align}
    T|X &\sim \cN\left(\frac{1}{1 + \exp(- \beta_t^T X)}, 1\right), \\
    Y(t) &:= (1 - t) Y(0) + t Y(1), \\
    Y &= Y(T)
    \label{eq:experimental_data_continuous}
\end{align}
so that reward $Y$ linearly depends on treatment $T$ given $Y(0)$ and $Y(1)$. We can think of this model as the case where partial treatment (i.e., $0 < T < 1$) is possible, and its outcome becomes an interpolation of no treatment $T=0$ and full treatment $T=1$.
In the policy evaluation, we use Gaussian policy $\pi(t|X) = \cN\left(t; \beta_\pi^T X, 0.25 \right)$.
For the continuous synthetic data, we consider box constraints $\Gamma^{-1} \leq \frac{\pi_\base(t|x, u)}{p_\obs(t|x)} \leq \Gamma$.

We also included the same real-world data as \citet{dorn2022sharp}, which is 668 subsamples of data from the 1966-1981 National Longitudinal Survey (NLS) of Older and Young Men. 
These subsamples consist of the 1978 cross-section of Young Men who are craftsmen or laborers and are not enrolled in school. 
For this data, we consider box constraints $\Gamma^{-1} \leq \frac{p_\obs(t|x) (1 - \pi_\base(t|x, u)}{(1 - p_\obs(t|x)) \pi_\base(t|x, u)} \leq \Gamma$.

Conditional probability $p_\obs(t|x)$ needed to construct the estimators was calculated from the true data generating process in the case of synthetic data, and it was estimated from the data using the logistic regression with covariate $X$ in the case of the real-world example.

To solve the convex programming involved in the above estimators, we used MOSEK \citep{aps2019mosek} and ECOS \citep{bib:Domahidi2013ecos} through the API of CVXPY \citep{diamond2016cvxpy}. Lastly, the experimental code necessary to reproduce the following results will be made fully available at \url{https://github.com/kstoneriv3/confounding-robust-inference}.

\subsection{Comparison of KCMC estimator to baselines}\label{sec:lower_upper_bounds}

\begin{figure}[htbp]
    \centering
    \includegraphics[width=0.7\linewidth, trim={40 10 40 10mm}, clip]{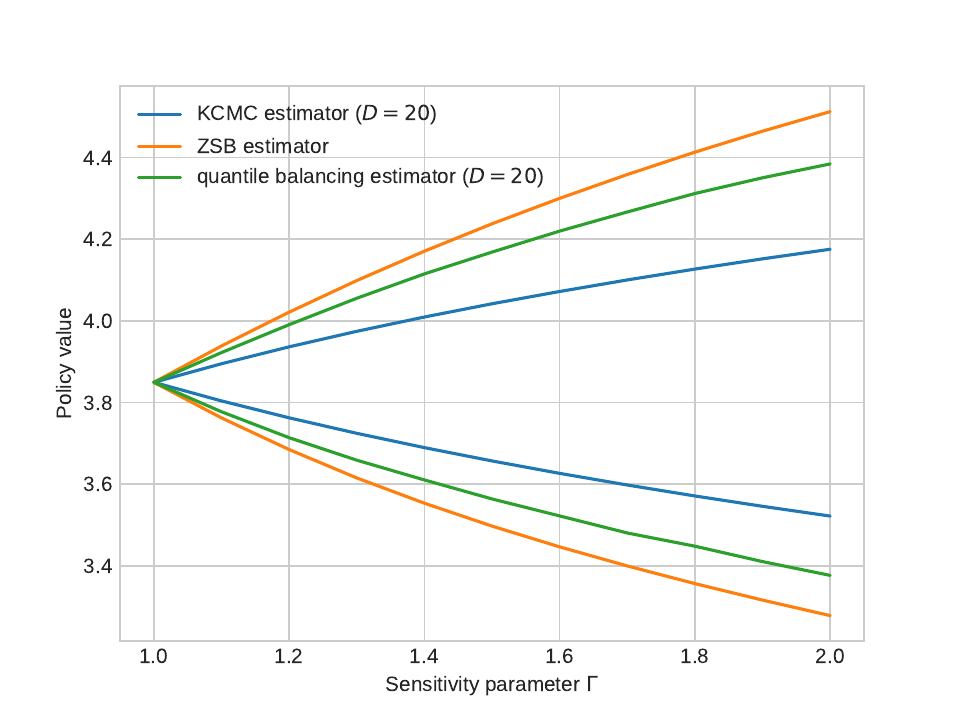}
    \caption{Estimated upper and lower bounds of policy value using different sensitivity parameter $\Gamma$ for the synthetic data of sample size 1000 with binary action space.}
    \label{fig:binary}
\end{figure}

\begin{figure}[htbp]
    \centering
    \begin{subfigure}[b]{0.49\textwidth}
    \centering
        \centering
        \includegraphics[width=\linewidth, trim={20 5 20 10mm}, clip]{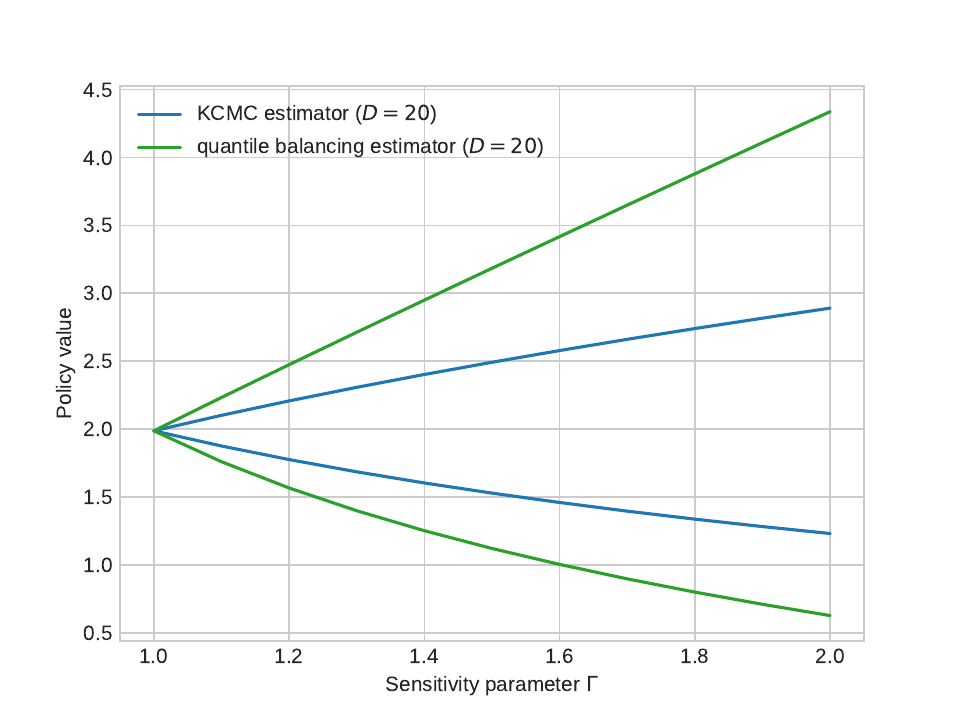}
    \end{subfigure}
    \begin{subfigure}[b]{0.49\textwidth}
        \centering
        \includegraphics[width=\linewidth, trim={20 5 20 10mm}, clip]{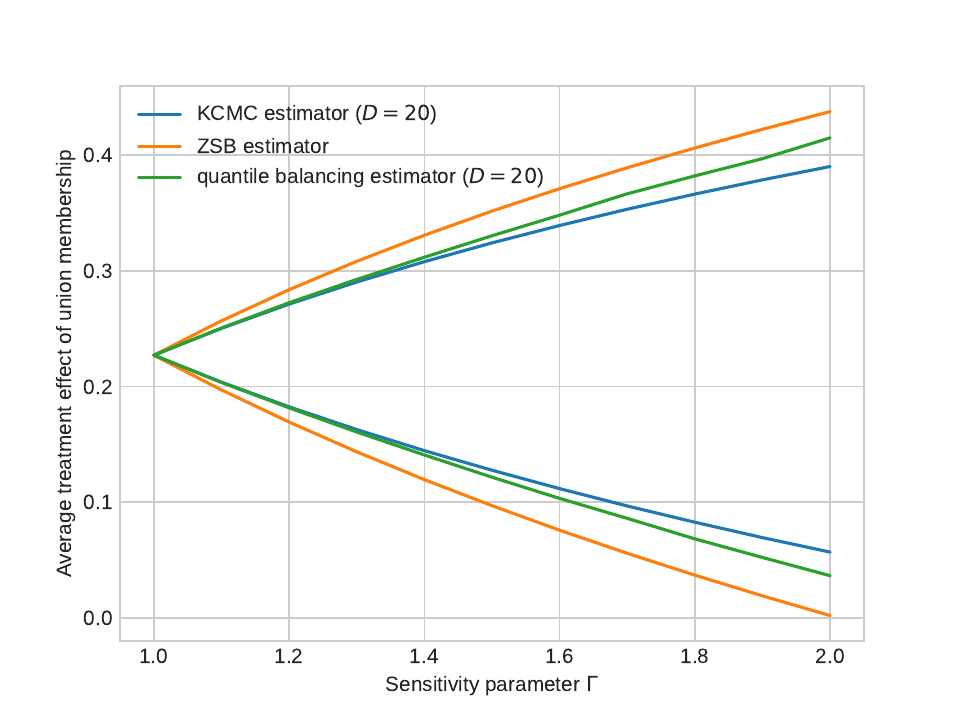}
    \end{subfigure}
    \caption{
        Estimated upper and lower bounds of policy value using different sensitivity parameters.
        Synthetic data of sample size 1000 with continuous action space (left) and the NLS data of sample size 668 with a binary action space (right) are used.}
         \label{fig:continuous_and_nls}
\end{figure}

Figure \ref{fig:binary} shows the upper and lower bounds of the policy value estimate for binary synthetic data for KCMC estimator \eqref{eq:empirical_v_inf_kernel_conditional_moment_constraints}, ZSB estimator \citep{zhao2019sensitivity}, and the quantile balancing (QB) estimator \citep{dorn2022sharp} \footnote{For ZSB estimator, use the empirical version of \eqref{eq:v_inf_zsb}. For both ZSB and the QB estimator, we do not use the linear fractional programming technique \citep{zhao2019sensitivity} to match the estimator values under no confounding for the purpose of comparison.} under different confounding levels $\Gamma$.
Here, fractional programming is not used in the ZSB estimator and the quantile balancing estimator in order to match the policy value with the KCMC estimator under no confounding (i.e., $\Gamma=1.0$).
The feature vectors for the QB estimator and the KCMC estimator were chosen so that they represent the Example \ref{ex:quantile_balancing_as_a_special_case}, where the first stage of the QB estimator solves the modified version of the dual problem for the KCMC estimator.
As discussed in Example \ref{ex:quantile_balancing_as_a_special_case}, the QB estimator is always no tighter than the KCMC estimator under such a choice of the feature vector, and it can clearly be observed in this example.
Indeed, we can see that the tightness of the bounds loosens as sensitivity parameter $\Gamma$ gets larger.
When compared to the sharp estimators (KCMC and QB), the ZSB estimator has looser bounds.

We also present the bounds of the policy values for the synthetic data with continuous action space and real-world data in Figure \ref{fig:continuous_and_nls}.
Please note that in the case of synthetic data with continuous action space, the ZSB estimator cannot be calculated as it requires that the action space be discrete and finite.
In both cases, the tightness of the bounds depends on the sensitivity parameter and the type of estimators in the same way as the case of the binary synthetic data.

\subsection{Confidence interval}\label{subsec:confidence_interval}
\begin{figure}[t]
    \centering
    \begin{subfigure}[b]{0.49\textwidth}
    \centering
        \includegraphics[width=\linewidth, trim={20 5 20 10mm}, clip]{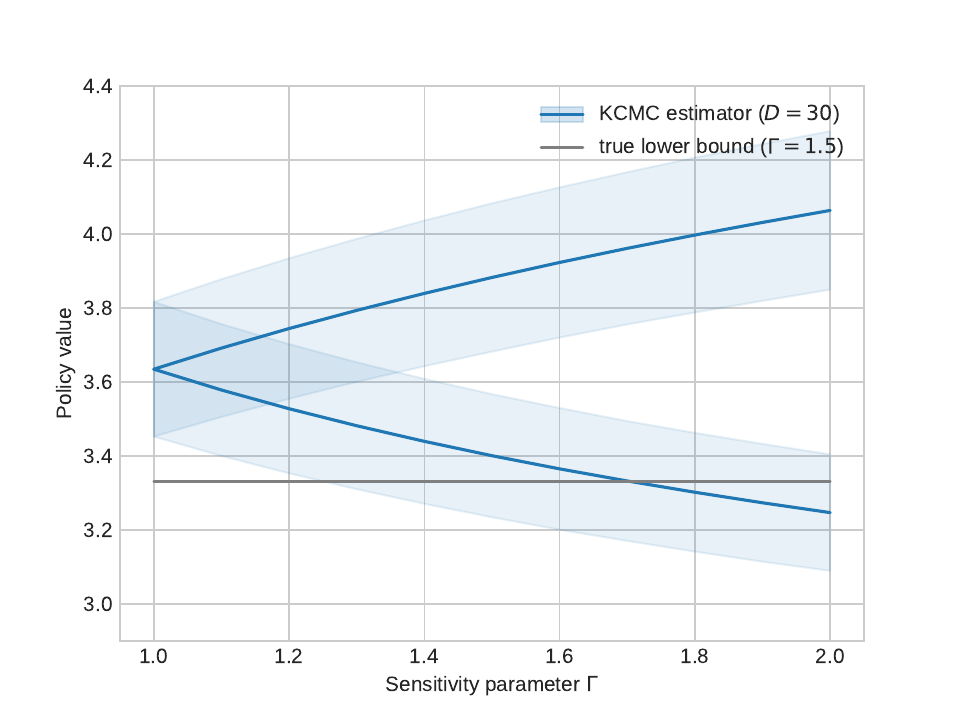}
    \end{subfigure}
    \begin{subfigure}[b]{0.49\textwidth}
         \centering
         \includegraphics[width=\linewidth, trim={20 5 20 10mm}, clip]{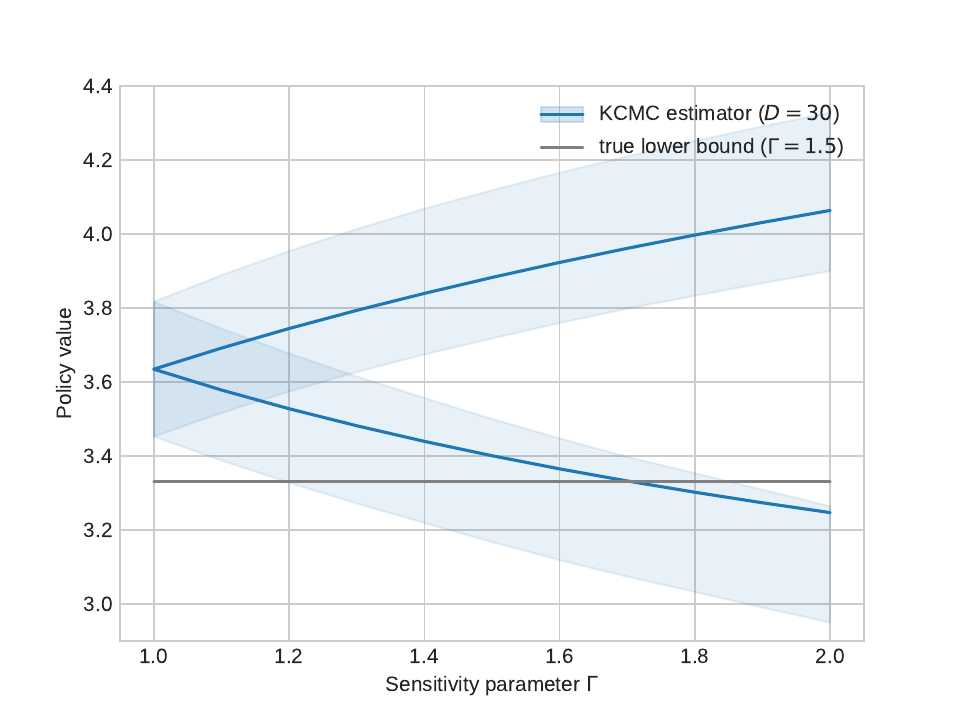}
    \end{subfigure}
    \caption{Estimated upper and lower bounds of policy value with 95\% confidence interval with (right) and without (left) second-order correction. Solid lines and the bands surrounding them indicate the point estimate (both without second order correction) and its confidence interval.}
     \label{fig:confidence_interval}
\end{figure}

Figure \ref{fig:confidence_interval} shows the confidence intervals of upper and lower bounds of policy values discussed in Example \ref{ex:confidence_interval} and \ref{ex:confidence_interval_second_order_correction}.
Since the second order debiasing term corrects the overly optimistic bounds due to the overfitting, the confidence intervals with the second order correction provide more pessimistic bounds.

\begin{figure}[t]
    \centering
    \begin{subfigure}[b]{0.49\textwidth}
    \centering
        \includegraphics[width=\linewidth, trim={20 5 20 10mm}, clip]{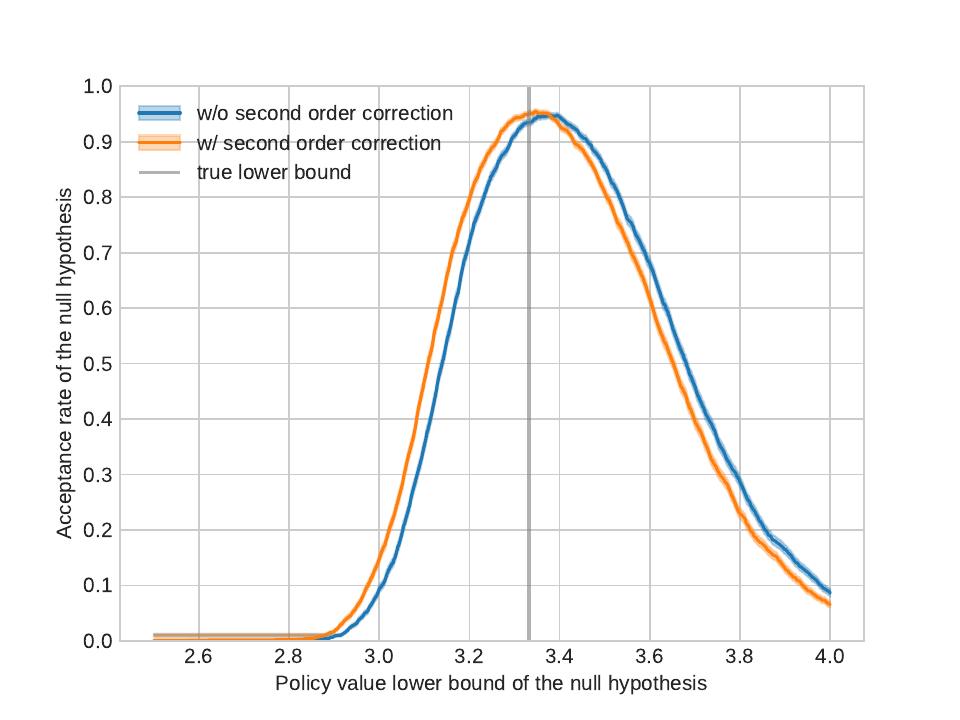}
        \label{fig:continuous}
    \end{subfigure}
    \begin{subfigure}[b]{0.49\textwidth}
         \centering
         \includegraphics[width=\linewidth, trim={20 5 20 10mm}, clip]{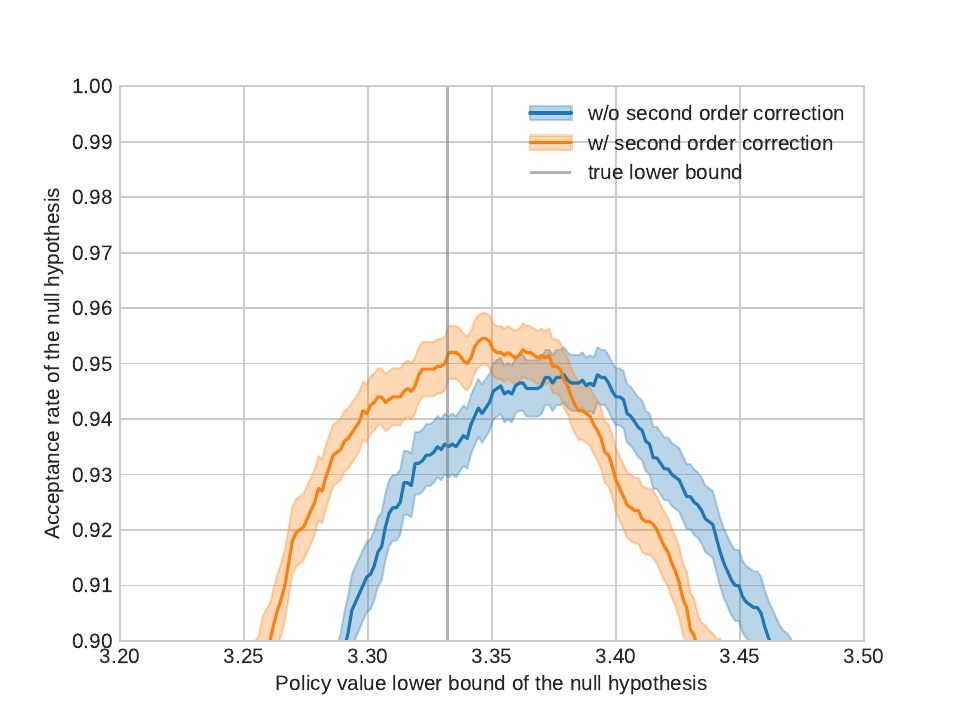}
         \label{fig:nls}
    \end{subfigure}
    \caption{Acceptance rate of the null hypothesis under different null hypothesises with Tan's box constraints ($\Gamma=1.5$) with significance level $\alpha=0.05$. The plot on the left is in the original size and the one on the right is its zoomed version.}
    \label{fig:acceptance_rate}
\end{figure}

In Figure \ref{fig:acceptance_rate}, we show the acceptance rate of the null hypotheses with different values of true lower bound.
The acceptance rate is calculated as the coverage rate of the confidence intervals for the synthetic data of sample size 2000, simulated 2000 times.
As noted earlier, this synthetic data has an analytically tractable true lower bound and it is indicated by the grey vertical line.
As can be expected, acceptance rates of both confidence intervals (with and without second order bias correction) reach their peaks near the true lower bound.
However, the confidence interval without second order bias correction has its peaks shifted towards the optimistic direction due to the overfitting in the dual problem and exhibits some level of overrejection at the true null hypothesis compared to its counterpart with the bias correction.

\subsection{Model Selection}

\begin{figure}[htbp]
    \centering
    \includegraphics[width=0.7\linewidth, trim={10 10 40 10mm}, clip]{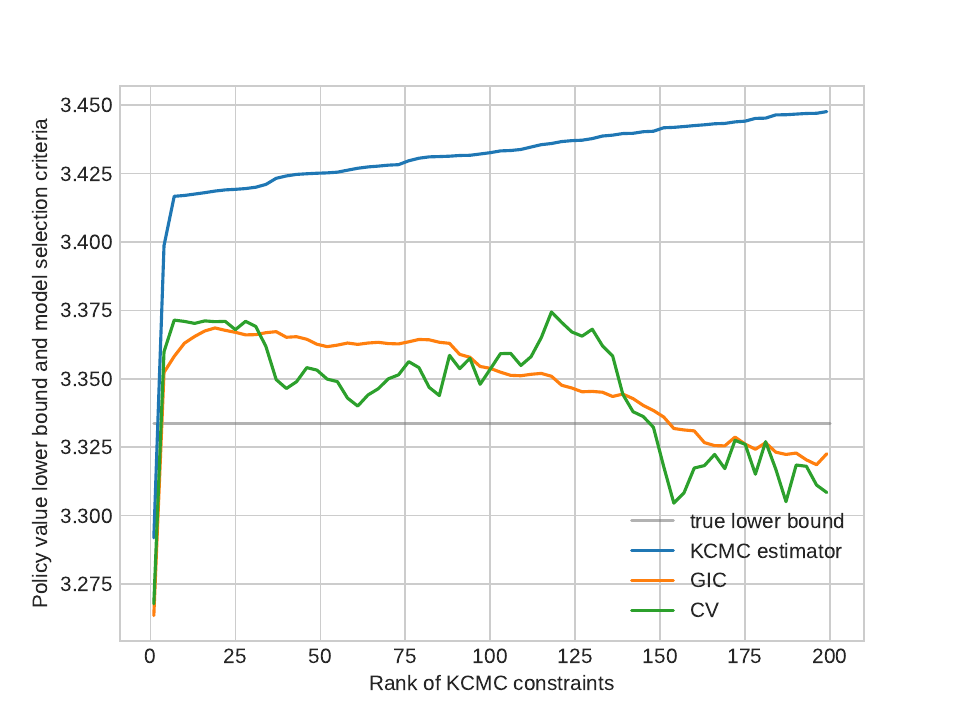}
    \caption{Estimated policy value lower bound for box constraints ($\Gamma=1.5$), its generalized information criterion, and its 10-fold cross validation (CV) for the synthetic data ($n=1000$) with binary action space.}
    \label{fig:model_selection}
\end{figure}

Here, we show an example use of the GIC discussed in Example \ref{ex:gic} for rank selection of the KCMC.
As the KCMC is a finite dimensional approximation of the infinite dimensional constraints, we have the motivation to increase the rank of the KCMC estimator.
On the other hand, the original KCMC lower bound increases monotonously as we increase the rank of KCMC constraints.
We can even recover the inverse probability weighting estimator\footnote{More precisely, the inverse probability weighting estimator that uses the $p_\obs(t|x)$ in place of the true propensity score.} by making the KCMC full-rank, but this nullifies the confounding robustness, so we would like to avoid using the excessively high rank.
This necessitates systematic procedures for selecting the appropriate rank of the KCMC.
Fortunately, our dual problem can be interpreted as a standard empirical risk minimization, where the rank selection can be interpreted as the model selection where the cross-validation and the GIC can be used as the selection criteria.

In Figure \ref{fig:model_selection}, we plotted the KCMC lower bound estimator, its GIC, and its cross validation. 
While the original estimator increases monotonically as the rank of the KCMC increases, the GIC and the cross validation start to decline when the rank of the constraints becomes excessively high.
In this example, both the GIC and the cross validation seem to prefer ranks around 20, though the cross validation tends to be more noisy and also prefers ranks of around 120.

\subsection{f-sensitivity model}

\begin{figure}[htbp]
     \centering
     \begin{subfigure}[b]{0.49\textwidth}
     \centering
        \includegraphics[width=\linewidth, trim={20 5 20 5mm}, clip]{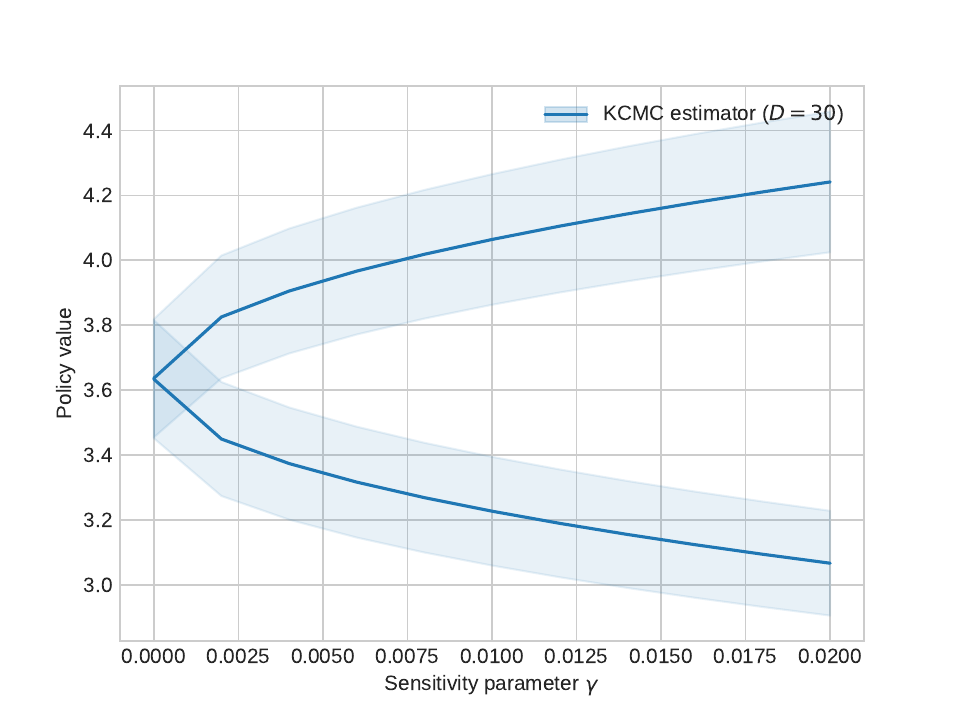}
         \caption{KL sensitivity model}
     \end{subfigure}
     \begin{subfigure}[b]{0.49\textwidth}
     \centering
        \includegraphics[width=\linewidth, trim={20 5 20 5mm}, clip]{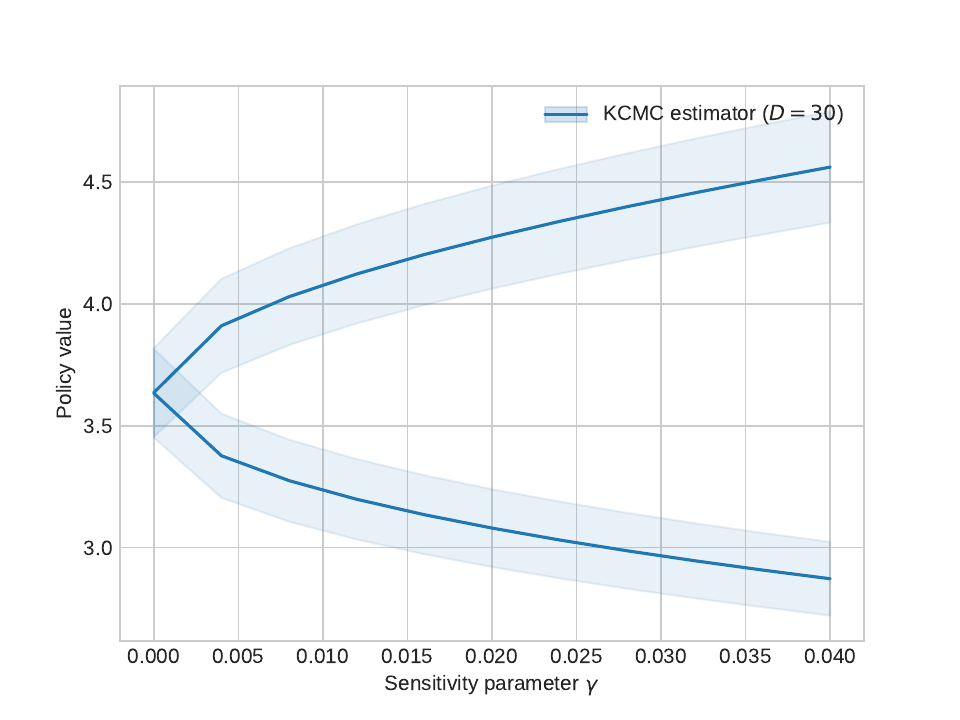}
         \caption{Reverse KL sensitivity model}
     \end{subfigure}
     \begin{subfigure}[b]{0.49\textwidth}
     \centering
        \includegraphics[width=\linewidth, trim={20 5 20 5mm}, clip]{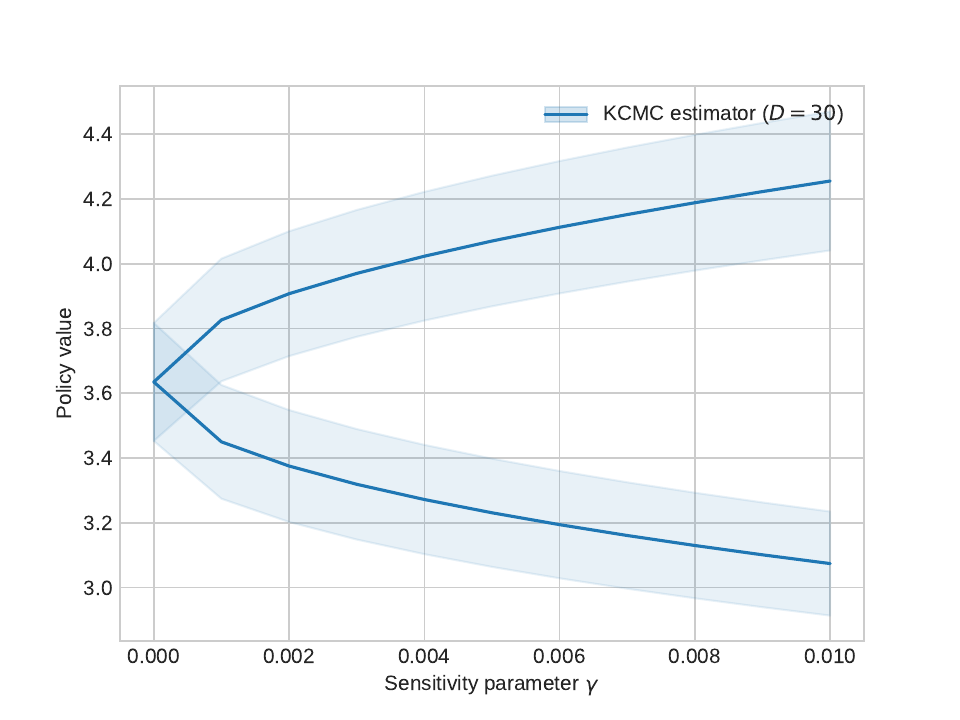}
         \caption{Squared Hellinger sensitivity model}
     \end{subfigure}
     \begin{subfigure}[b]{0.49\textwidth}
     \centering
        \includegraphics[width=\linewidth, trim={20 5 20 5mm}, clip]{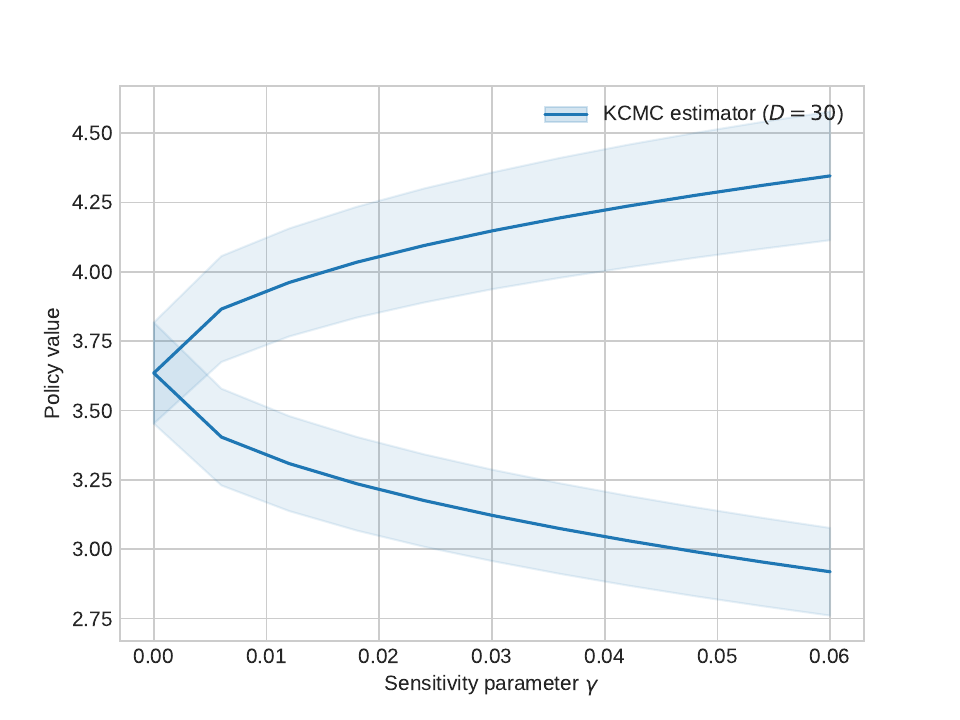}
         \caption{Pearson $\chi^2$ sensitivity model}
     \end{subfigure}
     \begin{subfigure}[b]{0.49\textwidth}
     \centering
        \includegraphics[width=\linewidth, trim={20 5 20 5mm}, clip]{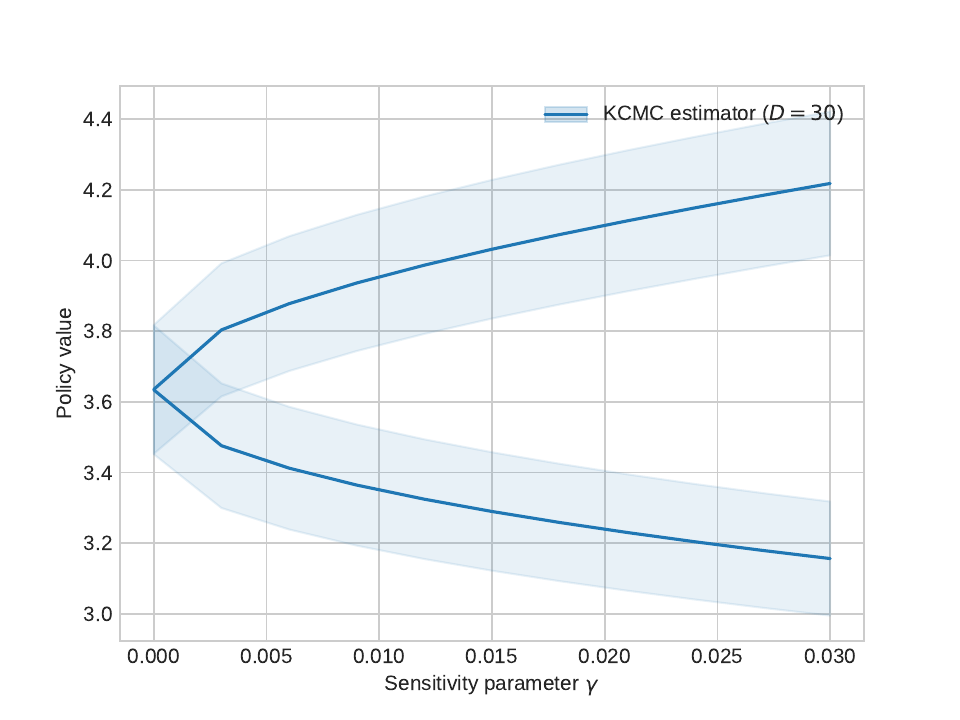}
         \caption{Neyman $\chi^2$ sensitivity model}
     \end{subfigure}
     \begin{subfigure}[b]{0.49\textwidth}
     \centering
        \includegraphics[width=\linewidth, trim={20 5 20 5mm}, clip]{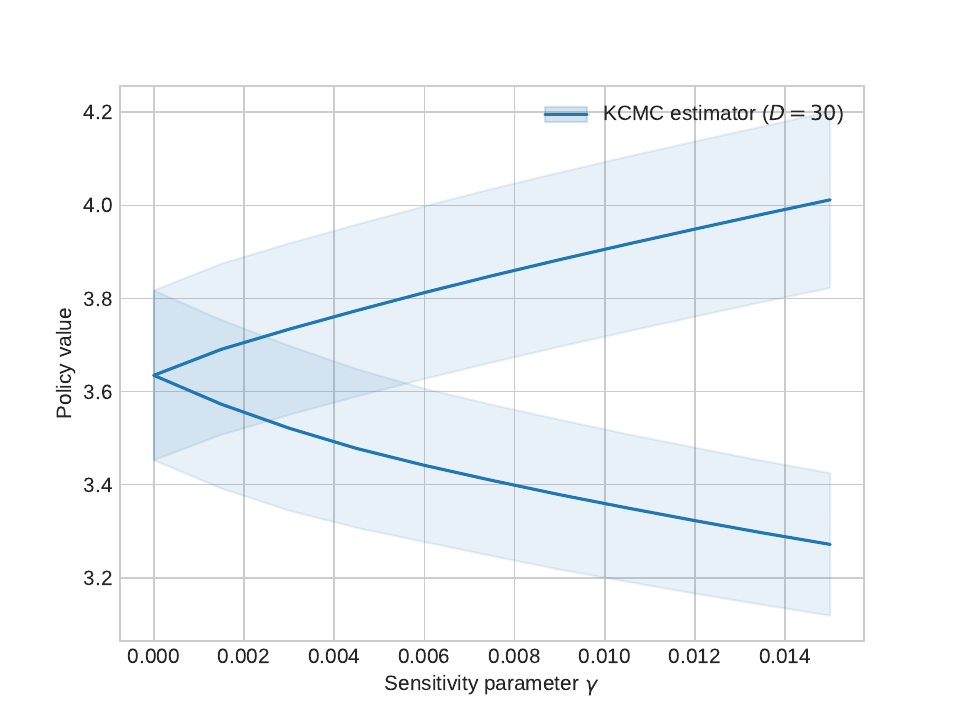}
         \caption{Total variation sensitivity model}
     \end{subfigure}
    \caption{Estimated upper and lower bounds of policy value and their confidence intervals using f-sensitivity models. The synthetic data of sample size 4000 with binary action space is used.}
    \label{fig:f_sensitivity_models}
\end{figure}
In Figure \ref{fig:f_sensitivity_models}, we present the upper/lower bounds and confidence intervals of the policy value given by the KCMC estimators of f-sensitivity models.
Similarly to the example of box constraints, we can see that the f-sensitivity models can produce a continuous control of the level of confounding by the sensitivity parameter.
We can also confirm that the confidence intervals behave quite similarly to the case of box constraints.

\subsection{Policy learning}

Finally, we consider the max-min policy learning with the KCMC estimator.
In the policy learning, we use the gradient ascent on the lower bound estimator similarly to \citet{kallus2018confounding, kallus2021minimax}.\footnote{By Danskin's theorem \citep{danskin1966theory}, we can calculate the gradient for outer maximization as the gradient at the solution of the inner maximization problem.}
As a baseline, we also implemented policy learning with the ZSB estimator using the same dataset and initial policy.
In Figure \ref{fig:differentiable_policy_learning}, we present the learning curves of both estimators.
As can be expected, after the max-min policy learning, the KCMC estimator achieves a higher lower bound than the ZSB estimator, both on the training data and the test data.

\begin{figure}[htbp]
    \centering
    \begin{subfigure}[b]{0.49\textwidth}
    \centering
        \includegraphics[width=\linewidth, trim={20 5 20 10mm}, clip]{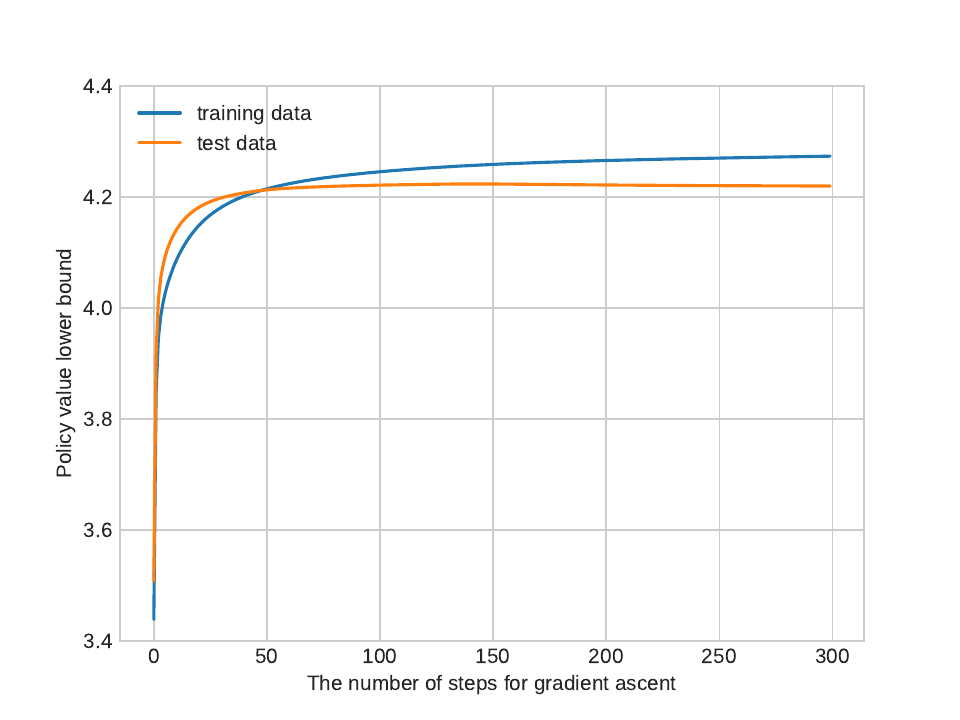}
    \end{subfigure}
    \begin{subfigure}[b]{0.49\textwidth}
         \centering
         \includegraphics[width=\linewidth, trim={20 5 20 10mm}, clip]{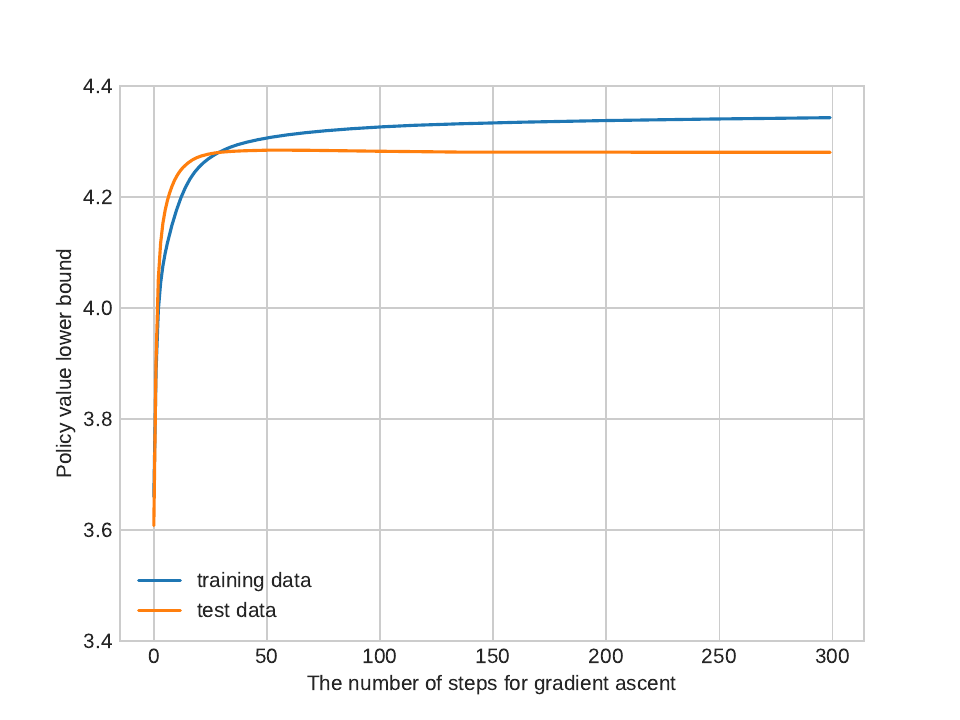}
    \end{subfigure}
    \caption{Learning curves of the max-min policy learning of logistic policy by the ZSB estimator (left) and the KCMC estimator (right). Both training and test data are binary synthetic data of sample size $1000$. The training curve and the test curve represent the values of each estimator for the data.}
     \label{fig:differentiable_policy_learning}
\end{figure}

\section{Conclusion}\label{chap:conclusion}


In this paper, we proposed a novel framework of the sensitivity analysis using kernel conditional moment constraints.
Together with the newly proposed conditional f-constraints, this approach provides a unified formulation of sensitivity analysis that encompasses a broad class of existing models and extends its applicability to previously intractable settings.
From a theoretical perspective, we exploited convexity properties to recast our estimation procedure as an empirical risk minimization problem and established consistency guarantees for both policy evaluation and policy learning using standard tools from M-estimation theory.
Finally, we demonstrated the practical effectiveness of the proposed kernel conditional moment constraints through numerical experiments, including several novel extensions of sensitivity analysis.

\clearpage
\appendix
{\LARGE{\textbf{Appendix}}}

\section{Notations}\label{app:notations}

\begin{table}[h]
    \centering
    \begin{tabular}{ll}
        \hline
        notation & meaning \\
        \hline
        $Y\in\cY$ & reward (outcome) \\
        $T\in\cT$ & action (treatment) \\
        $X\in\cX$ & context \\
        $U\in\cU$ & unobserved confounder \\
        $\pi_\base(t| x, u)$ & unknown (and confounded) base policy from which the data is generated \\
        $\pi(t|x)$ & target policy for policy evaluation (or learning) \\
        $p_\obs(y, t, x)$ & marginal distribution of $(Y, T, X)$ under data generating process \eqref{eq:data_gen_base_confounded} \\
        $\EE, \EE_\obs$ & expectation under data generating process \eqref{eq:data_gen_base_confounded} \\ 
        $\hat\EE_n$ & empirical average over $n$ samples generated from \eqref{eq:data_gen_base_confounded} \\ 
        $\cE$ & uncertainty set of $\pi_\base(t| x, u)$\\
        $f_{t,x}(v)$ & a convex function w.r.t. $v$ for fixed $t, x$ satisfying $f_{t, x}(1) = 1$ \\
        $f^*_{t,x}(u)$ & convex conjugate of $f_{t, x}(v)$ \\
        $w(y, t, x)$ & reparametrization $w(y, t, x)= \EE_{T\sim\pi_\base(\cdot|X, U)}\left[\frac{1}{\pi_\base(T|X, U)}|Y=y, T=t, X=x\right]$ \\
        $\tilde w(y, t, x)$ & reparametrization $\tilde w(y, t, x) = p_\obs(t|x)w(y, t, x)$ \\
        $\cW$ & uncertainty set of $w(y, t, x)$, such as $\cW^\KCMC_{f_{t, x}}$ \\
        $\hat\cW$ & uncertainty set of $w(y, t, x)$ defined using empirical samples, such as $\hat\cW^\KCMC_{f_{t, x}}$\\
        $V(\pi)$ & value of policy $\pi(t|x)$ under confounded data generating process \eqref{eq:data_gen_eval_confounded} \\
        $V_\textinf(\pi)$ & lower bound of the value of policy $\pi(t|x)$ under confounded data generating process \eqref{eq:data_gen_eval_confounded} \\
        $\hat V_\textinf(\pi)$ & estimator of $V_\textinf(\pi)$ using empirical samples \\
        $\psi_d(t, x)$ & $d$-th orthogonal function of kernel conditional moment constraints\\
        $\phi^\KPCA_d(t, x)$ & $d$-th principal component of kernel PCA fitted to empirical data \\
        $\phi^*_d(t, x)$ & $d$-th principal component of kernel PCA fitted to population (i.e. infinite data) \\
        $\lambda^*_d$ & the eigenvalue of $\phi^*_d(t, x)$\\
        $n$ & the number of samples \\
        $D$ & the number of kernel conditional moment constraints \\
        $e(y, t, x)$  & reparametrization $e(y, t, x) := p_\obs(t|x)w(y, t, x) - 1$ \\
        $r(y, t, x)$ & reparametrization $r(y, t, x) = \left(\frac{\pi(t|x)}{p_\obs(t|x)}\right)\cdot y$ \\
        $\ell_\theta(z)$ & loss function of dual problems \eqref{eq:v_inf_kernel_conditional_moment_constraints_dual} and \eqref{eq:empirical_v_inf_kernel_conditional_moment_constraints_dual}. \\
        $\partial_\theta \ell_\theta(z)$ & subgradient of the loss function w.r.t. $\theta$ \\ 
        $Z$ & shorthand notation for observable variables $(Y, T, X)$ \\
        $\theta\in\Theta$ & parameters of dual loss $(\eta_f, \eta_\KCMC)$ or $(\eta_f, \eta_\KCMC, \beta)$ \\
        $\theta^*$ & minimizer of population risk $\EE\ell_\theta(Z)$\\
        $\hat\theta_n$ & minimizer of empirical risk $\hat\EE_n\ell_\theta(Z)$\\
        $\Theta_\varepsilon$ & 
        $\varepsilon$-neighborhood of $(\eta^*(\beta), \beta)$ of the partially optimal parameters in policy learning \\
        \hline
    \end{tabular}
    \caption{List of common notations used throughout the paper.}
    \label{tab:notations}
\end{table}

\section{An Alternative f-Sensitivity Model by Jin et al. (2022)}\label{app:comp_jin_f_sensitivity}
\citet{jin2022sensitivity} proposed a similar uncertainty set that approximates the condition
\[
    D_f\left[p\left(Y(1)|X, T=1\right) || p\left(Y(1)|X, T=0\right)\right] \leq \gamma  \text{ almost surely},
\]
where variable $Y(1)$ is the potential outcome variable for treatment $T=1$ in Rubin's potential outcome framework \citep{rubin2005causal} with binary treatment.
Under the assumption of unconfoundedness, the potential outcome variable $Y(1)$ must satisfy $Y(1) \indep T|X$, and thus it must satisfy 
$D_f\left[p\left(Y(1)|X, T=1\right) || p\left(Y(1)|X, T=0\right)\right]=0$
almost surely with respect to $p_\obs$.
Their sensitivity model can be interpreted as a relaxation of this assumption by allowing the violation of it up to $\gamma$.
To highlight the difference in modeling paradigm, our f-sensitivity model follows the same modeling framework as Tan (2006), which takes into account the difference between observational policy $p_\obs(t|x)$ and underlying confounded policy $\pi_\base(t|x, u)$.
On the other hand, the model by \citet{jin2022sensitivity} considers the distributional shift between observation $Y(1)|X = x, T = 1$ and counterfactual $Y(1)|X = x, T = 0$, and therefore, their modeling approach is different from Tan (2006) and its extension.

\section{Omitted Proofs}\label{app:omitted_proofs}
\subsection{Details of Example \ref{ex:box_constraint_analytical_solution}}\label{app:proof_box_constraint_analytical_solution}
Here, we consider the box constraints corresponding to \eqref{eq:f_box_constraints} of the conditional f-constraint.
For function $f_{t, x}(\tilde w) = I_{[a(t, x), b(t, x)]}(\tilde w)$, we can derive its conjugate and its subgradient as
\begin{equation}
    f_{t, x}^*(v) = \begin{cases}
        a(t, x)v & \text{ if \ \ }v < 0, \\
        0 & \text{ if \ \ }v = 0, \\
        b(t, x)v & \text{ if \ \ }v > 0,
    \end{cases}
\end{equation}
and 
\begin{equation}
    \partial f_{t, x}^*(v) = \begin{cases}
        a(t, x) & \text{ if \ \ }v < 0, \\
        [a(t, x), b(t, x)] & \text{ if \ \ }v = 0, \\
        b(t, x) & \text{ if \ \ }v > 0.
    \end{cases}
\end{equation}
Substituting the above expression of $\partial f_{t, x}^*$ in the first order condition of \eqref{eq:v_inf_conditional_moment_constraints_dual_zero_eta_f}, we can derive more explicit expression
\begin{align}
1
&=\EE\left[\partial f_{T, X}^*\left(\eta_\CMC - r\right)|T=t, X=x\right] \\
&= \PP(r < \eta_\CMC(t, x)) \cdot b(t, x)
        + \PP(r = \eta_\CMC(t, x)) \cdot [a(t, x), b(t, x)]
        + \PP(r > \eta_\CMC(t, x)) \cdot a(t, x).
\end{align}
Here, we used the box constraints' property $a(t, x) \leq 1 \leq b(t, x)$, which follows from the requirement that $f_{t, x}$ must satisfy $f_{t, x}(1)=0$ for any $t\in\cT$ and $x\in\cX$.
Assuming that the conditional distribution of $r(Y, T, X)$ given $T=t$ and $X=x$ yields continuous distribution for any $t\in\cT$ and $x\in\cX$, the second subgradient condition becomes
\begin{equation}
1 = \PP(r \leq \eta_\CMC(t, x)) \cdot b(t, x)
        + \PP(r > \eta_\CMC(t, x)) \cdot a(t, x).
\end{equation}
This implies that $\PP(r \leq \eta_\CMC(t, x)) = \frac{1 - a(t, x)}{b(t, x) - a(t, x)}=: \tau(t, x)$, and therefore,
\begin{equation}
    \eta^*_\CMC(t, x) = \left( \frac{\pi(t|x)}{p_\obs(t|x)} \right)Q(t, x),
\end{equation}
where $Q(t, x)$ was defined as the $\tau(t, x)$-quantile of the conditional distribution of $Y$ given $T=t$ and $X=x$.
From this dual solution, the primal solution can also be recovered using \eqref{eq:cmc_solution_characterization} as
\begin{equation}
    w^*_\CMC(y, t, x) =
    \begin{cases}
        b(t, x) & \text{ if \ \ }y \leq Q(t, x), \\
        a(t, x) & \text{ otherwise}.
    \end{cases}
\end{equation}

\subsection{Proof of Theorem \ref{th:specification_error_liptchitz_bound}}\label{app:proof_specification_error_lipschitz_bound}
\begin{proof}
As $\Pi_{\bspsi} \eta_\CMC^*$ is on the subspace spanned by $\{\psi_1, \ldots, \psi_D\}$, we can take $\eta^*_\KCMC\in\RR^D$ such that ${\eta^*_\KCMC}^T\bspsi = \Pi_{\bspsi} \eta_\CMC^*$.
Then, due to the fundamental theorem of calculus, we have 
\begin{align}
    J[{\eta^*_\KCMC}^T\bspsi]
    &= J[\eta_\CMC] + \int_{\eta^*_\CMC}^{{\eta^*_\KCMC}^T\bspsi} \sup_{\delta J \in \partial J[h]} \langle \delta J, \rd h \rangle \\
    &= J[\eta_\CMC] + \int_0^1 \sup_{\delta J \in \partial J[\eta_\CMC + s \Delta \eta]} \langle \delta J, \Delta\eta \rangle \rd s \\
    &\leq J[\eta_\CMC] + \int_0^1 \sup_{\delta J \in \partial J[\eta_\CMC + s \Delta \eta]} \| \delta J \| \cdot \|\Delta\eta \|\rd s \\
    &\leq J[\eta_\CMC] + L\|\Delta \eta\|
\end{align}
for $\Delta \eta:= {\eta_\KCMC^*}^T\bspsi - \eta_\CMC = \Pi_{\bspsi} \eta_\CMC^* - \eta_\CMC^*$.
As  $V_\textinf^\CMC = - J[\eta^*_\CMC]$
and $V_\textinf^\KCMC
    \geq \sup_{\eta_\KCMC\in\RR^D}\left\{-J[{\eta_\KCMC}^T\bspsi]\right\}
    \geq - J[{\eta^*_\KCMC}^T\bspsi]$,
we get
\begin{equation}
    V_\textinf^\KCMC \leq V_\textinf^\CMC \leq V_\textinf^\KCMC + L \|\Pi_{\bspsi} \eta_\CMC^* - \eta_\CMC^*\|.
\end{equation}

To prove the latter statement on no specification error, we take multiplier $\eta_\KCMC^*$ that satisfies $\eta_\CMC^* = {\eta_\KCMC^*}^T \bspsi$.
Then, we can see that dual problem \eqref{eq:v_inf_kernel_conditional_moment_constraints_dual} for $V_\textinf^\KCMC$ can be considered as the restricted version of dual problem \eqref{eq:v_inf_conditional_moment_constraints_dual} for $V_\textinf^\CMC$
where $\eta_\CMC$ is constrained to the subspace spanned by $\{\psi_1, \ldots, \psi_D\}$.
Therefore, as restricted solution $\left(\eta^*_\KCMC, \ \eta^*_f\right)$ achieves the same value as the solution of the non-restricted problem, it is clearly a solution of restricted problem \eqref{eq:v_inf_kernel_conditional_moment_constraints_dual}.
Finally, owing to the strong duality, we can calculate the values of $V_\textinf^\CMC$ and $V_\textinf^\KCMC$ by the values of the dual problems, which implies $V_\textinf^\CMC = V_\textinf^\KCMC$.
\end{proof}

\subsection{Details of Example \ref{ex:lipschitz_box}}\label{app:proof_lipschitz_box}
As discussed previously, in the case of box constraints, $\partial f^*$ is bounded so that $\partial f_{t, x}^*\subseteq [a, b]$.
Here, we omitted the subscripts and the arguments as $a = a_{\tilde w}(t, x)$ and $b = b_{\tilde w}(t, x)$ for notational simplicity.
As $a$ and $b$ satisfy $a \leq 1 \leq b$, due to requirement $f_{t, x}(1)=0$, we get the following explicit formula for the Lipschitz constant:
\begin{align*}
L_{a, b} 
&= \sup\left\{
\| \delta J \|:\ \delta J\in \bigcup_{h:\ \| h - \eta^*_\CMC \| \leq \varepsilon_\text{spec}}\partial J[h]
\right\} \\
&= \sup\left\{
\| \delta J \|:\ \delta J\in \bigcup_{h:\ \| h - \eta^*_\CMC \| \leq \varepsilon_\text{spec}} \EE\left[\partial f_{T, X}^*\left(\frac{h - r}{\eta_f}\right)|T=t, X=x\right] - 1
\right\} \\
&\leq \sup\left\{
\| \delta J \|:\ \delta J\in L^2(\cT\times\cX, p_\obs), a(t, x) - 1\leq \delta J(t, x) \leq b(t, x) - 1
\right\} \\
&= \left(\EE\left|\max\left\{1 - a(T, X), b(T, X) - 1\right\}\right|^2\right)^{1/2}.
\end{align*}
As we know $0 \leq a \leq b$, we can see that $b \in L^2(\cT\times\cX, p_\obs)$ is the sufficient condition to have finite Lipschitz constant $L_{a, b}$.

\subsection{Details of Example \ref{ex:lipschitz_bounded_f}}\label{app:proof_lipschitz_bounded_f}
To show that we can still apply the same argument as Example \ref{ex:lipschitz_box}, we can calculate the conjugate and the subgradient of the conjugate as
\[
  f^*_{t, x}(v) 
  = \sup_{u\in[a, b]}\{vu - f^0_{t, x}(u)\}
  = \begin{cases}
      av - f^0_{t, x}(a) & \text{ if \ } v < \partial f^0_{t, x}(a), \\
      {f^0_{t,x}}^*(v) & \text{ if \ } v \in \partial f^0_{t, x}([a, b]), \\
      bv - f^0_{t, x}(b) & \text{ if \ } v > \partial f^0_{t, x}(b),
  \end{cases}
\]
and
\[
  \partial f^*_{t, x}(v) 
  = \begin{cases}
      a & \text{ if \ } v < \partial f^0_{t, x}(a), \\
      \partial {f^0_{t,x}}^*(v) & \text{ if \ } v \in \partial f^0_{t, x}([a, b]), \\
      b & \text{ if \ } v > \partial f^0_{t, x}(b).
  \end{cases}
\]
for $\partial f^0_{t, x}([a, b]):=\bigcup_{u\in[a, b]} \partial f^0_{t, x}(u)$.
As we know that $f^*$ is convex, $\partial f^*$ is non-decreasing, and therefore, we have $\partial f^* \subseteq [a, b]$.
As the subgradient of $f^*_{t, x}$ is similarly bounded, we can obtain the same Lipschitz constant as Example \ref{ex:lipschitz_box}.

\subsection{Formal Statement and Proof of Lemma \ref{lemma:kpca_bias}}\label{app:proof_kpca_bias}
The informal statement in Lemma \ref{lemma:kpca_bias} can be formally stated as follows:

\begin{lemma}
Let us consider solution $\eta^*_\CMC \in L^2(\cT\times\cX, p_\obs):\cT\times\cX\to\RR$ for some fixed policy.
Define the orthogonal function class as the principal components obtained by applying the kernel PCA to the data so that $\bspsi := \bsphi^\KPCA = \left(\phi^\KPCA_1, \ldots, \phi^\KPCA_D\right)$.
Suppose that kernel $k(\cdot, \cdot)$ used by the kernel PCA is universal in $L^2(\cT\times\cX, p_\obs)$ and has spectral decomposition
$k = \sum_{d=1}^D \lambda^*_d \phi^*_d \otimes \bar\phi^*_d$,
for $\lambda_1^* \geq \lambda_2^* \geq \ldots \geq 0$.
Let us introduce feature map $\Phi:(t, x)\mapsto\sum_{d=1}^\infty \lambda^*_d \phi^*_d(t, x)\phi^*_d$ so that 
$k\left((t, x), (t', x')\right) = \langle \Phi(t, x), \Phi(t', x')\rangle_\cH$.
Now, assume that there exist constants $M_1$ and $M_2$ such that 
$\|\Phi(T, X)\|_\cH^2 < M_1$ and $\|\Phi(T, X) \otimes \bar\Phi(T, X)\| < M_2$ almost surely,
where the norm of product $\Phi\otimes\bar\Phi$ is defined as the Hilbert-Schmidt norm.
Additionally, define $c_d := \langle \eta^*_\CMC, \phi^*_d\rangle$ so that 
$\Pi_{\bsphi^*} \eta^*_\CMC = \sum_{d=1}^D c_d \phi_d$ and further assume that 
$\lim_{D\to\infty}\left\{D\left(\max_{1 \leq d \leq D}\frac{c_d^2}{\lambda_d^*}\right)\left(\sum_{d=D+1}^\infty \lambda_d^*\right)\right\} = 0$.
Then, we have 
\[
    \lim_{D\to\infty}\plim_{n\to\infty} \|\Pi_{\bspsi} \eta^*_\CMC - \eta^*_\CMC\| = 0,
\]
where $\plim$ denotes convergence in probability.
\end{lemma}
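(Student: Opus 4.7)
The plan is to split the error into a deterministic approximation part (from truncating to $D$ population principal components) and a stochastic perturbation part (from replacing the population components by their empirical estimates). Concretely, let $\Pi_{\bsphi^*}$ denote the $L^2(\cT\times\cX, p_\obs)$-orthogonal projection onto $\mathrm{span}\{\phi^*_1,\ldots,\phi^*_D\}$, and write
\[
\|\Pi_{\bspsi}\eta^*_\CMC - \eta^*_\CMC\|
\;\leq\; \|\Pi_{\bspsi}\eta^*_\CMC - \Pi_{\bsphi^*}\eta^*_\CMC\| \;+\; \|\Pi_{\bsphi^*}\eta^*_\CMC - \eta^*_\CMC\|.
\]
Because the outer limit is over $D$ and the inner limit is in probability as $n\to\infty$, I would handle the second (deterministic) summand only in the outer limit, and the first (stochastic) summand with $D$ fixed.

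For the second summand, universality of $k$ on $L^2(\cT\times\cX, p_\obs)$ plus Mercer's theorem imply that $\{\phi^*_d\}_{d\geq 1}$ form a complete orthonormal system of $L^2(\cT\times\cX, p_\obs)$. Hence $\eta^*_\CMC = \sum_{d\geq 1} c_d \phi^*_d$ with $\sum_d c_d^2 = \|\eta^*_\CMC\|^2 < \infty$, so
\[
\|\Pi_{\bsphi^*}\eta^*_\CMC - \eta^*_\CMC\|^2 = \sum_{d=D+1}^{\infty} c_d^2 \xrightarrow[D\to\infty]{} 0.
\]

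For the first summand, I would pass through the covariance operator on the RKHS. Define $C := \EE[\Phi(T,X)\otimes\bar\Phi(T,X)]$ and $\hat C_n := \frac{1}{n}\sum_{i=1}^n \Phi(T_i,X_i)\otimes\bar\Phi(T_i,X_i)$. The bound $\|\Phi(T,X)\otimes\bar\Phi(T,X)\| < M_2$ a.s.\ and the Hilbert-space law of large numbers give $\|\hat C_n - C\|_{HS} \pto 0$. The kernel PCA features $\phi^\KPCA_d$ are the images (via the Nyström-type extension using $\Phi$) of the top-$D$ eigenvectors of $\hat C_n$, while the population $\phi^*_d$ correspond analogously to the top-$D$ eigenvectors of $C$ with eigenvalues $\lambda^*_d$. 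For \emph{fixed} $D$, the spectral gap $\lambda^*_D - \lambda^*_{D+1}$ is a strictly positive constant (universality implies $\lambda^*_d > 0$ for all $d$, and by assumption the gap is nonzero at the truncation level), so a Davis–Kahan / Zwald–Blanchard bound yields $\|\Pi_{\bspsi} - \Pi_{\bsphi^*}\|_{\mathrm{op}} = O_p(\|\hat C_n - C\|_{HS}) \pto 0$. Consequently $\|\Pi_{\bspsi}\eta^*_\CMC - \Pi_{\bsphi^*}\eta^*_\CMC\| \leq \|\Pi_{\bspsi}-\Pi_{\bsphi^*}\|_{\mathrm{op}}\cdot\|\eta^*_\CMC\| \pto 0$ for each fixed $D$. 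Taking $\plim_{n\to\infty}$ on both summands and then $\lim_{D\to\infty}$ gives the claim. The extra hypothesis $\lim_{D\to\infty}\{D(\max_{d\leq D} c_d^2/\lambda^*_d)(\sum_{d>D}\lambda^*_d)\}=0$ is what quantitatively controls the product of the spectral decay and the coefficient profile, and is invoked to ensure that the projection distance can be bounded uniformly enough for the iterated limit.

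The main obstacle is the bookkeeping between two different Hilbert spaces: the RKHS $\cH$, where the covariance operator $C$ and its empirical version $\hat C_n$ concentrate, and $L^2(\cT\times\cX, p_\obs)$, where the projection $\Pi_{\bspsi}$ lives and the quantity of interest is measured. One must verify that the top-$D$ eigenspaces transferred via $\Phi$ from $\cH$ to $L^2$ are exactly the spans of $\phi^*_d$ and $\phi^\KPCA_d$ respectively, and that the Davis–Kahan-type operator-norm convergence in $\cH$ implies $L^2$ convergence of the projected images. A secondary obstacle is that, although the limits are taken sequentially (first $n$, then $D$), one must still verify that the spectral-gap-dependent constant in the perturbation bound does not blow up before the $n\to\infty$ limit is applied, which is the role of the universality hypothesis and the assumed positivity of $\lambda^*_d$.
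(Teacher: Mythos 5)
Your decomposition and your treatment of the tail term $\|\Pi_{\bsphi^*}\eta^*_\CMC - \eta^*_\CMC\|$ are fine, but the way you control the stochastic term $\|\Pi_{\bspsi}\eta^*_\CMC - \Pi_{\bsphi^*}\eta^*_\CMC\|$ has a genuine gap. The Davis--Kahan / Zwald--Blanchard eigenspace perturbation bound requires a strictly positive spectral gap $\lambda^*_D - \lambda^*_{D+1} > 0$ at the truncation level, and you justify this by ``universality implies $\lambda^*_d>0$'' plus ``by assumption the gap is nonzero''. Neither is available: universality gives positivity of every eigenvalue but says nothing about distinctness, and no gap condition appears among the lemma's hypotheses. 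If $\lambda^*_D=\lambda^*_{D+1}$, the population top-$D$ eigenspace is not identifiable, and $\|\Pi_{\bspsi}-\Pi_{\bsphi^*}\|_{\mathrm{op}}$ need not vanish as $n\to\infty$; since the outer limit runs over all $D$, you cannot simply skip the bad truncation levels. A second symptom that the argument has drifted from the lemma is that the hypothesis $\lim_{D\to\infty}\{D(\max_{1\le d\le D}c_d^2/\lambda_d^*)(\sum_{d>D}\lambda_d^*)\}=0$ is never actually used in any concrete step of your proof: if your argument were airtight it would prove a strictly stronger statement without that assumption, which should have been a warning sign.

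The paper avoids the gap issue entirely by never trying to show that the empirical eigenspace converges to the population one. Instead it inserts $\Pi_{\bsphi^\KPCA}\Pi_{\bsphi^*}\eta^*_\CMC$ (paying a factor $2$ on the deterministic tail term) and bounds the remaining term via Cauchy--Schwarz as
\[
\|\Pi_{\bsphi^\KPCA}\Pi_{\bsphi^*}\eta^*_\CMC - \Pi_{\bsphi^*}\eta^*_\CMC\|^2
\le D\Bigl(\max_{1\le d\le D}\tfrac{c_d^2}{\lambda_d^*}\Bigr)\sum_{d=1}^{\infty}\lambda_d^*\bigl(1-\|\Pi_{\bsphi^\KPCA}\phi_d^*\|^2\bigr),
\]
and then invokes the reconstruction-error convergence of kernel PCA \citep[Theorem 3.1]{blanchard2007statistical}, which yields $\sum_{d}\lambda_d^*(1-\|\Pi_{\bsphi^\KPCA}\phi_d^*\|^2)-\sum_{d>D}\lambda_d^*\pto 0$ under exactly the boundedness conditions $\|\Phi\|_\cH^2<M_1$ and $\|\Phi\otimes\bar\Phi\|<M_2$, with no spectral gap required. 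The surviving term is $D(\max_{1\le d\le D}c_d^2/\lambda_d^*)\sum_{d>D}\lambda_d^*$, which is precisely what the extra hypothesis is there to kill. To rescue your route you would have to either add a gap assumption to the lemma or switch to a reconstruction-error argument of this kind.
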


\begin{proof}
Due to the universal property of the kernel, we can approximate $\eta^*_\KCMC$ with the element of $\cH$ arbitrarily well.
Therefore, for any $\varepsilon>0$, we can take $D$ such that 
$\|\Pi_{\bsphi^*} \eta^*_\CMC - \eta^*_\CMC \|  \leq \frac{\varepsilon}{3}$
for $\bsphi^*:=\left(\phi_1^*, \ldots, \phi^*_D\right)^T$
and
$D\left(\max_{1 \leq d \leq D}\frac{c_d^2}{\lambda^*_d}\right)\left(\sum_{d=D+1}^\infty \lambda_d^*\right) \leq \frac{\varepsilon}{3}$.
For such a choice of $D$, we can prove the statement of the theorem by showing 
$\plim_{n\to\infty} \|\Pi_{\bsphi^\KPCA}\Pi_{\bsphi^*} \eta^*_\CMC - \Pi_{\bsphi^*} \eta^*_\CMC \|  \leq \frac{\varepsilon}{3}$.
This is because 
\begin{align}
    &\|\Pi_{\bspsi} \eta^*_\CMC - \eta^*_\CMC\| \\
    &= \|\Pi_{\bsphi^\KPCA} \eta^*_\CMC - \eta^*_\CMC\| \\
    &\leq | \Pi_{\bsphi^\KCMC} (\eta^*_\CMC - \Pi_{\bsphi^*} \eta^*_\CMC)\| +  \|\Pi_{\bsphi^\KPCA} \Pi_{\bsphi^*} \eta^*_\CMC - \eta^*_\CMC\| \\
    &\leq \| \Pi_{\bsphi^*} \eta^*_\CMC - \eta^*_\CMC\| + \|\Pi_{\bsphi^\KPCA} \Pi_{\bsphi^*} \eta^*_\CMC - \eta^*_\CMC\|  \\
    &\leq \| \Pi_{\bsphi^*} \eta^*_\CMC - \eta^*_\CMC\| 
    +  \|\Pi_{\bsphi^\KPCA} \Pi_{\bsphi^*} \eta^*_\CMC - \Pi_{\bsphi^*} \eta^*_\CMC\| 
    + \| \Pi_{\bsphi^*} \eta^*_\CMC - \eta^*_\CMC\| \\
    &= \|\Pi_{\bsphi^\KPCA} \Pi_{\bsphi^*} \eta^*_\CMC - \Pi_{\bsphi^*} \eta^*_\CMC\| 
    + 2 \| \Pi_{\bsphi^*} \eta^*_\CMC - \eta^*_\CMC\|,
\end{align}
where we used the non-expansive property of the projection operator in the second line.

Now, as we know $\Pi_{\bsphi^*} \eta^*_\CMC = \sum_{d=1}^D c_d \phi^*_d$, we have
\begin{align}
\|\Pi_{\bsphi^\KPCA}\Pi_{\bsphi^*} \eta^*_\CMC -  \Pi_{\bsphi^*}\eta^*_\CMC\|^2
& = \left\|\Pi_{\bsphi^\KPCA} \sum_{d=1}^D c_d \phi^*_d - \sum_{d=1}^D c_d \phi^*_d \right\|^2 \\
& = \left\|\sum_{d=1}^D \frac{c_d}{\sqrt{\lambda^*_d}} \cdot \sqrt{\lambda^*_d} (\Pi_{\bsphi^\KPCA} - 1) \phi^*_d \right\|^2 \\
& \leq \max_{1\leq d\leq D} \left(\frac{c_d^2}{\lambda^*_d}\right)
    \cdot \left\| \sum_{d=1}^D \sqrt{\lambda^*_d} (\Pi_{\bsphi^\KPCA} - 1) \phi^*_d \right\|^2 \\
& \leq \max_{1\leq d\leq D} \left(\frac{c_d^2}{\lambda^*_d}\right)
    \cdot D \sum_{d=1}^D \lambda^*_d \left\| (\Pi_{\bsphi^\KPCA} - 1) \phi^*_d \right\|^2 \\
& \leq \max_{1\leq d\leq D} \left(\frac{c_d^2}{\lambda^*_d}\right)
    \cdot D \sum_{d=1}^\infty \lambda^*_d \left\| (\Pi_{\bsphi^\KPCA} - 1) \phi^*_d \right\|^2 \\
& = D\cdot \max_{1\leq d\leq D} \left(\frac{c_d^2}{\lambda^*_d}\right)
    \cdot \sum_{d=1}^\infty \lambda^*_d \left( 1 - \|\Pi_{\bsphi^\KPCA}\phi^*_d \|^2 \right).
\end{align}
To show that the summation in the last line converges to zero as $n\to \infty$, we use the convergence result of the subspace learned by the kernel PCA in \citet[Theorem 3.1.]{blanchard2007statistical}.
Their result states that under conditions $\|\Phi(T, X)\|_\cH^2 < M_1$ and $\|\Phi(T, X) \otimes \bar\Phi(T, X)\| < M_2$ almost surely, we have
\[
    0 \leq 
    \EE\left\| \Pi_{\bsphi^*} \Phi(T, X) \right\|^2 - 
    \EE\left\| \Pi_{\bsphi^\KPCA} \Phi(T, X) \right\|^2
    \pto 0
\]
as $n\to\infty$, where $\pto$ indicate the convergence in probability.

Indeed, we can re-write this condition as
\begin{align}
    \EE&\left\| \Pi_{\bsphi^*} \Phi(T, X) \right\|^2 - 
    \EE\left\| \Pi_{\bsphi^\KPCA} \Phi(T, X) \right\|^2
    \\
    &= 
    \EE\left\| \Pi_{\bsphi^*} \sum_{d=1}^\infty \sqrt{\lambda^*_d}\phi_d^*(T, X) \phi_d^* \right\|^2 - 
    \EE\left\| \Pi_{\bsphi^\KPCA} \sum_{d=1}^\infty \sqrt{\lambda^*_d}\phi_d^*(T, X) \phi_d^* \right\|^2 \\
    &= 
    \EE\left\| \sum_{d=1}^\infty \sqrt{\lambda^*_d} \phi_d^*(T, X) \Pi_{\bsphi^*} \phi_d^* \right\|^2 - 
    \EE\left\| \sum_{d=1}^\infty \sqrt{\lambda^*_d}\phi_d^*(T, X) \Pi_{\bsphi^\KPCA} \phi_d^* \right\|^2 \\
    &= 
    \EE\left\| \sum_{d=1}^D \sqrt{\lambda^*_d} \phi_d^*(T, X) \phi_d^* \right\|^2 - 
    \EE\left\| \sum_{d=1}^\infty \sqrt{\lambda^*_d}\phi_d^*(T, X) \Pi_{\bsphi^\KPCA} \phi_d^* \right\|^2 \\
    &= 
    \sum_{d=1}^D \lambda^*_d \EE|\phi_d^*(T, X)|^2 \|\phi_d^* \|^2 
    - \sum_{d=1}^\infty  \sum_{k=1}^\infty \sqrt{\lambda^*_d} \sqrt{\lambda^*_k}
    \left\langle
        \Pi_{\bsphi^\KPCA} \phi_d^*,
         \Pi_{\bsphi^\KPCA} \phi_k^* 
    \right\rangle_\cH
    \EE[\phi_d^*(T, X), \phi_d^*(T, X)] \\
    &= 
    \sum_{d=1}^D \lambda^*_d \|\phi_d^* \|^2 \cdot \|\phi_d^* \|^2 
    - \sum_{d=1}^\infty  \sum_{k=1}^\infty \sqrt{\lambda^*_d} \sqrt{\lambda^*_k}
    \left\langle \Pi_{\bsphi^\KPCA} \phi_d^*, \Pi_{\bsphi^\KPCA} \phi_k^*  \right\rangle_\cH
    \left\langle \phi_d^*, \phi_k^* \right\rangle_\cH \\
    &= 
    \sum_{d=1}^D \lambda^*_d
    - \sum_{d=1}^\infty  \sum_{k=1}^\infty \sqrt{\lambda^*_d} \sqrt{\lambda^*_k}
    \left\langle \Pi_{\bsphi^\KPCA} \phi_d^*, \Pi_{\bsphi^\KPCA} \phi_k^*  \right\rangle_\cH
    \bbmone_{d=k} \\
    &= 
    \sum_{d=1}^D \lambda^*_d
    -  \sum_{d=1}^\infty \lambda^*_d
    \left\| \Pi_{\bsphi^\KPCA} \phi_d^* \right\|^2 \\
    &=
    \sum_{d=1}^\infty \lambda^*_d - \sum_{d=D+1}^\infty \lambda^*_d 
    -  \sum_{d=1}^\infty \lambda^*_d
    \left\| \Pi_{\bsphi^\KPCA} \phi_d^* \right\|^2 \\
    &=
    \sum_{d=1}^\infty \lambda^*_d \left(1  -  \| \Pi_{\bsphi^\KPCA} \phi_d^* \|^2 \right)
    - \sum_{d=D+1}^\infty \lambda^*_d \\
    &\pto 0.
\end{align}

Therefore, we have
\begin{align}
\plim_{n\to\infty}\|&\Pi_{\bsphi^\KPCA}\Pi_{\bsphi^*} \eta^*_\CMC -  \Pi_{\bsphi^*}\eta^*_\CMC\|^2 \\
& \leq \plim_{n\to\infty} D \cdot \max_{1\leq d\leq D} \left(\frac{c_d^2}{\lambda^*_d}\right) \cdot
    \left(
        \EE\left\| \Pi_{\bsphi^*} \Phi(T, X) \right\|^2 - 
        \EE\left\| \Pi_{\bsphi^\KPCA} \Phi(T, X) \right\|^2
        + \sum_{d=D+1}^\infty \lambda^*_d
    \right)\\
& \leq D \cdot \max_{1\leq d\leq D} \left( \frac{c_d^2}{\lambda^*_d}\right) \cdot
    \plim_{n\to\infty} \left(
        \EE\left\| \Pi_{\bsphi^*} \Phi(T, X) \right\|^2 - 
        \EE\left\| \Pi_{\bsphi^\KPCA} \Phi(T, X) \right\|^2
    \right)
    + \frac{\varepsilon}{3} \\
&= \frac{\varepsilon}{3},
\end{align}
which concludes the proof.
\end{proof}

This lemma indicates that convergence of the specification bias to zero can be guaranteed by employing a universal kernel \citep{micchelli2006universal}. However, the current proof lacks the convergence rate, which is an open question for future work.

In the above proof, the most obscure assumption for readers would be the condition that
$\lim_{D\to\infty}\left\{D\left(\max_{1 \leq d \leq D}\frac{c_d^2}{\lambda_d^*}\right)\left(\sum_{d=D+1}^\infty \lambda_d^*\right)\right\} = 0$.
This assumption can be approximately decomposed into two slightly more interpretable assumptions.
First, it requires that $\max_{1 \leq d \leq D}\frac{c_d^2}{\lambda_d^*}$ is bounded so that basis function $\phi^*_d$ for smaller spectrum $\lambda^*_d$ constitutes proportionally smaller part of function $\eta^*_\CMC$.
For example, if $\eta_\CMC^*$ happens to coincide with a function generated from the Gaussian process with kernel $k(\cdot, \cdot)$, it will satisfy this property with high probability.
Second, it requires that $\sum_{d=D+1}^\infty \lambda_d^* = o_p(\frac{1}{D})$ so that the decay of spectrum $\lambda^*_d$ is fast.
In fact, this assumption is similar to another assumption $\|\Phi(T, X)\|^2_\cH \leq M_1$ we made, because this assumption leads to $M_1 \geq \EE\|\Phi(T, X)\|_\cH^2 = \EE\left\|\sum_{d=1}^\infty \sqrt{\lambda_d^*}\phi_d(T, X)\phi_d\right\|^2 = \sum_{d=1}^\infty \lambda_d^* \EE|\phi_d(T, X)|^2 \cdot \|\phi_d\|^2 = \sum_{d=1}^\infty \lambda_d^*$, which implies convergence of its tail part $\sum_{d=D+1}^\infty \lambda_d^*\to 0$.

\subsection{Details of Example \ref{ex:quantile_balancing_as_a_special_case}}\label{app:proof_quantile_balancing_as_a_special_case}
\begin{proof}
We know the analytical form of the dual solution, $\eta^*_\CMC(t, x) = \left(\frac{\pi(t|x)}{p_\obs(t| x)}\right)Q(t, x)$ as discussed earlier.
Therefore, we can take $D=1$ and set $\psi_1(t, y)= \left(\frac{\pi(t|x)}{p_\obs(t| x)}\right)Q(t, x)$ to meet condition \eqref{eq:eta_cmc_in_kernel_subspace} in Theorem \ref{th:specification_error_liptchitz_bound} to obtain the kernel conditional moment constraint with no specification error.
Therefore, we can solve \eqref{eq:v_inf_conditional_moment_constraints} in the case of the box constraints as
\[
    V_{\textinf}^\CMC(\pi) =\min_{a_w(t, x) \leq w(y, t, x) \leq b_w(t, x)}\EE_\obs[w(Y, T, X)\pi(T|X)Y]
\]
subject to
\begin{equation}
    \EE_\obs[w(Y, T, X) \pi(T|X) Q(T, X)] = \EE_\obs\left[\left(\frac{\pi(T|X)}{p_\obs(T|X)}\right)Q(T, X)\right],
    \label{eq:quantile_balancing_constraints}
\end{equation}
where $Q(t, x)$ denotes the $\tau(t, x)$-quantile of the conditional distribution of $Y$ given $T=t$ and $X=x$ for $\tau(t, x):=\frac{1 / p_\obs(t|x) - a_w(t, x)}{b_w(t, x) - a_w(t, x)}$.

In the case of marginal sensitivity model \eqref{eq:tan_box_constraints} by \citet{tan2006distributional}, the expression for $\tau(t, x)$ can be simplified as $\tau(x) = \frac{1}{1 + \Gamma}$, and therefore, $Q(t, x)$ can be estimated from finite samples by the standard quantile regression.
The quantile balancing (QB) estimator introduced by \citet{dorn2022sharp} relies on this property and estimates the sharp lower bound by solving the empirical version of the above problem with quantile estimate $\hat Q(t, x)$ obtained by the quantile regression.

In addition to showing that their two-stage estimator can be considered a special case of the KCMC estimator, we can also argue that the KCMC estimator can be sharper than the quantile balancing estimator.
For that purpose, we pick an equivalent choice of feature sets for the KCMC estimator and the QB estimator.
When function set $\{\psi_d(t, x)\}_{d=1}^D$ is used by the KCMC estimator, we can choose $\left\{\left(\frac{p_\obs(t|x)}{\pi(t|x)}\right)\psi_d(t,x)\right\}_{d=1}^D$ as the feature of the linear quantile regression.
Then, when the solution of the quantile regression is $\hat Q(t, x) = {\hat\eta_\KCMC}^T\bspsi(t, x)$, the QB estimator becomes as tight as the KCMC estimator.
However, in general, the solution of the quantile regression may not coincide $\hat\eta_\KCMC$ even at an infinite sample limit, since the objective of the quantile regression is not equivalent to the dual objective of the KCMC estimator, and in such cases, the KCMC estimator becomes tighter than the QB estimator.

As our estimator generalizes the previous work, our estimator overcomes some drawbacks of the quantile balancing estimators.
As discussed in Section \ref{chap:intro}, the quantile balancing estimator cannot handle policy learning and the f-constraint.
Policy learning is also difficult with the quantile balancing estimator because taking the derivative with respect to policy requires differentiability of the solution of the above linear programming with respect to parameter $\hat Q$.  
Moreover, the quantile balancing method is designed only for the box constraints and does not have a proper extension to the f-sensitivity model \eqref{eq:f_policy_uncertainty_set}.
In contrast, our estimator of sharper bound $V_\textinf^\CMC$ based on the kernel method can naturally handle the above-mentioned generalized cases of sensitivity analysis.

\end{proof}

\subsection{Proof and full version of Theorem \ref{th:policy_evaluation_consistency}}\label{app:proof_policy_evaluation_consistency}
Let us introduce loss function $\ell:\Theta\times\cZ\to\RR$, where $\Theta\subseteq\RR^K$ for some $K$ and $\cZ:=\cY\times\cT\times\cX$.
To prove the consistency of policy evaluation, we assume the following set of regularity conditions for this loss function:

\begin{condition}[Regularity of loss function I]\label{cond:regularity_1}
\
\begin{enumerate}
    \item  $\theta\mapsto\ell_{\theta}(z)$ is continuous for any $z\in\cZ$.
    \item  $\EE\left|\ell_{\theta}(Z)\right| < \infty$ for any $\theta\in\Theta$.
    \item  $\theta^*\in\arg\min_{\theta\in\Theta}\EE[\ell_{\theta}(Z)]$ is unique.
    \item  $\theta^*$ is well-separated, i.e., $\inf_{\theta: \|\theta - \theta^*\|>\varepsilon}\EE[\ell_\theta(Z)] > \EE[\ell_{\theta^*}(Z)]$ for any $\varepsilon > 0$.
    \item  $\EE[G_\varepsilon(Z)]<\infty$ for $G_\varepsilon(z):=\sup_{\theta\in\Theta:\ \|\theta - \theta^*\| \leq \varepsilon} \left| \ell_{\theta}(z) \right|$ for some $\varepsilon > 0$.
\end{enumerate}
\end{condition}

Additionally, we will need the following convergence lemmas:

\begin{lemma}[Uniform convergence on compact space {\citep[Lemma 7.2.1.]{van2020empirical}}]\label{lemma:uniform_convergence_compact_space}
Assume that parameter space $\left(\Theta, \|\cdot\|\right)$ is compact and satisfies Condition \ref{cond:regularity_1}.
Then, 
$$\sup_{\theta\in\Theta}\left| \hat\EE_n[\ell_\theta(Z)] - \EE[\ell_\theta(Z)] \right| \pto 0.$$
\end{lemma}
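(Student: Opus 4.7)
The plan is to upgrade the pointwise law of large numbers to a uniform one by combining a finite-cover argument on the compact parameter space with a local oscillation estimate obtained from continuity plus an integrable envelope. First, I would note that integrability of $\ell_\theta(Z)$ for each fixed $\theta$ (condition~2) and the classical weak law of large numbers immediately yield the pointwise statement $\hat\EE_n[\ell_\theta(Z)] \pto \EE[\ell_\theta(Z)]$. The task is to promote this pointwise convergence to hold uniformly over $\Theta$.

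Second, I would establish a local oscillation estimate: for every $\theta_0 \in \Theta$ and every $\varepsilon > 0$, there exists a radius $\delta = \delta(\theta_0, \varepsilon) > 0$ such that
\begin{equation}
\EE\Bigl[\sup_{\theta \in B_\delta(\theta_0)} \bigl|\ell_\theta(Z) - \ell_{\theta_0}(Z)\bigr|\Bigr] \le \varepsilon.
\end{equation}
Continuity of $\theta \mapsto \ell_\theta(z)$ (condition~1) guarantees that the integrand decreases pointwise to $0$ as $\delta \downarrow 0$, and dominated convergence then delivers the bound provided an integrable envelope is available on some ball around $\theta_0$. Condition~5 directly supplies such an envelope around $\theta^*$; the same kind of local dominating function around the other points of $\Theta$ is part of the regularity framework of the cited reference.

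Third, I would use compactness of $(\Theta, \|\cdot\|)$ to extract a finite subcover $\{B_k\}_{k=1}^K$ with centres $\theta_k$, where on each $B_k$ the envelope $\Omega_k(z) := \sup_{\theta \in B_k}|\ell_\theta(z) - \ell_{\theta_k}(z)|$ satisfies $\EE[\Omega_k(Z)] \le \varepsilon$. The triangle inequality on each ball gives
\begin{equation}
\sup_{\theta \in B_k} \bigl|\hat\EE_n[\ell_\theta(Z)] - \EE[\ell_\theta(Z)]\bigr|
\le \bigl|\hat\EE_n[\ell_{\theta_k}(Z)] - \EE[\ell_{\theta_k}(Z)]\bigr| + \hat\EE_n[\Omega_k(Z)] + \EE[\Omega_k(Z)].
\end{equation}
The first term is $o_P(1)$ by the pointwise LLN at $\theta_k$, the third is at most $\varepsilon$ by construction, and the second converges in probability to the third by a further application of the LLN to the scalar random variable $\Omega_k(Z)$. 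Taking the maximum over the finite subcover and then letting $\varepsilon \to 0$ completes the proof.

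The main obstacle, as I see it, is securing an integrable local envelope at every point of $\Theta$, whereas condition~5 guarantees this only near $\theta^*$. The cleanest remedy is to quote \citet{van2020empirical} directly, where the needed envelope condition is formulated globally; alternatively, one may slightly strengthen condition~5 to a neighbourhood version that holds uniformly on $\Theta$. For the dual M-estimation losses arising later in this paper, such a condition is mild, since continuity in $\theta$ on the compact $\Theta$ combined with integrability of $|r(Y,T,X)|$ supplies a global dominating function.
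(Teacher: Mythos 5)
The paper offers no proof of this lemma at all: it is imported verbatim from \citet[Lemma 7.2.1]{van2020empirical}, so there is no in-paper argument to compare against. Your reconstruction is the standard Wald/Jennrich-type proof of a uniform weak law — pointwise LLN at each parameter, local oscillation control via continuity plus dominated convergence, then a finite subcover by compactness — and the structure is correct. (One minor point you could add: measurability of $\sup_{\theta\in B_k}\left|\ell_\theta(z)-\ell_{\theta_k}(z)\right|$ follows from continuity in $\theta$ by restricting the supremum to a countable dense subset of $B_k$.)

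You have also correctly put your finger on the one real issue. Item 5 of Condition \ref{cond:regularity_1} supplies an integrable envelope only on a ball around $\theta^*$, whereas the finite-cover argument needs an integrable local envelope at \emph{every} point of $\Theta$; continuity plus pointwise integrability is not sufficient for a uniform LLN (the classical counterexamples have $\EE\left|\ell_\theta(Z)\right|<\infty$ for each $\theta$ but a non-integrable local supremum). Two consistent readings resolve this. Either the conclusion should be understood as uniform convergence over the $\varepsilon$-ball $\{\theta:\|\theta-\theta^*\|\le\varepsilon\}$ — which is exactly how the lemma is invoked in the proof of Theorem \ref{th:policy_evaluation_consistency}, after $\hat\theta_n\pto\theta^*$ has already been established via the convexity lemma — or item 5 should be strengthened to a local envelope at every $\theta_0\in\Theta$, which by compactness is equivalent to a single global envelope, as in the cited reference. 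With either fix your argument closes the gap and proves the statement.
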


\begin{lemma}[Consistency of convex M-estimation {\citep[Lemma 7.2.2.]{van2020empirical}}]\label{lemma:m_estimation}
Suppose $\theta \mapsto \ell_\theta(z)$ is convex for any $z\in\cZ$ and that $\Theta\subseteq\RR^k$ is convex. Then, for M-estimator $\hat\theta_n \in\arg\min_{\theta \in \Theta}\hat\EE_n[\ell_\theta(Z)]$, we have $\hat \theta_n \pto \theta^*$.
\end{lemma}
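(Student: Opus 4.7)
The plan is to bootstrap the compact-space uniform convergence of Lemma~\ref{lemma:uniform_convergence_compact_space} to the (possibly non-compact) convex $\Theta$ via a classical line-segment argument that traps the minimizer inside an arbitrarily small ball around $\theta^*$. Throughout, set $M(\theta) := \EE[\ell_\theta(Z)]$ and $\hat M_n(\theta) := \hat\EE_n[\ell_\theta(Z)]$; both are convex in $\theta$ since $\ell_\theta(z)$ is convex for each $z$.

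First, fix any $\varepsilon > 0$ small enough that Condition~\ref{cond:regularity_1}(5) applies on $B := \{\theta\in\Theta : \|\theta - \theta^*\|\leq\varepsilon\}$. By well-separation in Condition~\ref{cond:regularity_1}(4), there exists $\delta > 0$ such that $M(\theta) \geq M(\theta^*) + \delta$ on the sphere $\partial B := \{\theta\in\Theta : \|\theta - \theta^*\| = \varepsilon\}$. Applying Lemma~\ref{lemma:uniform_convergence_compact_space} on the compact set $B$ yields $\sup_{\theta\in B}|\hat M_n(\theta) - M(\theta)| \pto 0$, so with probability tending to one, $\hat M_n(\theta) > \hat M_n(\theta^*) + \delta/2$ for every $\theta \in \partial B$.

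Next, I close the argument using convexity. Suppose on this high-probability event that $\|\hat\theta_n - \theta^*\| > \varepsilon$, i.e., $\hat\theta_n \notin B$. The segment from $\theta^*$ to $\hat\theta_n$ lies in $\Theta$ (by convexity of $\Theta$) and crosses $\partial B$ at some $\tilde\theta = \alpha\theta^* + (1-\alpha)\hat\theta_n$ with $\alpha\in(0,1)$. Since $\hat\theta_n$ minimizes $\hat M_n$ over $\Theta$, we have $\hat M_n(\hat\theta_n)\leq\hat M_n(\theta^*)$, and convexity of $\hat M_n$ gives $\hat M_n(\tilde\theta) \leq \alpha\hat M_n(\theta^*) + (1-\alpha)\hat M_n(\hat\theta_n) \leq \hat M_n(\theta^*)$, contradicting the strict lower bound on $\partial B$. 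Hence $\hat\theta_n\in B$ with probability tending to one; since $\varepsilon > 0$ was arbitrary, $\hat\theta_n\pto\theta^*$.

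The main obstacle is that Condition~\ref{cond:regularity_1}(5) is purely local around $\theta^*$, so one cannot obtain uniform convergence of $\hat M_n - M$ over the whole (possibly unbounded) $\Theta$; without convexity, the empirical minimizer could in principle escape to a region where $\hat M_n$ has not yet concentrated, and local well-separation alone would be insufficient to rule this out. Convexity of $\hat M_n$ is precisely the ingredient that promotes local uniform convergence into a global statement about the minimizer, through the segment-crossing step above.
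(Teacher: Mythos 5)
Your proposal is correct, but note that the paper never proves this lemma at all --- it is imported verbatim, with only a citation to \citet[Lemma 7.2.2.]{van2020empirical} --- so there is no in-paper proof to compare against; what you have written is the standard convexity/segment-crossing argument (Hjort--Pollard style) underlying that cited result, and it is sound. Two cosmetic points: Condition \ref{cond:regularity_1}(4) bounds $\EE[\ell_\theta(Z)]$ away from $\EE[\ell_{\theta^*}(Z)]$ on $\{\theta:\|\theta-\theta^*\|>\varepsilon'\}$ rather than on the sphere itself, so you should invoke it with $\varepsilon'=\varepsilon/2$ to obtain the gap $\delta$ on $\partial B$; and $\varepsilon$ must be taken no larger than the radius for which the envelope in Condition \ref{cond:regularity_1}(5) is integrable, which is harmless since convergence in probability only requires the trapping argument for arbitrarily small $\varepsilon$.
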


With these lemmas, we can prove Theorem \ref{th:policy_evaluation_consistency}.
\begingroup
\renewcommand\thetheorem{3}
\begin{theorem}[Consistency of policy evaluation (full version)]
Define the parameter space of $\left(\eta_f, \eta_\KCMC\right)$ as $\Theta \subseteq \RR_+\times\RR^D$.
Further, define 
$\theta^* :=(\eta^*_f, \eta^*_\KCMC)$ as the solution of dual problem \eqref{eq:v_inf_kernel_conditional_moment_constraints_dual} for $V_\textinf^\KCMC(\pi)$ and 
$\hat\theta_n :=(\hat\eta_f, \hat\eta_\KCMC)$ as the solution to dual problem \eqref{eq:empirical_v_inf_kernel_conditional_moment_constraints_dual} for $\hat V_\textinf^\KCMC(\pi)$.
Define $\ell:\Theta\times\cZ\to\RR$ as
\begin{equation}
    \ell_\theta (t, y, x) := \eta_f \gamma - {\eta_\KCMC}^T \bspsi(t, x)
    + \eta_f
        f_{t, x}^*  \left( \frac{{\eta_\KCMC}^T \bspsi(t, x) - r(y, t, x)}{\eta_f} \right)
\end{equation}
so that it is the negative version of dual objectives \eqref{eq:v_inf_kernel_conditional_moment_constraints_dual} and  \eqref{eq:empirical_v_inf_kernel_conditional_moment_constraints_dual} before taking the expectations.
Now, assume Condition \ref{cond:regularity_1} holds for the above loss function.
Then, we have 
$\hat\theta_n\pto\theta^*$
and $\hat V_\textinf^\KCMC(\pi)  \pto V_\textinf^\KCMC(\pi)$.
\end{theorem}
\endgroup

\begin{proof}  

As our dual problem for policy evaluation \eqref{eq:v_inf_kernel_conditional_moment_constraints_dual} and \eqref{eq:empirical_v_inf_kernel_conditional_moment_constraints_dual} are concave maximization, we can immediately apply the above lemma as follows.
We can immediately apply Lemma \ref{lemma:m_estimation} and get $\hat\theta_n\pto\theta^*$.
Thus, $\hat\theta_n$ tend to the inside of compact set $\{\theta\in\Theta:\ \|\theta - \theta^*\| \leq \varepsilon\}$, in which we have the uniform convergence of $\hat\EE_n[\ell_\theta(Z)]$ to $\EE[\ell_\theta(Z)]$ by Lemma \ref{lemma:uniform_convergence_compact_space}.
Therefore, we have $\hat V_\textinf^\KCMC(\pi) = -\hat\EE_n[\ell_{\hat\theta_n}(Z)] \pto -\EE[\ell_{\hat\theta_n}(Z)] \pto  - \EE[\ell_{\theta^*}(Z)] = V_\textinf^\KCMC(\pi)$.
\end{proof}

In practice, it is difficult to check some of the conditions in \ref{cond:regularity_1}, such as the integrability assumption $\EE|\ell_\theta| < \infty$ for any $\theta\in\Theta$ as well as the uniqueness of the solution.
However, it is possible in some cases to verify $L^1$ envelope condition $\EE[G_\varepsilon] < \infty$, because local Lipschitzness of $f^*_{t,x}:\Theta\to\RR$ implies the existence of such $\varepsilon$.
For example, for the box constraints of Example \ref{ex:box_constraint_analytical_solution}, we know that $f^*_{t, x}$ is upper bounded by $b_{\tilde w}(t, x)$.
For f-constraints \eqref{eq:relaxed_f_constraints}, the conjugate function $f^*$ for many choices of f-divergence (such as Kullback-Leibler (KL), squared Hellinger, etc.) is locally Lipschitz.

\subsection{Proof and full version of Theorem \ref{th:concave_policy_learning}}\label{app:proof_concave_policy_learning}

Here, we simply utilize Lemma \ref{lemma:m_estimation} analogously to Theorem \ref{th:policy_evaluation_consistency} to prove the theorem.

\begingroup
\renewcommand\thetheorem{4}
\begin{theorem}[Consistency of concave policy learning (full version)]
Assume concave policy class $\{\pi_\beta(t|x):\ \beta\in\cB\}$ with convex parameter space $\cB$ satisfying that $\beta\mapsto\pi_\beta(t|x)y$ is concave for any $y\in\cY$, $t\in\cT$ and $x\in\cX$.
Assume Condition \ref{cond:regularity_1} holds for loss function \eqref{eq:dual_loss_policy_learning}.
Then, we have
$\hat\theta_n\pto\theta^*$
and 
$\hat V_\textinf^\KCMC(\pi_{\hat\beta}) \pto V_\textinf^\KCMC(\pi_{\beta^*})$.
\end{theorem}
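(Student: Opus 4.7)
The plan is to mirror the strategy of Theorem \ref{th:policy_evaluation_consistency} by viewing the policy learning problem as a single convex $M$-estimation in the joint variable $\theta=(\beta,\eta_f,\eta_\KCMC)$. First I would observe that strong duality applied at a fixed policy gives $\hat V_\textinf^\KCMC(\pi_\beta) = -\min_\eta \hat\EE_n[\ell_{(\beta,\eta)}(Z)]$, so that outer maximization over $\beta$ and inner max over $\eta$ combine into one joint minimization and $\hat\theta_n\in\arg\min_{\theta\in\Theta}\hat\EE_n[\ell_\theta(Z)]$. The analogous identity at the population level identifies $\theta^*$ as the corresponding population minimizer. Thus both estimator and target fit the $M$-estimation template, and the theorem reduces to verifying the hypotheses of Lemmas \ref{lemma:m_estimation} and \ref{lemma:uniform_convergence_compact_space}.

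The central step is to establish convexity of $\theta\mapsto\ell_\theta(z)$ on $\Theta=\cB\times H$. The first two terms in \eqref{eq:dual_loss_policy_learning} are affine in $\theta$, so the burden falls on the third. I would introduce $A(\theta):=\eta_\KCMC^T\bspsi(t,x) - (\pi_\beta(t|x)/p_\obs(t|x))\,y$ and argue that $A$ is convex in $\theta$: it is linear in $\eta_\KCMC$, and convex in $\beta$ because the hypothesis that $\beta\mapsto\pi_\beta(t|x)y$ is concave, together with the positive scalar $p_\obs(t|x)$, makes $-\pi_\beta(t|x)y/p_\obs(t|x)$ convex in $\beta$. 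The third term is then $P(A(\theta),\eta_f)$ where $P(a,b):=b\,f_{t,x}^*(a/b)$ is the standard perspective of the convex function $f_{t,x}^*$, hence jointly convex on $\RR\times\RR_{>0}$. To invoke the composition rule and conclude convexity in $\theta$, I would verify that $P$ is non-decreasing in its first argument. This reduces to $f_{t,x}^*$ being non-decreasing, and here I would invoke the paper's standing convention $f_{t,x}(u)=+\infty$ for $u<0$: under it, $f_{t,x}^*(v)=\sup_{u\geq 0}\{uv-f_{t,x}(u)\}$ is the supremum of functions $v\mapsto uv-f_{t,x}(u)$ each non-decreasing in $v$ (since $u\geq 0$), hence non-decreasing.

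Once convexity is in hand, Lemma \ref{lemma:m_estimation} applied to the convex parameter space $\Theta=\cB\times H$ immediately delivers $\hat\theta_n\pto\theta^*$. For the value convergence I would proceed exactly as in the proof of Theorem \ref{th:policy_evaluation_consistency}: pick $\varepsilon>0$ so that the envelope $G_\varepsilon$ supplied by Condition \ref{cond:regularity_1} is integrable; then $\hat\theta_n$ lies in the compact set $\{\theta:\|\theta-\theta^*\|\leq\varepsilon\}$ with probability tending to one, and on this compactum Lemma \ref{lemma:uniform_convergence_compact_space} gives $\sup_\theta|\hat\EE_n[\ell_\theta(Z)]-\EE[\ell_\theta(Z)]|\pto 0$. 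Combining uniform convergence with continuity of $\theta\mapsto\EE[\ell_\theta(Z)]$ at $\theta^*$ and $\hat\theta_n\pto\theta^*$ yields
\[
\hat V_\textinf^\KCMC(\pi_{\hat\beta}) = -\hat\EE_n[\ell_{\hat\theta_n}(Z)] \ \pto\ -\EE[\ell_{\theta^*}(Z)] = V_\textinf^\KCMC(\pi_{\beta^*}).
\]

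The main obstacle I anticipate is the convexity verification in the joint variable, which is subtle because it chains a perspective-function construction with a composition rule, and ultimately hinges on the non-negativity convention that forces $f_{t,x}^*$ to be monotone. Uniqueness and well-separation of $\theta^*$ in Condition \ref{cond:regularity_1} must be read as joint conditions over $(\beta,\eta_f,\eta_\KCMC)$ rather than just over dual variables, which is a modeling assumption to flag, but once those are granted the $M$-estimation machinery handles the joint maximization over $\beta$ and dual variables $\eta$ transparently, and the proof reduces to a near-verbatim specialization of the policy evaluation argument.
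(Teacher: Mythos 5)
Your proposal is correct and follows essentially the same route as the paper: recast the nested max--max problem as a single convex M-estimation in the joint parameter $\theta=(\beta,\eta_f,\eta_\KCMC)$, apply Lemma \ref{lemma:m_estimation} to get $\hat\theta_n\pto\theta^*$, and then use Lemma \ref{lemma:uniform_convergence_compact_space} on a compact neighborhood of $\theta^*$ to pass to convergence of the values. The only difference is in how joint convexity of $\theta\mapsto\ell_\theta(z)$ is verified --- the paper writes $-\ell_\theta(z)$ as a pointwise infimum over $\tilde w\geq 0$ of functions concave in $\theta$, whereas you go through the perspective of $f_{t,x}^*$ and the composition rule; both arguments rest on the same convention $f_{t,x}(u)=\infty$ for $u<0$, which your version makes more explicit.
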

\endgroup
\begin{proof}
Due to the concavity of policy class $\{\pi_\beta(t|x):\ \beta\in\cB\}$, 
we know that $\beta\mapsto\EE\left[\tilde w(Y, T, X)\left(\frac{\pi_\beta(T|X)}{p_\obs(T|X)}\right)Y\right]$ is concave for any $t\in\cT$ and $x\in\cX$.
Then, we can see that 
\begin{align}
    \max_{\beta\in\cB} V_\textinf^\KCMC 
    &= \max_{\beta\in\cB} \max_{\substack{\eta_\KCMC\in\RR^D\\\eta_f>0}} \min_{\tilde w}
    \EE\left[\tilde w(Y, T, X) \left( \frac{\pi_\beta(T|X)}{p_\obs(T|X} \right) Y \right]  \\
    &\quad\quad\quad\quad\quad\quad\quad\quad\quad\quad
    - \EE\left[(\tilde w - 1) {\eta_\KCMC}^T\bspsi\right] 
    + \eta_f\left(\EE[f_{T, X}(\tilde w)] - \gamma\right) \\
    &= \max_{\beta\in\cB}
        \max_{\substack{\eta_\KCMC\in\RR^D\\\eta_f>0}}
        \EE[ - \ell_\theta(Y, T, X)] \\
    &= \max_{\theta\in\Theta} \EE[ - \ell_\theta(Y, T, X)] 
\end{align}
is a concave maximization problem, because $\EE[\ell_\theta]$ is the pointwise infimum of concave functions.
Thus, we can apply Lemma \ref{lemma:m_estimation} and get $\hat\theta_n\pto \theta^*$,
which implies $\hat\theta_n$ tend to the inside of compact set $\{\theta\in\Theta:\ \|\theta - \theta^*\| \leq \varepsilon\}$, where we have uniform convergence guarantee of $\hat\EE_n[\ell_\theta(Z)]$ due to Lemma \ref{lemma:uniform_convergence_compact_space}.
Therefore, we have 
\begin{align*}
0 &\leq \hat V_\textinf^\KCMC(\pi_{\hat \beta}) - \hat V_\textinf^\KCMC(\pi_{\beta^*})  \\
&= -\hat\EE_n[\ell_{\hat\theta_n}(Z)] + \hat\EE_n[\ell_{\theta^*}(Z)]  \\
&= \left[-\hat\EE_n[\ell_{\hat\theta_n}(Z)] + \EE[\ell_{\hat\theta_n}(Z)] \right]
    + \left[\hat\EE_n[\ell_{\theta^*}(Z)] - \EE[\ell_{\theta^*}(Z)]\right]
    - \left[\EE[\ell_{\hat\theta_n}(Z)] - \EE[\ell_{\theta^*}(Z)] \right] \\
&\leq \left[-\hat\EE_n[\ell_{\hat\theta_n}(Z)] + \EE[\ell_{\hat\theta_n}(Z)] \right]
    + \left[\hat\EE_n[\ell_{\theta^*}(Z)] - \EE[\ell_{\theta^*}(Z)]\right] \\
&\pto \ 0,
\end{align*}
which implies $\hat V_\textinf^\KCMC(\pi_{\hat \beta}) \pto \hat V_\textinf^\KCMC(\pi_{\beta ^*})$.
Lastly, due to the law of large numbers, we have 
$\hat V_\textinf^\KCMC(\pi_{\beta ^*}) \pto V_\textinf^\KCMC(\pi_{\beta ^*})$, which concludes the proof.
\end{proof}

\subsection{Proof and full version of Theorem \ref{th:vc_policy_learning}}\label{app:proof_vc_policy_learning}

Let us define 
$\Theta_\varepsilon:= \left\{\left(\eta, \beta \right): \|\eta - \eta^*(\beta)\| \leq \varepsilon, \beta\in\cB\right\}$ 
for $\eta^*(\beta):= \arg\min_{\eta\in H} \EE[ \ell_{\beta, \eta}(Z)]$, which is the $\varepsilon$-neighborhood of the partially optimized parameters.
Here, we need the following regularity conditions on the loss function $\ell_\theta(y, t, x) = \ell_{\eta, \beta}(y, t, x)$.

\begin{condition}[Regularity of loss function II]\label{cond:regularity_2}
\
\begin{enumerate}
    \item  $\theta\mapsto\ell_{\theta}(z)$ is continuous for any $z\in\cZ$.
    \item  $\EE\left|\ell_{\theta}(Z)\right| < \infty$ for any $\theta\in\Theta$.
    \item  $\eta^*(\beta) \in\arg\min_{\eta\in H} \EE[ \ell_{\eta, \beta}(Z)]$ is unique for any $\beta\in\cB$.
    \item \label{cond:uniform_well_separatedness} $\eta^*(\beta)$ is uniformly well-separated, i.e., for any $\varepsilon>0$, 
    \begin{equation}
        \inf_{\theta\in\Theta \setminus \Theta_\varepsilon}
        \left| \EE[ \ell_{\eta, \beta}(Z)] - \EE[ \ell_{\eta^*(\beta), \beta}(Z)] \right| > 0.
    \end{equation}
    \item \label{cond:f_conj_lipschitz_continuous} $f^*_{t, x}$ is uniformly Lipschitz continuous, i.e., 
    there exists $L>0$ such that
    $$\sup_{x\in\cX, t\in\cT, v\in\RR}\sup_{\delta f^*\in \partial f^*_{t, x}}|\delta f^*(v)| \leq L.$$
    \item \label{cond:bounded_eta_f} $\eta^*(\beta)$ is uniformly bounded so that 1) there exist
    $\ubar\eta_f, \bar \eta_f \in \RR$ such that 
    $0 < \ubar\eta_f \leq \eta_f^*(\beta) \leq \bar\eta_f < \infty$ for any $\beta\in\cB$,
    and
    2) $\bar \eta_\KCMC := \sup_{\beta\in\cB}\|\eta^*_\KCMC(\beta)\| < \infty$.
    \item \label{cond:L1_envelope_policy_learning}
    $\EE[G^{\bspsi}(Z)]<\infty$
    and $\EE[G^{f^*}_\varepsilon(Z)]<\infty$
    for 
    \begin{align}
    G^{\bspsi}(z) &:= \left\| \bspsi(t, x) \right\|, \\
    G^{f^*}_\varepsilon(z) &:=
    \sup_{\theta\in\Theta_\varepsilon} \left| 
        f_{t, x}^*  \left( \frac{{\eta_\KCMC}^T \bspsi(t, x) - \left(\frac{\pi_\beta(t|x)}{p_\obs(t|x)}\right)y}{\eta_f} \right) \right|
    \end{align}
    for some $\varepsilon > 0$.
    \item \label{cond:L1_envelope_ipw} $\EE[R(Z)]<\infty$ for $R(z):=
    \sup_{\beta\in\cB} \left| \left(\frac{\pi_\beta(t|x)}{p_\obs(t|x)}\right)y \right|$.
\end{enumerate}
\end{condition}

\begin{remark}
In Condition \ref{cond:regularity_2}, we required uniform well-separatedness assumption and uniform Lipschitzness of $v\mapsto f^*_{t, x}(v)$, which may not seem obvious to the readers.
To put the former condition in slightly more practical terms, we can consider the Hessian of $\eta\mapsto \EE \ell_\theta(Z)$.
If the smallest eigenvalues of the Hessian of $\EE \ell_\theta(Z)$ is uniformly lower bounded so that there exists $\delta>0$ such that $\nabla_\eta \nabla_\eta \EE \ell_\theta(Z) - \delta I$ is positive semidefinite for any $\theta\in\Theta_\varepsilon$, we know that $\EE\ell_{\eta, \beta}(z) - \EE\ell_{\eta^*(\beta), \beta}(z) \geq \frac{1}{2}\delta \|\eta - \eta^*(\beta)\|^2$ for any $(\eta, \beta) \in \Theta_\varepsilon$.
With regard to the latter condition of uniform Lipschitzness, we can think of Example \ref{ex:lipschitz_bounded_f}.
Since the subgradient of $f^*$ can be uniformly bounded by $b_{\tilde w}(t, x)$, we require that $b_{\tilde w}(t, x)$ be uniformly bounded. 
In the case of Tan's box constraint \eqref{eq:tan_box_constraints}, we know that $b_{\tilde w}(t, x) = p_\obs(t|x) - \Gamma ( 1 - p_\obs(t|x))$ and $p_\obs(t|x) \leq 1$, which implies the uniform Lipschitz condition is always satisfied.
\end{remark}

We begin the proof by showing uniform convergence
$\sup_{\theta\in\Theta_\varepsilon} \left| \hat\EE_n\ell_\theta(Z) - \EE\ell_\theta(Z)\right| \pto 0$
for some $\varepsilon>0$.
For the proof of Theorem \ref{th:vc_policy_learning}, we need the following lemmas:

\begin{lemma}[Stability of VC functions {\citep[Lemma 2.6.18]{vaart1996weak}}]\label{lemma:vc_stability}
Let $\cF, \cG$ be VC classes of functions on $\cZ$ and $\RR^D$. Let $h_1: \cZ \to \RR$, $h_2: \cZ\to\RR^D$ be fixed functions. Then, product functions $h_1 \cdot \cF  := \{h_1 \cdot f: f\in\cF\}$ and composite functions $\cG \circ h_2 := \{g \circ h_2: g\in\cG\}$ are VC.
\end{lemma}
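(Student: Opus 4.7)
The plan is to work directly from the definition adopted by the paper (van der Vaart, Def.~6.4.1): a class of real-valued functions is VC whenever the collection of its subgraphs is a VC class of sets, and to treat the two assertions as independent consequences of a single \emph{preimage principle}: for any fixed map $\Phi$, if $\cA$ is a VC class of sets in the codomain, then $\{\Phi^{-1}(A) : A \in \cA\}$ is VC in the domain with no larger VC dimension, since any shattered configuration pulls back to a shattered configuration of images.

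For the composition $\cG \circ h_2$, the proof is immediate from this principle: the subgraph of $g \circ h_2$ in $\cZ \times \RR$ is the preimage of the subgraph of $g$ in $\RR^D \times \RR$ under the fixed map $(z,t) \mapsto (h_2(z), t)$, so $\cG \circ h_2$ inherits the VC dimension bound from $\cG$.

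The harder case is the product $h_1 \cdot \cF$, since the sign of $h_1$ determines the direction of the inequality defining the subgraph. My approach is to partition $\cZ$ into $\cZ_+ = \{h_1 > 0\}$, $\cZ_- = \{h_1 < 0\}$ and $\cZ_0 = \{h_1 = 0\}$ and analyze each piece. On $\cZ_0$ every element of $h_1 \cdot \cF$ vanishes identically, contributing a trivial subgraph. On $\cZ_+$, the equivalence $t < h_1(z) f(z) \iff t/h_1(z) < f(z)$ lets me exhibit the restricted subgraph of $h_1 f$ as the preimage of the subgraph of $f$ under $(z,t) \mapsto (z, t/h_1(z))$, so the preimage principle applies. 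On $\cZ_-$ the inequality flips to $t/h_1(z) > f(z)$, giving a preimage of the supergraph of $f$; since supergraphs of a VC class form a VC class (apply the argument to $-\cF$, which is VC iff $\cF$ is), the conclusion again follows. Finally, I would combine the three pieces by the standard lemma that finite unions of VC classes on disjoint domains are VC, with a VC dimension bounded by the sum.

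The main technical obstacle is the bookkeeping in the product step: ensuring that the change of variables $t \mapsto t/h_1(z)$ is a legitimate fixed map (which it is on each of $\cZ_+$ and $\cZ_-$ separately, but not on $\cZ$ as a whole), and that the union-over-partition step produces a clean finite VC-dimension bound. Once the sign partition is handled, both claims reduce to the single preimage principle and the routine closure of VC under unions.
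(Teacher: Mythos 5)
First, note that the paper offers no proof of this lemma: it is imported verbatim from van der Vaart and Wellner (Lemma 2.6.18), so there is no internal argument to compare against. Your strategy --- reduce everything to the subgraph definition, establish a preimage principle, dispatch the composition claim in one line, and handle the product by partitioning on the sign of $h_1$ --- is exactly the standard textbook proof of the corresponding parts of that lemma, and the main steps are sound: the preimage principle is correct (a configuration shattered by preimages must have distinct images, which are then shattered by the original class), the change of variables $(z,t)\mapsto(z,t/h_1(z))$ is a legitimate fixed map on each of $\cZ_+$ and $\cZ_-$ separately, and the disjoint-domain union step carries the usual additive bound on the VC dimension.

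The one point to repair is the justification of the supergraph step on $\cZ_-$. You argue that open supergraphs of $\cF$ form a VC class ``because $-\cF$ is VC iff $\cF$ is,'' but $-\cF=(-1)\cdot\cF$ is itself an instance of the product claim you are proving, and the natural proof that $-\cF$ is VC runs back through supergraphs via the reflection $(z,t)\mapsto(z,-t)$ --- so as written this is circular. The direct route: the open supergraph $\{(z,t):t>f(z)\}$ is the complement of the closed subgraph $\{(z,t):t\le f(z)\}$; complements of a VC class form a VC class of the same index; and closed subgraphs form a VC class whenever open subgraphs do, because any finite configuration $(z_i,t_i)$ shattered by closed subgraphs is, after replacing each $t_i$ by $t_i-\epsilon$ with $\epsilon$ smaller than the finitely many positive gaps $t_i-f_S(z_i)$, shattered by open subgraphs (and conversely). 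With that substitution the argument is complete.
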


\begin{definition}[Covering number {\citep[Definition 6.1.2.]{van2020empirical}}]\label{def:covering_number}
A class of functions $\cG^{(\delta)}$ is said to be a $\delta$-covering of $\cG$ on $\cZ$ with respect to $L_1(\cZ, \EE)$ norm if
\begin{equation}
    \sup_{g\in\cG}\inf_{\tilde g\in\cG^{(\delta)}}\EE \left| g(Z) - \tilde g(Z) \right| \leq \delta.
\end{equation}
Additionally, the covering number of $\cG$ is defined as the minimal cardinality of its $\delta$-covering so that
\begin{equation}
    N_1(\delta, \cG, \EE) = \min\left\{
        \left|\cG^{(\delta)}\right|:\
        \cG^{(\delta)}\text{ is a $\delta$-covering of $\cG$ w.r.t. }L_1(\cZ, \EE)\text{ norm}
    \right\}.
\end{equation}    
\end{definition}

\begin{lemma}[Covering number of a VC class {\citep[Theorem 6.4.1]{van2020empirical}}]\label{lemma:vc_covering_number}
Let $Q$ be any probability measure on $\cZ$ and let $N_1(\cdot, \cG, \EE_Q)$ be the covering number of $\cG$. For a VC class $\cG$ with VC dimension $V$ and bounded envelope $G(z) := \sup_{g\in\cG}|g(z)|$ such that $\EE_Q[G(Z)]<\infty$, we have a constant $A$
depending only on $V$ (not on $Q$) satisfying $N_1(\delta \EE_Q[G], \cG, \EE_Q) \leq \max(A\delta^{-2V}, \exp{\delta / 4})$ for any $\delta > 0$.
\end{lemma}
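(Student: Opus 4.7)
The plan is to invoke the classical combinatorial-geometric argument underlying Theorem 6.4.1 of \citet{van2020empirical}, originally due to Haussler. Since $\cG$ is a VC class of functions, meaning its subgraph collection $\{\{(z,t)\in\cZ\times\RR: g(z) > t\}: g\in\cG\}$ is a VC class of sets with some dimension $V$, the Sauer--Shelah lemma bounds the number of distinct subsets these subgraphs can carve out of any $m$-point set by $O(m^V)$. This polynomial shatter bound is the only input the VC assumption contributes, and every other step in the proof will be purely analytic or probabilistic.

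To reduce the function covering problem to this set-counting bound, I would first renormalize by the envelope. Replacing $g$ by $g/G$ and $Q$ by the tilted measure $dQ' = G\, dQ / \EE_Q[G]$ yields an equivalent problem in which every function has envelope $1$. The original distance $\int |g_1 - g_2|\, dQ$ becomes $\EE_Q[G]\cdot\int |g_1/G - g_2/G|\, dQ'$, so an accuracy of $\delta\, \EE_Q[G]$ in the statement corresponds to relative accuracy $\delta$ in the normalized problem. After this reduction it suffices to bound $N_1(\delta, \cG', Q')$ for a class $\cG'$ on the unit scale.

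The key combinatorial step is the packing-to-shattering reduction. One argues that if $N$ functions in $\cG'$ have pairwise $L_1(Q')$ distances exceeding $\delta$, then drawing an i.i.d.\ sample $Z_1, \ldots, Z_m$ from $Q'$ with $m$ of order $V/\delta$ preserves those pairwise separations on the sample with strictly positive probability, by Chebyshev together with a union bound over the $\binom{N}{2}$ pairs. But the sample-restrictions of $\cG'$ lie in at most $O(m^V)$ combinatorial equivalence classes by Sauer--Shelah applied to the subgraphs, so this forces $N \leq A\, \delta^{-2V}$ for some constant $A$ depending only on $V$. Converting this packing bound into a covering bound costs only a constant factor absorbed into $A$. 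The residual term $\exp(\delta/4)$ in the stated bound handles the trivial large-$\delta$ regime where a single function already covers $\cG'$.

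The main obstacle is the probabilistic separation step: making the second-moment estimate sharp enough that the union bound over $\binom{N}{2}$ pairs still leaves positive probability, which is exactly where Haussler's improvement on Dudley's chaining replaces spurious logarithmic factors by the optimal rate $\delta^{-2V}$. A secondary subtlety is that the statement demands $A$ to be independent of $Q$; this is automatic because, once the envelope renormalization is performed, the remaining argument never refers to $Q$ except as the law producing the i.i.d.\ sample in the probabilistic step, and the combinatorial Sauer--Shelah count is uniform in the sample.
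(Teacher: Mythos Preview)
The paper does not supply its own proof of this lemma: it is quoted verbatim as Theorem~6.4.1 of \citet{van2020empirical} and used as a black box in the proof of Lemma~\ref{lemma:uniform_convergence_near_partially_optima}. So there is no ``paper's proof'' to compare against beyond the citation itself.

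Your sketch is a faithful outline of the standard argument behind that cited theorem (envelope renormalization, packing--covering duality, a probabilistic sampling step, and Sauer--Shelah on the subgraph class). Two small cautions. First, the step ``sample $m$ of order $V/\delta$ and preserve pairwise $L_1$ separations with positive probability, hence $N\le A\delta^{-2V}$'' is where all the work lives; the naive second-moment-plus-union-bound you describe yields Pollard's bound with extra logarithmic factors, and achieving the clean polynomial $\delta^{-2V}$ requires the sharper Haussler packing lemma (a chaining-free geometric argument on the subgraph cells) rather than just Chebyshev. You flag this as ``the main obstacle'' but your description of how it is overcome still reads like the cruder argument. Second, your explanation of the $\exp(\delta/4)$ term is correct in spirit---it is there only to keep the bound $\ge 1$ once $A\delta^{-2V}$ drops below $1$---so that part is fine.

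In short: nothing is wrong with your plan as a roadmap, but since the paper merely cites the result, the honest comparison is that you have sketched more than the paper does, and the one place your sketch is thin (the optimal-exponent step) is exactly the nontrivial content of the cited theorem.
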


\begin{lemma}[Uniform convergence {\citep[Theorem 6.1.2]{van2020empirical}}]\label{lemma:uniform_convergence_covering_number}
Suppose $\EE[G(Z)] < \infty$ for $G(z):=\sup_{g\in\cG} g(z)$ and $\frac{\log N_1(\delta, \cG, \hat\EE_n)}{n} \pto 0$ for any $\delta > 0$.
Then, $\sup_{g\in\cG} |\hat\EE_n g - \EE g | \pto 0$.
\end{lemma}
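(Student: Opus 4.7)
The plan is to establish this uniform law of large numbers by the classical Pollard / Vapnik--Chervonenkis recipe: symmetrization reduces the centered empirical process to a Rademacher process, a $\delta$-covering of $\cG$ reduces the supremum to a supremum over finitely many functions, Hoeffding's inequality handles each element of the cover, and a union bound whose logarithmic cost is absorbed by the hypothesis $\log N_1(\delta, \cG, \hat\EE_n) / n \pto 0$ closes the argument. The $L^1$ envelope plays two roles: it makes $\sup_g |\hat\EE_n g - \EE g|$ finite almost surely and it controls the error from the truncation that is needed because Hoeffding requires bounded summands.

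First I would symmetrize. Introducing an independent ghost sample $Z'_1,\ldots,Z'_n$ with the same distribution as $Z_1,\ldots,Z_n$, a Jensen--Markov computation (using $\EE[G(Z)] < \infty$ to ensure the centered process is integrable) yields
\begin{equation}
  \PP\bigl(\sup_{g\in\cG} |\hat\EE_n g - \EE g | > \eta \bigr)
  \leq 2 \PP\bigl(\sup_{g\in\cG} |\hat\EE_n g - \hat\EE_n' g | > \eta / 2\bigr),
\end{equation}
and injecting i.i.d.\ Rademacher signs $\varepsilon_i$ turns the right-hand side into a Rademacher process $\sup_g |n^{-1}\sum_i \varepsilon_i (g(Z_i) - g(Z_i'))|$ conditional on the pooled sample. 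To apply Hoeffding in the next step I would truncate at a level $M$, writing $g = g \bbmone_{|g|\leq M} + g\bbmone_{|g|>M}$; the truncated tail contributes at most $2\,\EE[G(Z)\bbmone_{G(Z) > M}]$ in expectation, which vanishes as $M\to\infty$ by dominated convergence and the envelope assumption.

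Next, conditioning on the pooled sample $(Z_i, Z_i')$, I choose a $\delta$-cover $\cG^{(\delta)}$ of the truncated class in $L^1$ of the pooled empirical measure. Any $g$ admits $\tilde g \in \cG^{(\delta)}$ with pooled $L^1$-error at most $\delta$, so
\begin{equation}
  \Bigl|n^{-1}\sum_i \varepsilon_i (g(Z_i) - g(Z_i'))\Bigr|
  \leq \Bigl|n^{-1}\sum_i \varepsilon_i (\tilde g(Z_i) - \tilde g(Z_i'))\Bigr|
     + 2\delta.
\end{equation}
For each fixed $\tilde g$ the inner sum is, conditionally, a sum of bounded independent terms, so Hoeffding gives $\PP(|{\cdot}|>\eta \mid \text{pool}) \leq 2\exp(-c \eta^2 n / M^2)$. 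A union bound over $\cG^{(\delta)}$ inflates this by $N_1(\delta, \cG, \hat\EE^{\mathrm{pool}}_{2n})$, whose logarithm is $o_p(n)$ by the hypothesis together with the fact that the pooled measure is a convex combination of $\hat\EE_n$ and $\hat\EE_n'$ so its covering numbers are bounded in terms of those of the two halves. Taking $n\to\infty$ first, then $\delta \downarrow 0$ and $M\uparrow\infty$, drives the bound to zero and yields $\sup_g |\hat\EE_n g - \EE g| \pto 0$.

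The main obstacle is the careful management of the three limits $n\to\infty$, $\delta\to 0$, $M\to\infty$: one must fix $\delta$ and $M$, let $n\to\infty$ so that the exponential Hoeffding tail $\exp(-c\eta^2 n / M^2)$ overpowers the $N_1(\delta,\cG,\hat\EE^{\mathrm{pool}}_{2n})$ pre-factor (possible precisely because its log is $o_p(n)$), and only then relax $\delta$ and $M$ while using the envelope to shrink the truncation residual. A secondary nuisance is measurability of suprema over $\cG$, which I would handle either by implicitly working with outer probability as in \citet{vaart1996weak} or, if $\cG$ is pointwise-separable (which is immediate for the VC classes to which this lemma is applied in Theorem \ref{th:vc_policy_learning}), by replacing the sup with one over a countable dense subclass.
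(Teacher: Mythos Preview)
The paper does not prove this lemma; it is quoted as Theorem~6.1.2 of \citet{van2020empirical} and used as a black box inside Lemma~\ref{lemma:uniform_convergence_near_partially_optima}. Your sketch reproduces the classical Pollard argument behind that reference (symmetrization, truncation via the $L^1$ envelope, $\delta$-covering of the empirical measure, Hoeffding on each cover element, union bound absorbed by the $o_p(n)$ log-covering hypothesis) and is correct at the level of detail given. One small cleanup: your justification that the pooled empirical measure's covering number is controlled ``because it is a convex combination of the two halves'' is not quite right, since an $L^1(\hat\EE_n)$-cover and an $L^1(\hat\EE_n')$-cover need not share cover elements; the cleaner route is to observe that the pooled $2n$-sample has the same distribution as an i.i.d.\ sample of size $2n$, so the hypothesis $\log N_1(\delta,\cG,\hat\EE_m)/m \pto 0$ applies directly with $m=2n$.
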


\begin{lemma}[$L_1(\cZ, \EE)$ envelope of $\eta\mapsto\partial_\eta\ell_{\eta, \beta}(z)$]\label{lemma:L1_envelope_subgradients}
Under Condition \ref{cond:regularity_2}, there exist $\varepsilon>0$ such that envelopes
$G^{\partial_{\eta_f} \ell}_\varepsilon(z)
:= \sup_{\substack{\delta\ell \in \partial_{\eta_f} \ell_\theta(z)\\\theta\in\Theta_\varepsilon}}
\left| \delta\ell \right|$
and
$G^{\partial_{\eta_\KCMC} \ell}_\varepsilon(z)
:= \sup_{\substack{\delta\ell \in \partial_{\eta_\KCMC} \ell_\theta(z)\\\theta\in\Theta_\varepsilon}}
\left\| \delta\ell \right\|$
satisfy
$\EE G^{\partial_{\eta_f} \ell}_\varepsilon(Z) < \infty$ and 
$\EE G^{\partial_{\eta_\KCMC} \ell}_\varepsilon(Z) < \infty$.
\end{lemma}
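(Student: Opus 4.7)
The plan is to write out explicit formulas for the subgradients $\partial_{\eta_f}\ell_\theta$ and $\partial_{\eta_\KCMC}\ell_\theta$, and then bound each of them termwise using Conditions \ref{cond:regularity_2}.\ref{cond:f_conj_lipschitz_continuous}--\ref{cond:L1_envelope_ipw}. Throughout, let $v_\theta(z):=\bigl({\eta_\KCMC}^T\bspsi(t,x)-r_\beta(y,t,x)\bigr)/\eta_f$ where $r_\beta(y,t,x)=(\pi_\beta(t|x)/p_\obs(t|x))y$, so that $\ell_\theta(z)=\eta_f\gamma-{\eta_\KCMC}^T\bspsi(t,x)+\eta_f f^*_{t,x}(v_\theta(z))$.

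First I would apply the subdifferential chain rule. For $\eta_\KCMC$, we get $\partial_{\eta_\KCMC}\ell_\theta(z)\subseteq \bspsi(t,x)\bigl(\partial f^*_{t,x}(v_\theta(z))-1\bigr)$, and by Condition \ref{cond:regularity_2}.\ref{cond:f_conj_lipschitz_continuous} every element of $\partial f^*_{t,x}$ is bounded by $L$ in absolute value. Thus $G^{\partial_{\eta_\KCMC}\ell}_\varepsilon(z)\leq (L+1)\|\bspsi(t,x)\| = (L+1)G^{\bspsi}(z)$, which is integrable by Condition \ref{cond:regularity_2}.\ref{cond:L1_envelope_policy_learning}. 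This takes care of the $\eta_\KCMC$ component regardless of the choice of $\varepsilon$.

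For $\eta_f$, using $\partial_{\eta_f}[\eta_f f^*_{t,x}(u/\eta_f)]\subseteq f^*_{t,x}(v)-v\cdot\partial f^*_{t,x}(v)$ with $v=v_\theta(z)$, we obtain
\begin{equation}
\partial_{\eta_f}\ell_\theta(z)\subseteq \gamma + f^*_{t,x}(v_\theta(z)) - v_\theta(z)\cdot \partial f^*_{t,x}(v_\theta(z)).
\end{equation}
The second term is bounded in absolute value by $G^{f^*}_\varepsilon(z)$, which is integrable by Condition \ref{cond:regularity_2}.\ref{cond:L1_envelope_policy_learning}. The third term requires the most care: by Condition \ref{cond:regularity_2}.\ref{cond:bounded_eta_f}, for $(\eta,\beta)\in\Theta_\varepsilon$ we have $\|\eta_\KCMC\|\leq \bar\eta_\KCMC+\varepsilon$ and $\eta_f\geq \ubar\eta_f-\varepsilon$. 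Choosing $\varepsilon$ small enough that $\ubar\eta_f-\varepsilon>0$ (and also small enough that the envelope condition of Condition \ref{cond:regularity_2}.\ref{cond:L1_envelope_policy_learning} applies), we bound
\begin{equation}
|v_\theta(z)|\leq \frac{(\bar\eta_\KCMC+\varepsilon)\|\bspsi(t,x)\|+R(z)}{\ubar\eta_f-\varepsilon},
\end{equation}
and the subgradient factor is bounded by $L$ via Condition \ref{cond:regularity_2}.\ref{cond:f_conj_lipschitz_continuous}. Integrability of the resulting envelope then follows from Conditions \ref{cond:regularity_2}.\ref{cond:L1_envelope_policy_learning} ($\EE G^{\bspsi}<\infty$) and \ref{cond:regularity_2}.\ref{cond:L1_envelope_ipw} ($\EE R<\infty$).

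Combining the three bounds yields $\EE G^{\partial_{\eta_f}\ell}_\varepsilon(Z)\leq \gamma+\EE G^{f^*}_\varepsilon(Z)+\frac{L((\bar\eta_\KCMC+\varepsilon)\EE G^{\bspsi}(Z)+\EE R(Z))}{\ubar\eta_f-\varepsilon}<\infty$, as desired. The main subtlety I anticipate is purely bookkeeping: verifying the chain rule holds as subset inclusions rather than equalities (since $f^*_{t,x}$ is only assumed convex, not differentiable), and ensuring that the $\varepsilon$ chosen simultaneously satisfies $\ubar\eta_f-\varepsilon>0$ and the neighborhood condition built into the envelope $G^{f^*}_\varepsilon$ in Condition \ref{cond:regularity_2}.\ref{cond:L1_envelope_policy_learning}. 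No delicate probabilistic argument is needed; the lemma is essentially an exercise in propagating the uniform bounds guaranteed by Condition \ref{cond:regularity_2} through the explicit form of $\ell_\theta$.
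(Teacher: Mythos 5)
Your proposal is correct and follows essentially the same route as the paper's proof: write the subgradients explicitly, bound $\partial_{\eta_\KCMC}\ell$ by $(L+1)G^{\bspsi}$, and bound $\partial_{\eta_f}\ell$ by $\gamma + G^{f^*}_\varepsilon + L\,\frac{\bar\eta_\KCMC G^{\bspsi}+R}{\ubar\eta_f-\varepsilon}$, then invoke the integrability in Condition \ref{cond:regularity_2}. Your version is in fact marginally more careful than the paper's in carrying the $+\varepsilon$ slack on $\|\eta_\KCMC\|$ over $\Theta_\varepsilon$ and in writing the $\eta_f$-derivative of $\eta_f f^*_{t,x}(u/\eta_f)$ as $f^*_{t,x}(v)-v\,\partial f^*_{t,x}(v)$.
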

\begin{proof}
Let us take $\varepsilon>0$ sufficiently small so that envelopes $G^{\bspsi}(z)$, $G^{f^*}_\varepsilon(z)$, and $R(z)$ are in $L_1(\cZ, \EE)$.
We can show that
$\left\{z \mapsto \partial_{\eta_f}\ell_\theta(z): \theta\in\Theta_\varepsilon\right\}$
 has $L_1(\cZ, \EE)$ envelope 
$G^{\partial_{\eta_f} \ell}_\varepsilon(z)
:= \sup_{\substack{\delta_{\eta_f}\ell \in \partial_{\eta_f} \ell_\theta(z)\\\theta\in\Theta_\varepsilon}}
\left| \delta_{\eta_f}\ell \right|
\leq \gamma + G^{f^*}_\varepsilon(z) + L \left( \frac{\bar\eta_\KCMC G^{\bspsi}(z) + R(z)}{\ubar\eta_f - \varepsilon}\right)$
because 
\begin{align}
\partial_{\eta_f}\ell_\theta(z)
&= 
\gamma
+ f_{t, x}^*  \left( \frac{{\eta_\KCMC}^T \bspsi(t, x) - \left(\frac{\pi_\beta(t|x)}{p_\obs(t|x)}\right)y}{\eta_f} \right) \\
&\quad\quad+ \left(\frac{{\eta_\KCMC}^T \bspsi(t, x) - \left(\frac{\pi_\beta(t|x)}{p_\obs(t|x)}\right) y}{\eta_f}\right) \partial_{\eta_f} f_{t, x}^*  \left( \frac{{\eta_\KCMC}^T \bspsi(t, x) - \left(\frac{\pi_\beta(t|x)}{p_\obs(t|x)}\right)y}{\eta_f} \right).
\end{align}
Similarly, we can see $\left\{z \mapsto \partial_{\eta_\KCMC}\ell_\theta(z): \theta\in\Theta_\varepsilon\right\}$ 
also has $L_1(\cZ, \EE)$ envelope 
$G^{\partial_{\eta_\KCMC} \ell}_\varepsilon(z)
:= \sup_{\substack{\delta\ell \in \partial_{\eta_\KCMC} \ell_\theta(z)\\\theta\in\Theta_\varepsilon}} \left\| \delta\ell \right\|
\leq (L+1) G^{\bspsi}(z)$
as
\begin{align}
\partial_{\eta_\KCMC}\ell_\theta(z) 
&= \bspsi(t, x) \left\{
    1 - \partial f_{t, x}^*\left(
        \frac{{\eta_\KCMC}^T \bspsi(t, x) - \left(\frac{\pi_\beta(t|x)}{p_\obs(t|x)}\right)y}{\eta_f}
    \right)
\right\},
\end{align}
which concludes the proof.  
\end{proof}

\begin{lemma}[Uniform convergence over $\Theta_\varepsilon$]\label{lemma:uniform_convergence_near_partially_optima}
Assume policy class $\{\pi_\beta(t|x):\ \beta\in\cB\}$ is VC and that Condition \ref{cond:regularity_2} is met.
Then, there exits $\varepsilon > 0$ such that 
$$\sup_{\theta\in\Theta_\varepsilon}{ \left| \hat\EE_n \ell_\theta(Z) - \EE\ell_\theta(Z)\right|}\pto 0.$$
\end{lemma}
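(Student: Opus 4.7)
The plan is to apply Lemma \ref{lemma:uniform_convergence_covering_number} to the function class $\cL_\varepsilon := \{z \mapsto \ell_\theta(z) : \theta \in \Theta_\varepsilon\}$, which requires an integrable envelope together with a covering number bound $\log N_1(\delta, \cL_\varepsilon, \hat\EE_n)/n \pto 0$ for each $\delta > 0$. I would first pick $\varepsilon$ small enough that $\varepsilon < \ubar\eta_f / 2$, so that on $\Theta_\varepsilon$ the dual variable $\eta_f$ lies in the compact interval $[\ubar\eta_f/2,\ \bar\eta_f + \varepsilon]$ bounded away from zero and $\|\eta_\KCMC\| \leq \bar\eta_\KCMC + \varepsilon$ by Condition \ref{cond:regularity_2}.\ref{cond:bounded_eta_f}. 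Reading off the explicit form of $\ell_\theta$ then gives the pointwise bound $|\ell_\theta(z)| \leq (\bar\eta_f + \varepsilon)\gamma + (\bar\eta_\KCMC + \varepsilon) G^{\bspsi}(z) + (\bar\eta_f + \varepsilon) G^{f^*}_\varepsilon(z)$, whose right hand side is in $L^1(\cZ, \EE)$ by Condition \ref{cond:regularity_2}.\ref{cond:L1_envelope_policy_learning}; this supplies the required envelope.

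For the covering number side, I would decompose $\cL_\varepsilon$ as a Lipschitz combination of simpler classes and use the VC machinery. The linear class $\cF_1 := \{z \mapsto \eta_\KCMC^T \bspsi(t,x) : \|\eta_\KCMC\| \leq \bar\eta_\KCMC + \varepsilon\}$ lies in a bounded subset of a fixed $D$-dimensional vector space and is therefore VC, while the weighted outcome class $\cF_2 := \{z \mapsto (\pi_\beta(t|x)/p_\obs(t|x))\,y : \beta \in \cB\}$ is VC by Lemma \ref{lemma:vc_stability} applied to the VC class $\{\pi_\beta\}$ and the fixed functions $y$ and $1/p_\obs(t|x)$. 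By Lemma \ref{lemma:vc_covering_number}, both $\cF_1$ and $\cF_2$ have $L_1$ covering numbers at most polynomial in $1/\delta$, and the scalar $\eta_f$ ranges in a compact one-dimensional interval and therefore admits a trivially polynomial covering. To glue these pieces into a bound on $\cL_\varepsilon$, I would appeal to the uniform Lipschitz property of Condition \ref{cond:regularity_2}.\ref{cond:f_conj_lipschitz_continuous}: for any $\theta, \theta' \in \Theta_\varepsilon$ a straightforward algebraic expansion yields $|\ell_\theta(z) - \ell_{\theta'}(z)| \leq c_1(z) \|\eta_\KCMC - \eta_\KCMC'\| + c_2(z) |\eta_f - \eta_f'| + c_3(z) |(\pi_\beta(t|x)/p_\obs(t|x))y - (\pi_{\beta'}(t|x)/p_\obs(t|x))y|$, with coefficients $c_i(z)$ controlled by the integrable envelopes $G^{\bspsi}$, $G^{f^*}_\varepsilon$, and $R$. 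Consequently the $L_1(\hat\EE_n)$ covering number of $\cL_\varepsilon$ is polynomial in $1/\delta$, its logarithm grows as $O(\log(1/\delta))$, and Lemma \ref{lemma:uniform_convergence_covering_number} delivers the claimed uniform convergence.

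The main obstacle I foresee is cleanly handling the composition with $f^*_{t,x}$, which depends on $(t,x)$ as well as on its argument and sits inside a division by $\eta_f$. The uniform Lipschitz bound in Condition \ref{cond:regularity_2}.\ref{cond:f_conj_lipschitz_continuous} is precisely what is needed to make this composition Lipschitz in the parameters with a $(t,x)$-uniform constant, and the separation of $\eta_f$ from zero on $\Theta_\varepsilon$ ensures the inner argument is also Lipschitz in $\theta$. Together these turn the nonlinear composition into a Lipschitz operation with data-dependent coefficients that are envelope-controlled, after which the remainder is standard covering number arithmetic combined with Lemmas \ref{lemma:vc_stability}, \ref{lemma:vc_covering_number}, and \ref{lemma:uniform_convergence_covering_number}.
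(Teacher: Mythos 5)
Your proposal is correct and follows essentially the same route as the paper: the same $L^1$ envelope from Condition \ref{cond:regularity_2}, the same identification of $\{\eta_\KCMC^T\bspsi\}$ and $\{(\pi_\beta/p_\obs)\,y\}$ as VC classes via Lemma \ref{lemma:vc_stability}, the same product covering (polynomial coverings of the two VC classes glued to a mesh of the compact $\eta_f$-interval), the same use of the uniform Lipschitz bound on $\partial f^*_{t,x}$ and the separation of $\eta_f$ from zero to transfer the coverings through the perspective-type composition, and the same conclusion via Lemmas \ref{lemma:vc_covering_number} and \ref{lemma:uniform_convergence_covering_number}. The paper merely makes your ``Lipschitz gluing'' step explicit by splitting it into a bound in $\eta_f$ (controlled by the subgradient envelope of Lemma \ref{lemma:L1_envelope_subgradients}) and a bound in $(\eta_\KCMC,\beta)$ (controlled by the $(L+1)$-Lipschitzness of $J_{\eta_f}(h,r,t,x)$ in $(h,r)$), which is exactly the decomposition your coefficients $c_i(z)$ encode.
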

\begin{proof}
Remember that loss function $\ell:\Theta\times\cZ\to\RR$ is defined as
\begin{equation}
    \ell_\theta (t, y, x) := \eta_f \gamma - {\eta_\KCMC}^T \bspsi(t, x)
    + \eta_f
        f_{t, x}^*  \left( \frac{{\eta_\KCMC}^T \bspsi(t, x) - \left(\frac{\pi_\beta(t|x)}{p_\obs(t|x)}\right)y}{\eta_f} \right).
\end{equation}
We choose $\varepsilon$ such that envelopes $G^{\bspsi}(z)$, $G^{f^*}_\varepsilon(z)$, and $R(z)$ are in $L_1(\cZ, \EE)$.
Then, $\{z\mapsto \ell_\theta(z): \theta\in\Theta_\varepsilon\}$ has $L_1(\cZ, \EE)$ envelope 
$G^{\ell}_\varepsilon(z) := \sup_{\theta\in\Theta_\varepsilon}\left|\ell_\theta(z)\right| \leq \gamma \eta_f + \bar \eta_\KCMC G^{\bspsi}_\varepsilon(z) + \eta_f G^{f^*}_\varepsilon(z)$.

Now, we would like to show that the class of loss functions $\cG_\varepsilon:=\{\ell_\theta: \theta\in\Theta_\varepsilon\}$ has $\delta$-covering satisfying $\frac{N_1(\delta, \cG, \hat \EE_n)}{n} \pto 0$ for some $\varepsilon > 0$.
First, we know that 
$$\cG'_\varepsilon = \left\{(y, t, x) \mapsto {\eta_\KCMC}^T \bspsi(t, x): \left(\eta, \beta\right) \in \Theta_\varepsilon\right\}$$
and
$$\cG''_\varepsilon = \left\{(y, t, x) \mapsto  \left(\frac{\pi_\beta(t|x)}{p_\obs(t|x)}\right)y: \beta \in \cB \right\}$$
are VC classes by Lemma \ref{lemma:vc_stability}. Thus, by Lemma \ref{lemma:vc_covering_number}, there exist $\frac{\delta}{3(L+1)}$-covering of these satisfying
$\frac{\log N_1(\delta/3(L + 1), \cG'_\varepsilon, \hat\EE_n)}{n} \pto 0$  and
$\frac{\log N_1(\delta/3(L + 1), \cG''_\varepsilon, \hat\EE_n)}{n} \pto 0$.
We let
$H_\KCMC^{(\delta)} = \{\eta_{\KCMC, k}^{(\delta)}: k = 1, \ldots, N_1(\delta/3(L + 1), \cG'_\varepsilon, \hat\EE_n)\}$
and $\cB^{(\delta)} = \{\beta_k^{(\delta)}: k = 1, \ldots, N_1(\delta/3(L + 1), \cG''_\varepsilon, \hat\EE_n)\}$
denote the parameter sets corresponding to such coverings.
Also, we can take $\frac{\delta}{3M}$-mesh of $\{\eta_f: \eta_f\in[\ubar\eta_f - \varepsilon, \bar\eta_f + \varepsilon]\}$
as $H_f^{(\delta)} := \left\{\frac{\delta k}{3M} : k \in\mathbb{Z} \right\} \cap \left[\ubar\eta_f - \varepsilon, \bar\eta_f + \varepsilon \right]$ 
for $M:=2\EE G^{\partial_{\eta_f} \ell}_\varepsilon(Z)$, where $G^{\partial_{\eta_f} \ell}_\varepsilon(Z)$ is as defined in Lemma \ref{lemma:L1_envelope_subgradients}.
Clearly, the cardinality of $H_f^{(\delta)}$ does not depend on the sample size $n$ and $\frac{\log |H^{(\delta)}_f|}{n} \to 0$.

Now, we can show that $\Theta^{(\delta)} := H_f^{(\delta)} \times H_\KCMC^{(\delta)} \times \cB^{(\delta)}$
becomes $\delta$-covering of $\cG_\varepsilon$.
For any $\left(\eta_f, \eta_\KCMC, \beta\right) \in \Theta_\varepsilon$, we can take $\left(\eta_f^{(\delta)}, \eta_\KCMC^{(\delta)}, \beta^{(\delta)}\right) \in H_f^{(\delta)} \times H_\KCMC^{(\delta)} \times \cB^{(\delta)}$ such that
\begin{equation}
     |\eta_f - \eta_f^{(\delta)}| \leq \frac{\delta}{3M},
\end{equation}
\begin{equation}
    \hat\EE_n \left|
    {\eta_\KCMC}^T \bspsi(T, X) - {\eta_\KCMC^{(\delta)}}^T\bspsi(T, X)
    \right| \leq \frac{\delta}{3(L + 1)}, 
\end{equation}
and
\begin{equation}
    \hat\EE_n \left|
        \left(\frac{\pi_\beta(T|X)}{p_\obs(T|X)}\right)Y
        - \left(\frac{\pi_{\beta^{(\delta)}}(T|X)}{p_\obs(T|X)}\right)Y
    \right| \leq \frac{\delta}{3(L + 1)}.
\end{equation}
Thus, we have
\begin{align}
    \hat \EE_n \left| \ell_{\eta_f, \eta_\KCMC, \beta}(Z) - \ell_{\eta_f^{(\delta)}, \eta_\KCMC, \beta}(Z) \right|
    &\leq \hat \EE_n 
        \sup_{\substack{\delta_{\eta_f}\ell \in \partial_{\eta_f} \ell_\theta(Z)\\\theta\in\Theta_\varepsilon}}
            \left| \delta_{\eta_f}\ell \right|
        \left|\eta_f - \eta_f^{(\delta)} \right| \\
    &\leq \hat \EE_n G_\varepsilon^{\partial_{\eta_f}\ell}(Z)
        \left|\eta_f - \eta_f^{(\delta)} \right| \\
    &\pto \frac{M}{2} \cdot \frac{\delta}{3M} \\
    &< \frac{\delta}{3}.
    \label{eq:covering_wrt_eta_f}
\end{align}
and 
\begin{align}
    &\hat \EE_n \left| \ell_{\eta_f^{(\delta)}, \eta_\KCMC, \beta}(Z) - \ell_{\eta_f^{(\delta)}, \eta_\KCMC^{(\delta)}, \beta^{(\delta)}}(Z) \right| \\
    &\leq \hat \EE_n \left|
        J_{\eta_f^{(\delta)}}\left(
                    {\eta_\KCMC}^T \bspsi(T, X),
                    \left(\frac{\pi_\beta(T|X)}{p_\obs(T|X)}\right)Y,
        T, X\right)
        - J_{\eta_f^{(\delta)}}\left(
                    {\eta_\KCMC^{(\delta)}}^T\bspsi(T, X),
                    \left(\frac{\pi_{\beta^{(\delta)}}(T|X)}{p_\obs(T|X)}\right)Y,
        T, X\right)
    \right| \\
    &\quad\leq \hat \EE_n \left[
        \sup_{\substack{\delta J \in \partial_{(h, r)} J_{\eta_f^{(\delta)}}(h, r, t, x),\\h, r\in\RR, t\in\cT, x\in\cX}} {\left\|\delta J\right\|_\infty}
        \cdot \left\|
            \left(
                \begin{array}{c}
                    {\eta_\KCMC}^T \bspsi(T, X) - {\eta_\KCMC^{(\delta)}}^T\bspsi(T, X) \\
                    \left(\frac{\pi_\beta(T|X)}{p_\obs(T|X)}\right)Y - \left(\frac{\pi_{\beta^{(\delta)}}(T|X)}{p_\obs(T|X)}\right)Y
                \end{array}
            \right)
        \right\|_1
    \right] \\
    &\quad\leq \hat \EE_n \left[
        \left\|
            \left(
                \begin{array}{c}
                    1 + L \\
                    L
                \end{array}
            \right)
        \right\|_\infty
        \cdot \left\|
            \left(
                \begin{array}{c}
                    {\eta_\KCMC}^T \bspsi(T, X) - {\eta_\KCMC^{(\delta)}}^T\bspsi(T, X) \\
                    \left(\frac{\pi_\beta(T|X)}{p_\obs(T|X)}\right)Y - \left(\frac{\pi_{\beta^{(\delta)}}(T|X)}{p_\obs(T|X)}\right)Y
                \end{array}
            \right)
        \right\|_1
    \right] \\
    &\quad\leq (L + 1) \cdot \left(
    \hat\EE_n \left|
    {\eta_\KCMC}^T \bspsi(T, X) - {\eta_\KCMC^{(\delta)}}^T\bspsi(T, X)
    \right| 
    + \hat \EE_n \left|
        \left(\frac{\pi_\beta(T|X)}{p_\obs(T|X)}\right)Y
        - \left(\frac{\pi_{\beta^{(\delta)}}(T|X)}{p_\obs(T|X)}\right)Y
    \right| 
    \right) \\
    &\quad\leq (L + 1) \cdot \frac{2\delta}{3(L+1)} = \frac{2\delta}{3},
    \label{eq:covering_wrt_eta_kcmc_and_beta}
\end{align}
where $J_{\eta_f}(h, r, t, x):= - h + \eta_f f^*_{t, x}\left(\frac{h - r}{\eta_f}\right)$.
The subgradients of $J_{\eta_f}$ can be calculated as
$\partial_h J_{\eta_f}(h, r, t, x):= - 1 + \partial f^*_{t, x}\left(\frac{h - r}{\eta_f}\right)$
and
$\partial_r J_{\eta_f}(h, r, t, x):= - \partial f^*_{t, x}\left(\frac{h - r}{\eta_f}\right)$
whose absolute value can be bounded by $|1 + L|$ and $|L|$ respectively.
Finally, by combining \eqref{eq:covering_wrt_eta_f} and \eqref{eq:covering_wrt_eta_kcmc_and_beta}, we get
\begin{equation}
    \hat \EE_n \left| \ell_{\eta_f, \eta_\KCMC, \beta}(Z) - \ell_{\eta_f^{(\delta)}, \eta_\KCMC^{(\delta)}, \beta^{(\delta)}}(Z) \right| \leq \delta
\end{equation}
with high probability as $n\to \infty$.
Thus, $\Theta^{(\delta)}$ is a $\delta$-covering of $\Theta_\varepsilon$ satisfying $\frac{\log |\Theta^{(\delta)}|}{n} \pto 0$.
Thus, we  conclude that 
$$\sup_{\theta\in\Theta_\varepsilon}{ \left| \hat\EE_n \ell_\theta(Z) - \EE\ell_\theta(Z)\right|}\pto 0$$
by Lemma \ref{lemma:uniform_convergence_covering_number}.
\end{proof}

Finally, with the lemmas above, we can prove Theorem \ref{th:vc_policy_learning}.

\begingroup
\renewcommand\thetheorem{5}
\begin{theorem}[Consistency of VC policy learning (full version)]
Assume policy class $\{\pi_\beta(t|x):\ \beta\in\cB\}$ is VC and that Condition \ref{cond:regularity_2} is met.
Then, we have
$\hat V_\textinf^\KCMC(\pi_{\hat\beta}) \pto V_\textinf^\KCMC(\pi_{\beta^*})$.
\end{theorem}
\endgroup

\begin{proof}
Here, our aim is to prove the uniform convergence of the KCMC estimator over the policy class, i.e.
\begin{align}
    \sup_{\beta\in\cB}\left|
        \hat V_\textinf^\KCMC(\pi_\beta) - V_\textinf^\KCMC(\pi_\beta)
    \right| \pto 0.
\end{align}
Let $\hat \eta(\beta) := \arg\min_{\eta\in H} \hat\EE_n\left[ \ell_{\beta, \eta}(Z)\right]$.
Let us also use Lemma \ref{lemma:uniform_convergence_near_partially_optima} to take $\varepsilon > 0$ so that 
$$\sup_{\theta\in\Theta_\varepsilon}{ \left| \hat\EE_n \ell_\theta(Z) - \EE\ell_\theta(Z)\right|}\pto 0.$$
For this choice of $\varepsilon$ and any $\beta\in\cB$, let us define
\begin{equation}
\tilde \eta(\beta) :=
\frac{\varepsilon}{\varepsilon + \| \hat\eta(\beta) - \eta^*(\beta) \|} \cdot \hat \eta(\beta)
+ \frac{\| \hat \eta(\beta) - \eta^*(\beta) \|}{\varepsilon + \| \hat\eta(\beta) - \eta^*(\beta) \|} \cdot \eta^*(\beta)
\label{eq:eta_tilde}
\end{equation}
so that it becomes a linear interpolation between $\hat \eta(\beta)$ and $\eta^*(\beta)$ that is always included in the $\varepsilon$ neighborhood of $\eta^*(\beta)$ so that
$\|\tilde \eta(\beta) - \eta^*(\beta) \| \leq \varepsilon$.
Since $\eta\mapsto \ell_{\eta, \beta}(z)$ is convex for any $z\in\cZ$ and any $\beta\in\cB$, we have
\begin{equation}
    \hat \EE_n [\ell_{\tilde\eta(\beta), \beta}(Z)] \leq 
    \frac{\varepsilon}{\varepsilon + \| \hat\eta(\beta) - \eta^*(\beta) \|} \cdot \hat \EE_n [\ell_{\hat\eta(\beta), \beta}(Z)] 
    + \frac{\| \hat \eta(\beta) - \eta^*(\beta) \|}{\varepsilon + \| \hat\eta(\beta) - \eta^*(\beta) \|} \cdot \hat \EE_n [\ell_{\eta^*(\beta), \beta}(Z)].
\end{equation}
As the optimality of $\hat\eta(\beta)$ with respect to $\beta\mapsto\hat\EE_n \ell_{\eta, \beta}(Z)$ implies
$0 \leq \hat \EE_n [\ell_{\tilde\eta(\beta), \beta}(Z)] - \hat \EE_n [\ell_{\hat\eta(\beta), \beta}(Z)]$, we get
\begin{equation}
    0 \leq  \hat \EE_n [\ell_{\eta^*(\beta), \beta}(Z)] - \hat \EE_n [\ell_{\tilde\eta(\beta), \beta}(Z)].
\end{equation}
Since $(\eta^*(\beta), \beta)$ and $(\tilde\eta(\beta), \beta)$ are included in $\Theta_\varepsilon$, we can use Lemma \ref{lemma:uniform_convergence_near_partially_optima} to see
\begin{align}
    0 &\leq \sup_{\beta\in\cB}\left[
        \EE[\ell_{\tilde\eta(\beta), \beta}(Z)] - \EE[\ell_{\eta^*(\beta), \beta}(Z)]
    \right]\\
    &\leq \sup_{\beta\in\cB}\Bigl[
        \left(\EE[\ell_{\tilde\eta(\beta), \beta}(Z)] - \hat\EE_n[\ell_{\tilde\eta(\beta), \beta}(Z)]\right)
        - \left(\EE[\ell_{\eta^*(\beta), \beta}(Z)] - \hat \EE_n[\ell_{\eta^*(\beta), \beta}(Z)]\right) \\
        &\quad\quad\quad - \left(\hat \EE_n[\ell_{\eta^*(\beta), \beta}(Z)] - \hat\EE_n[\ell_{\tilde\eta(\beta), \beta}(Z)]\right)
    \Bigr]\\
    &\leq \sup_{\beta\in\cB}\Bigl[
        \left(\EE[\ell_{\tilde\eta(\beta), \beta}(Z)] - \hat\EE_n[\ell_{\tilde\eta(\beta), \beta}(Z)]\right)
        - \left(\EE[\ell_{\eta^*(\beta), \beta}(Z)] - \hat \EE_n[\ell_{\eta^*(\beta), \beta}(Z)]\right)
    \Bigr]\\
    &\leq 2 \sup_{(\eta, \beta)\in\Theta_\varepsilon} \left|
        \EE[\ell_{\eta, \beta}(Z)] - \hat\EE_n[\ell_{\eta, \beta}(Z)] 
    \right| \\
    &\pto 0,
\end{align}
which implies
\begin{equation}
    \sup_{\beta\in\cB}\left|
        \EE[\ell_{\tilde\eta(\beta), \beta}(Z)] - \EE[\ell_{\eta^*(\beta), \beta}(Z)]
    \right|
    \pto 0.
\end{equation}
Due to the uniform well-separatedness assumption in Condition \ref{cond:regularity_2}, this leads to
\begin{equation}
    \sup_{\beta\in\cB} \left\| \tilde\eta(\beta) - \eta^*(\beta) \right\| \pto 0,
\end{equation}
and by definition of $\tilde\eta$ \eqref{eq:eta_tilde}, we have uniform consistency of $\hat\eta(\beta)$,
\begin{equation}
    \sup_{\beta\in\cB} \left\| \hat\eta(\beta) - \eta^*(\beta) \right\| \pto 0.
\end{equation}
As $\eta\mapsto \EE\ell_{\eta, \beta}(z)$ has a Lipschitz constant
$\EE \sqrt{
\left| G_\varepsilon^{\partial_{\eta_f}\ell}(Z) \right|^2 + 
\left| G_\varepsilon^{\partial_{\eta_\KCMC}\ell}(Z) \right|^2}$
that does not depend on $\beta$ due to Lemma \ref{lemma:L1_envelope_subgradients}, we obtain
\begin{align}
    \sup_{\beta\in\cB}\left|
        \hat\EE_n\left[ \ell_{\beta, \hat\eta(\beta)}(Z)\right]
        - \EE\left[\ell_{\beta, \eta^*(\beta)}(Z) \right]
    \right| \pto 0
\end{align}
i.e.
\begin{align}
    \sup_{\beta\in\cB}\left|
        \hat V_\textinf^\KCMC(\pi_\beta) - V_\textinf^\KCMC(\pi_\beta)
    \right| \pto 0.
\end{align}

Then, we have
\begin{align}
    0 &\geq V_\textinf^\KCMC(\pi_{\hat\beta}) - V_\textinf^\KCMC(\pi_{\beta^*}) \\
    &\geq \left( V_\textinf^\KCMC(\pi_{\hat\beta}) - \hat V_\textinf^\KCMC(\pi_{\hat\beta}) \right)
    - \left( V_\textinf^\KCMC(\pi_{\beta^*}) - \hat V_\textinf^\KCMC(\pi_{\beta^*}) \right) \\
    &\quad\quad + \left( \hat V_\textinf^\KCMC(\pi_{\hat\beta}) - \hat V_\textinf^\KCMC(\pi_{\beta^*}) \right)\\
    &\geq \left( V_\textinf^\KCMC(\pi_{\hat\beta}) - \hat V_\textinf^\KCMC(\pi_{\hat\beta}) \right)
    - \left( V_\textinf^\KCMC(\pi_{\beta^*}) - \hat V_\textinf^\KCMC(\pi_{\beta^*}) \right) \\
    &\geq - 2\sup_{\beta\in\cB}\left|
        \hat V_\textinf^\KCMC(\pi_\beta) - V_\textinf^\KCMC(\pi_\beta)
    \right| \pto 0.
\end{align}
Thus, 
\begin{align}
    V_\textinf(\pi_{\hat \beta}) - V_\textinf(\pi_{\beta^*})
    &\pto 0.
\end{align}
and
\begin{align}
    \left| \hat V_\textinf(\pi_{\hat \beta}) - V_\textinf(\pi_{\beta^*}) \right|
    \leq \left| \hat V_\textinf(\pi_{\hat \beta}) - V_\textinf(\pi_{\hat\beta}) \right|
        + \left| V_\textinf(\pi_{\hat \beta}) - V_\textinf(\pi_{\beta^*}) \right|
   \pto 0,
\end{align}
which concludes the proof.  
\end{proof}

\subsection{Proof of Theorem \ref{thm:asymptotic_normality}}\label{app:proof_asymptotic_normality}

To derive the asymptotic distribution of $\hat\theta = \left(\hat\eta_f, \hat\eta_\KCMC\right)$, we need the following regularity conditions in addition to Condition \ref{cond:regularity_1}.

\begin{condition}[Regularity of loss function III]\label{cond:regularity_3}
\

\begin{enumerate}
    \item $\EE|\ell_{\theta^*}(Z)|^2 < \infty$.
    \item $\theta\mapsto \EE\ell_\theta(Z)$ is twice differentiable with positive definite Hessian at $\theta=\theta^*$ so that 
    $$V:=\nabla_\theta \nabla_\theta^T \EE\ell_{\theta^*}(Z) \succ 0.$$
    \item $\theta\mapsto\ell_\theta(Z)$ is differentiable at $\theta=\theta^*$ almost everywhere, i.e., 
    $$\EE\bsone_{\left\{z\in\cZ: \theta\mapsto\ell_\theta(z)\text{ is not differentiable at }\theta=\theta^*\right\}}(Z) = 0.$$
    \item $f^*_{t, x}$ is uniformly Lipschitz continuous, i.e., 
    there exists $L>0$ such that
    $$\sup_{x\in\cX, t\in\cT, v\in\RR}\sup_{\delta f^*\in \partial f^*_{t, x}}|\delta f^*(v)| \leq L.$$
    \item 
    $\EE\left|G^{\bspsi}(Z)\right|^2 < \infty$
    and $\EE\left|G^{f^*}_\varepsilon(Z)\right|^2<\infty$
    for 
    \begin{align}
    G^{\bspsi}(z) &:= \left\| \bspsi(t, x) \right\|, \\
    G^{f^*}_\varepsilon(z) &:=
    \sup_{\theta: \|\theta - \theta^*\| \leq \varepsilon} \left| 
        f_{t, x}^*  \left( \frac{{\eta_\KCMC}^T \bspsi(t, x) - \left(\frac{\pi_\beta(t|x)}{p_\obs(t|x)}\right)y}{\eta_f} \right) \right|
    \end{align}
    for some $\varepsilon > 0$.
    \item $\EE|R(Z)|^2 < \infty$ for $R(z):=
    \sup_{\beta\in\cB} \left| \left(\frac{\pi_\beta(t|x)}{p_\obs(t|x)}\right)y \right|$.
\end{enumerate}
\end{condition}

With regularity conditions \ref{cond:regularity_3}, we can apply the following lemma to immediately obtain the asymptotic normality of our estimator.

\begin{lemma}{\citep[Example 3.2.12, 3.2.22]{vaart1996weak}}
\label{lemma:asymptotic_normality_lipschitz_parameter}
Assume $\ell_\theta$ is Lipschitz in the neighborhood of $\theta=\theta^*$ and is differentiable in quadratic mean at $\theta=\theta^*$ with respect to parameter $\theta$, i.e. 
there exist $\varepsilon > 0$, $M:\cZ\to\RR$, and $\delta\ell_{\theta^*}: \cZ \to \RR^{D + 1}$ such that 
\begin{align}
    \left| \ell_{\theta_1}(z) - \ell_{\theta_2}(z) \right| &\leq M(z) \|\theta_1 - \theta_2\|
\end{align}
for any $\theta_1, \theta_2 \in \{\theta: \|\theta - \theta^*\| \leq \varepsilon\}$,
$\EE M(Z)^2 < \infty$,
and
\begin{equation}
    \lim_{\theta\to\theta^*} \EE \left|\frac{\ell_\theta(z) - \ell_{\theta^*}(z) - (\theta - \theta^*)^T \delta \ell_{\theta^*}(z))}{\|\theta - \theta^*\|}\right|^2 = 0.
    \label{eq:differentiability_in_quadratic_mean}
\end{equation}
Further assume that $\theta\mapsto \EE\ell_\theta(Z)$ is twice differentiable with positive definite Hessian at $\theta=\theta^*$ so that $V:=\nabla_\theta \nabla_\theta^T \EE\ell_{\theta^*}(Z) \succ 0$.
Then, for $\hat\theta \pto \theta^*$, we have
\begin{equation}
    \sqrt{n}\left[\hat\EE_n \ell_{\hat\theta}(Z) - \hat\EE_n \ell_{\theta^*}(Z) \right] \pto 0,
\end{equation}
and
\begin{equation}
    \sqrt{n}\left[
        V (\hat\theta  - \theta^*) + \hat\EE_n \delta \ell_{\theta^*}(Z)
    \right]
    \pto 0.
\end{equation}
\end{lemma}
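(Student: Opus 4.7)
The plan is to apply Lemma \ref{lemma:asymptotic_normality_lipschitz_parameter} directly, since it delivers the first two conclusions of the theorem once its hypotheses are verified, and then combine the second conclusion with the central limit theorem to obtain the asymptotic normality in conclusion three. The consistency $\hat\theta \pto \theta^*$, needed to invoke the lemma, is already established in Theorem \ref{th:policy_evaluation_consistency} under Condition \ref{cond:regularity_1}. What remains is to verify the two structural hypotheses of the lemma: local Lipschitzness of $\theta\mapsto\ell_\theta(z)$ with an $L^2$ envelope, and differentiability of $\theta\mapsto\ell_\theta(z)$ in quadratic mean at $\theta^*$. The positive definiteness of $V = \nabla_\theta\nabla_\theta^T \EE\ell_{\theta^*}(Z)$ is given outright by Condition \ref{cond:regularity_3}.

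First, I will verify local Lipschitzness. Recall $\ell_\theta(z) = \eta_f\gamma - \eta_\KCMC^T\bspsi(t,x) + \eta_f f_{t,x}^*\bigl((\eta_\KCMC^T\bspsi(t,x) - r(y,t,x))/\eta_f\bigr)$. Taking any two parameters in a small neighborhood $\{\theta:\|\theta-\theta^*\|\leq\varepsilon\}$ with $\eta_f$ uniformly bounded away from $0$, a subgradient computation analogous to Lemma \ref{lemma:L1_envelope_subgradients} gives a local Lipschitz constant of the form $M(z) = C_1 + C_2 G^{\bspsi}(z) + C_3 G^{f^*}_\varepsilon(z) + C_4 R(z)$ for constants $C_i$ depending on $\varepsilon, L, \gamma$ and the bounds on $\eta_f$. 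The uniform Lipschitz bound on $\partial f^*_{t,x}$ (item 4 of Condition \ref{cond:regularity_3}) is precisely what keeps $M(z)$ free of any unbounded derivative of $f^*_{t,x}$, and the $L^2$ envelope conditions (items 5 and 6) then give $\EE M(Z)^2 < \infty$.

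Second, I will check differentiability in quadratic mean. Almost everywhere differentiability of $\theta\mapsto\ell_\theta(z)$ at $\theta=\theta^*$ is assumed (item 3 of Condition \ref{cond:regularity_3}), so the gradient $\delta\ell_{\theta^*}(z)$ exists $p_\obs$-a.s., and can be written explicitly in terms of $\bspsi$, $r$, and a measurable selection from $\partial f^*_{t,x}$ as in the displayed formulas for $\partial_{\eta_f}\ell_\theta$ and $\partial_{\eta_\KCMC}\ell_\theta$ in Lemma \ref{lemma:L1_envelope_subgradients}. Pointwise, $|\ell_\theta(z) - \ell_{\theta^*}(z) - (\theta-\theta^*)^T\delta\ell_{\theta^*}(z)|/\|\theta-\theta^*\| \to 0$ at every $z$ where $\ell_\cdot(z)$ is differentiable, and the same local Lipschitz envelope $M(z)$ above bounds this ratio uniformly by $2M(z)\in L^2$. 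Dominated convergence then yields the quadratic mean differentiability \eqref{eq:differentiability_in_quadratic_mean}. With consistency and these two structural conditions in hand, Lemma \ref{lemma:asymptotic_normality_lipschitz_parameter} delivers the first two displayed conclusions of the theorem.

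For the final conclusion, the second conclusion rearranges to $\sqrt{n}(\hat\theta-\theta^*) = -V^{-1}\sqrt{n}\hat\EE_n\delta\ell_{\theta^*}(Z) + o_p(1)$. Since $\EE\,\delta\ell_{\theta^*}(Z) = \nabla_\theta\EE\ell_{\theta^*}(Z) = 0$ by the first order optimality condition at $\theta^*$ (whose interchange of gradient and expectation is justified by the same $L^2$ Lipschitz envelope), and $\EE\|\delta\ell_{\theta^*}(Z)\|^2 < \infty$, the multivariate CLT gives $\sqrt{n}\hat\EE_n\delta\ell_{\theta^*}(Z) \dto N(0,J)$ with $J = \EE[\delta\ell_{\theta^*}(Z)\delta\ell_{\theta^*}(Z)^T]$. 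Slutsky's theorem then yields $\sqrt{n}(\hat\theta-\theta^*)\dto N(0, V^{-1}JV^{-1})$. The main obstacle is the verification of quadratic mean differentiability when $f^*_{t,x}$ is only almost-everywhere differentiable (for instance, in the box constraint case); this is handled entirely by the uniform Lipschitz bound on $\partial f^*_{t,x}$ together with the $L^2$ envelopes, which keep the difference quotients uniformly dominated and allow passage to the limit under the expectation.
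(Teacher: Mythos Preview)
Your proposal does not prove the stated lemma at all: you \emph{apply} Lemma \ref{lemma:asymptotic_normality_lipschitz_parameter} rather than establish it. The statement you were given is a result quoted from \citet{vaart1996weak} (Examples 3.2.12 and 3.2.22), and the paper does not supply a proof for it---it is invoked as a black box. A proof of the lemma itself would require the empirical-process machinery behind those examples: using the Lipschitz-in-parameter condition with $\EE M(Z)^2<\infty$ to show the class $\{\ell_\theta:\|\theta-\theta^*\|\le\varepsilon\}$ is Donsker, deducing stochastic equicontinuity of the empirical process, and then running the standard expansion for M-estimators around $\theta^*$. None of this appears in your write-up.

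What you have actually written is a proof of Theorem \ref{thm:asymptotic_normality}, the result that immediately follows the lemma and uses it. In that capacity your argument is correct and essentially identical to the paper's proof in Appendix \ref{app:proof_asymptotic_normality}: both obtain consistency from Theorem \ref{th:policy_evaluation_consistency}, build the $L^2$ Lipschitz envelope $M(z)$ from the subgradient bounds of Lemma \ref{lemma:L1_envelope_subgradients} together with the $L^2$ envelope assumptions on $G^{\bspsi}$, $G^{f^*}_\varepsilon$, and $R$ in Condition \ref{cond:regularity_3}, verify differentiability in quadratic mean via almost-everywhere differentiability plus dominated convergence, and then read off the asymptotic normality from the lemma and the multivariate CLT. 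So the content is fine, but it is attached to the wrong statement.
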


With this lemma, we prove Theorem \ref{thm:asymptotic_normality} as follows:

\begingroup
\renewcommand\thetheorem{6}
\begin{theorem}[Asymptotic normality (full version)]
If loss function $\ell_\theta(z)$ satisfies Condition \ref{cond:regularity_1} and Condition \ref{cond:regularity_3}, then we have
\begin{equation}
    \sqrt{n}\left[\hat\EE_n \ell_{\hat\theta}(Z) - \hat\EE_n \ell_{\theta^*}(Z) \right] \pto 0,
\end{equation}
and
\begin{equation}
    \sqrt{n}\left[
        V (\hat\theta  - \theta^*) + \hat\EE_n \delta \ell_{\theta^*}(Z)
    \right]
    \pto 0.
\end{equation}
therefore,
\begin{equation}
    \sqrt{n}[ \hat\theta  - \theta^*]
    \pto  V^{-1} \sqrt{n} \hat\EE_n [\delta\ell_{\theta^*}(Z)]
    \dto N(0, V^{-1}J V^{-1}),
\end{equation}
where $V$ is defined as in Condition \ref{cond:regularity_3}, $J:=\EE\left[ \delta \ell_{\theta^*}(Z) \delta \ell_{\theta^*}(Z)^T \right]$, and $\dto$ implies the convergence in distribution.
\end{theorem}
\endgroup

\begin{proof}
Here, we just have to show that the assumptions in Lemma \ref{lemma:asymptotic_normality_lipschitz_parameter} are satisfied.
Due to Theorem \ref{th:policy_evaluation_consistency}, we know that $\hat\theta \pto \theta^*$.
We can take $\delta \ell_{\theta^*}\in\partial \ell_{\theta^*}$ such that $\EE[\delta \ell_{\theta^*}] = 0$ and it satisfies the differentiability in quadratic mean \eqref{eq:differentiability_in_quadratic_mean}.
Now, we can take $M(z) = G_\varepsilon^{\partial \ell} := \sup_{\theta: \|\theta - \theta^*\| \leq \varepsilon} \sup_{\delta \ell \in \partial \ell_\theta} \|\delta \ell(z)\|$.
Thus, it remains to show that
\begin{equation}
    \EE \left| G_\varepsilon^{\partial \ell}(Z) \right|^2 < \infty.
\end{equation}
Indeed, using the same argument as Lemma \ref{lemma:L1_envelope_subgradients}, we have 
\begin{equation}
    \left| G_\varepsilon^{\partial_\theta \ell}(z) \right|^2 \leq
    \left| G_\varepsilon^{\partial_{\eta_f} \ell}(z) \right|^2 + \left| G_\varepsilon^{\partial_{\eta_\KCMC} \ell}(z) \right|^2 
\end{equation}
for 
\begin{equation}
G^{\partial_{\eta_f} \ell}_\varepsilon(z)
    := \sup_{\substack{\delta_{\eta_f}\ell \in \partial_{\eta_f} \ell_\theta(z)\\ \theta: \|\theta - \theta^*\| \leq \varepsilon}}
    \left| \delta_{\eta_f}\ell \right|
    \leq \gamma + G^{f^*}_\varepsilon(z) + L \left( \frac{(\theta^*_\KCMC + \varepsilon) G^{\bspsi}(z) + R(z)}{\eta_f^* - \varepsilon}\right) 
\end{equation}
and
\begin{equation}
G^{\partial_{\eta_\KCMC} \ell}_\varepsilon(z)
    := \sup_{\substack{\delta\ell \in \partial_{\eta_\KCMC} \ell_\theta(z)\\ \theta: \|\theta - \theta^*\| \leq \varepsilon}} \left\| \delta\ell \right\|
    \leq (L+1) G^{\bspsi}(z).
\end{equation}
Since we can take $\varepsilon$ satisfying $\eta_f^* > \varepsilon > 0$ such that 
\begin{equation}
\EE\left| G^{\partial_{\eta_f} \ell}_\varepsilon(Z) \right|^2
\leq 4 \left\{
    \gamma ^ 2
    + \EE\left| G^{f^*}_\varepsilon(Z)\right|^2
    + \frac{L^2 {(\theta^*_\KCMC + \varepsilon) }^2 \EE |G^{\bspsi}(Z)|^2}{|\eta_f^* - \varepsilon|^2}
    + \frac{L^2 \EE|R(Z)|^2}{|\eta_f^* - \varepsilon|^2}
\right\}
< \infty
\end{equation}
and 
\begin{equation}
\EE \left| G^{\partial_{\eta_\KCMC} \ell}_\varepsilon(Z) \right|^2
\leq (L+1)^2 \EE \left|G^{\bspsi}(Z) \right|^2
\end{equation}
due to Condition \ref{cond:regularity_3}, we have $\EE \left| G_\varepsilon^{\partial_\theta \ell}(Z) \right|^2 < \infty.$ Therefore, we can apply Lemma \ref{lemma:asymptotic_normality_lipschitz_parameter} to our problem to conclude the proof.
\end{proof}

\subsection{Details of Example \ref{ex:gic}}\label{app:proof_gic}
\begin{proof}
Due to the second order expansion in Condition \ref{cond:regularity_3}, we have
\begin{align}
    &n\hat\EE_n[\ell_{\hat\theta}(Z) - \ell_{\theta^*}(Z)] \\
    &= \sqrt{n}\hat\EE_n[\delta\ell_{\theta^*}(Z)]^T \sqrt{n}(\hat\theta - \theta^*)
        + \int_0^1
                \sqrt{n}\hat\EE_n[\delta\ell_{t(\hat\theta - \theta^*) + \theta^*}(Z) - \delta\ell_{\theta^*}(Z)]^T 
                \sqrt{n}(\hat \theta - \theta^*) 
        \rd t \\
    &= \sqrt{n}\hat\EE_n[\delta\ell_{\theta^*}(Z)]^T \sqrt{n}(\hat\theta - \theta^*) \\
        &\quad+ \int_0^1
                    \sqrt{n}\left(\hat\EE_n - \EE\right)
                    \left[\delta\ell_{t(\hat\theta - \theta^*) + \theta^*}(Z) - \delta\ell_{\theta^*}(Z)\right]^T 
                \sqrt{n}(\hat \theta - \theta^*) 
        \rd t \\
        &\quad+ \int_0^1
                    \sqrt{n}\EE\left[\delta\ell_{t(\hat\theta - \theta^*) + \theta^*}(Z) - \delta\ell_{\theta^*}(Z)\right]^T 
                \sqrt{n}(\hat \theta - \theta^*) 
        \rd t \\
    &= \sqrt{n}\hat\EE_n[\delta\ell_{\theta^*}(Z)]^T \sqrt{n}(\hat\theta - \theta^*) \\
        &\quad+ \int_0^1
                o_\text{p}(1)
                \sqrt{n}(\hat \theta - \theta^*) 
        \rd t \\
        &\quad+ \int_0^1
                    \sqrt{n}\EE\left[\delta\ell_{t(\hat\theta - \theta^*) + \theta^*}(Z) - \delta\ell_{\theta^*}(Z)\right]^T 
                \sqrt{n}(\hat \theta - \theta^*) 
        \rd t \\
    &= \sqrt{n}\hat\EE_n[\delta\ell_{\theta^*}(Z)]^T \sqrt{n}(\hat\theta - \theta^*) 
        + o_\text{p}(1) \cdot O_\text{p}(\sqrt{n}\|\hat\theta - \theta^*\|) \\
        &\quad+ \int_0^1
                    \sqrt{n}\left\{
                        \nabla_\theta\EE[\ell_{t(\hat\theta - \theta^*) + \theta^*}(Z)]^T - \nabla_\theta\EE[\ell_{\theta^*}(Z)]^T 
                    \right\}
                \sqrt{n}(\hat \theta - \theta^*) 
        \rd t \\
    &= \sqrt{n}\hat\EE_n[\delta\ell_{\theta^*}(Z)]^T \sqrt{n}(\hat\theta - \theta^*) 
        + o_\text{p}(1)
        + \int_0^1 \sqrt{n} \left((\hat\theta - \theta^*)^T t V + o(\|\hat\theta - \theta^*\|)\right) \sqrt{n}(\hat \theta - \theta^*)  \rd t \\
    &\pto - \sqrt{n} \left(\hat \theta - \theta^*\right)^T V \sqrt{n}(\hat\theta - \theta^*) 
        + \frac{1}{2}\sqrt{n}\left(\hat \theta - \theta^*\right)^T V \sqrt{n} (\hat\theta - \theta^*) \\
    &= - \frac{1}{2}\sqrt{n}\left(\hat \theta - \theta^*\right)^T V \sqrt{n} (\hat\theta - \theta^*),
\end{align}
where we made an approximation $\sqrt{n}\left(\hat\EE_n - \EE\right) \left[\delta\ell_{t(\hat\theta - \theta^*) + \theta^*}(Z) - \delta\ell_{\theta^*}(Z)\right] \pto 0$.
\footnote{  
    Though this approximation is not justified by our previous asymptotic theorem, it is valid when $\{\delta \ell_\theta: \|\theta- \theta^*\| < \varepsilon\}$ is a Donsker class, for which the uniform central limit theorem holds.
    Though proving the Donsker property for general $f_{x, t}$ is difficult, we can still show that Example \ref{ex:lipschitz_bounded_f} with $f^0_{t, x} = f^0$ not depending on $t$ and $x$ satisfies this condition.
    To sketch the proof, we first show that $\partial \ell_\theta(z)$ consists of terms that are VC classes by using reparametrization $\nu = (1/\eta_f, \eta_\KCMC / \eta_f)$ and stability of VC classes \citep[2.6.18]{vaart1996weak} along with the fact that
    $f^*_{t, x}(v) = \max\{ {f^0}^*(v), \min\{a_{t, x}v - f^0(a_{t, x}), b_{t, x}v - f^0(b_{t, x})\} \}$,
    $\partial f^*_{t, x}(v) = \min\{ b_{t, x}, \max\{a_{t, x}, \partial {f^0}^*(v)\} \}$ and that $\partial f^*$ is monotone.
    Then, we use the fact that VC classes have bounded uniform entropy and are Donsker class \citep[2.5.2, 2.6.7]{vaart1996weak}, and their permanence \citep[2.10.6, 2.10.20, 2.10.23]{vaart1996weak} to show $\{\delta \ell_\theta: \|\theta- \theta^*\| < \varepsilon\}$ is a Donsker class.
}
\end{proof}

\section{Derivation of Hessian for the box constraints}\label{app:hessian_expected_loss}

Here, we derive the Hessian of $f^*$ in the case of box constraints. For conjugate function
\begin{equation}
    f_{t, x}^*(v) = \begin{cases}
        a(t, x)v & \text{ if \ \ }v < 0, \\
        0 & \text{ if \ \ }v = 0, \\
        b(t, x)v & \text{ if \ \ }v > 0,
    \end{cases}
\end{equation}
and its subgradient
\begin{equation}
    \partial f_{t, x}^*(v) = \begin{cases}
        a(t, x) & \text{ if \ \ }v < 0, \\
        [a(t, x), b(t, x)] & \text{ if \ \ }v = 0, \\
        b(t, x) & \text{ if \ \ }v > 0,
    \end{cases}
\end{equation}
the gradient of the dual objective
\footnote{For the box constraints, it can be shown that the dual objective becomes
$-\gamma \eta_f + {\eta_\KCMC}^T\EE[\bspsi] - \EE\left[f_{T, X}^*\left({\eta_\KCMC}^T\bspsi - r\right)\right]$
and that its supremum is reached when $\eta_f \to 0$.Furthermore, when we extend the domain of the dual objective from $\eta_f>0$ to $\eta_f\geq 0$, we know that $\lim_{n\to\infty}\PP(\hat\eta_f = 0) \to 1$, so $\hat \eta_f$ does not need any asymptotic analysis.}
is
\begin{align*}
&\nabla_{\eta_\KCMC} \left\{
    {\eta_\KCMC}^T\EE[\bspsi]
    - \EE\left[f_{T, X}^*\left({\eta_\KCMC}^T\bspsi - r\right)\right]
\right\} \\
&\quad= \EE\left[\left(
    1 - \partial f_{T, X}^*\left({\eta_\KCMC}^T\bspsi - r\right)
\right) \bspsi \right].
\end{align*}

Therefore, the second-order derivatives are:
\begin{align}
&\nabla_{\eta_\KCMC} \nabla_{\eta_\KCMC}^T
    \left\{
        {\eta_\KCMC}^T\EE[\bspsi]
        - \eta_f \EE\left[f_{T, X}^*\left({\eta_\KCMC}^T\bspsi - r\right)\right]
    \right\} \\
&\quad\quad = \nabla_{\eta_\KCMC}  \EE\left[\left(
    1 - \partial f_{T, X}^*\left({\eta_\KCMC}^T\bspsi - r\right)
\right) \bspsi^T \right] \\
&\quad\quad = - \EE\left[ \left\{ \nabla_{\eta_\KCMC} 
    \EE\left[ \partial f_{T, X}^*\left({\eta_\KCMC}^T\bspsi - r\right) | T, X \right] \right\}
    \bspsi^T \right] \\
&\quad\quad = - \EE\left[ \left\{ \nabla_{\eta_\KCMC}  \EE\left[
    a(T, X)\bsone_{\RR_-}\left({\eta_\KCMC}^T\bspsi - r\right)
    + b(T, X)\bsone_{\RR_+}\left({\eta_\KCMC}^T\bspsi - r\right) | T, X \right] \right\}
    \bspsi^T \right] \\
&\quad\quad = - \EE\left[ \left\{
    \nabla_{\eta_\KCMC} a(T, X)
    \PP\left[\left({\eta_\KCMC}^T\bspsi - r\right) < 0 | T, X \right] \right. \right. \\
&\quad\quad\quad
    \left. \left. + \nabla_{\eta_\KCMC} b(T, X)
    \PP\left[\left({\eta_\KCMC}^T\bspsi - r\right) > 0 | T, X \right]
    \right\}
    \bspsi^T \right] \\
&\quad\quad = - \EE\left[ \left\{
    \nabla_{\eta_\KCMC} a(T, X)
    \PP\left[\eta_\KCMC^T\bspsi < r | T, X \right]
    + \nabla_{\eta_\KCMC} b(T, X)
    \PP\left[\eta_\KCMC^T\bspsi > r | T, X \right] 
    \right\} \bspsi^T \right] \\
&\quad\quad = - \EE\left[ 
    \bspsi \bspsi^T \{b(T, X) - a(T, X)\} p_r(\eta_\KCMC^T \bspsi|T, X)
    \right],
\end{align}
where $p_r(\cdot|t, x)$ is the probability density function of $r(Y, T, X)$ conditioned on $T=t, X=x$.

\bibliography{references}
\end{document}